\long\def\ignore#1{}
\newtheorem{theorem}{Theorem}[section]
\newtheorem{claim}{Claim}[section]
\newtheorem{definition}{Definition}[section]
\newcommand{\argmax}{{\text{argmax}}}
\newcommand{\vek}[1]{{\bf {#1}}}
\newcommand{\F}{{F}}
\newcommand{\vn}{{\vek{n}}}
\newcommand{\vv}{{\vek{v}}}
\newcommand{\vd}{{\vek{d}}}
\newcommand{\vl}{{\vek{\gamma}}}
\newcommand{\vx}{{\vek{x}}}
\newcommand{\vz}{{\vek{z}}}
\newcommand{\vzh}{{\hat{\vz}}}
\newcommand{\vy}{{\vek{y}}}
\newcommand{\vyi}{{\vy_i}}
\newcommand{\vxi}{{\vx_i}}
\newcommand{\pr}{{\mbox{$p$}}}
\newcommand{\prp}{{\text{p'}}}
\newcommand{\base}{{\phi}}
\newcommand{\cp}{{\text{C}}}
\newcommand{\cpr}{\cp_{\pr}}
\newcommand{\np}{{\phi}}
\newcommand{\range}[1]{{{\cal R'}_{#1}}}
\newcommand{\rangeC}[1]{{{\cal R}_{#1}}}
\newcommand{\dom}[1]{{{\cal D}_{#1}}}
\newcommand{\ra}{{\rightarrow}}
\newcommand{\Maj}{{\text{Maj}}}
\newcommand{\Potts}{{\text{Potts}}}
\newcommand{\eval}{{\text{V}}}
\newcommand{\pary}{{\text{NextLabel}}}
\newcommand{\parent}{{\text{parent}}}
\newcommand{\word}{{\text{TokenLabel}}}
\newcommand{\beforetoken}{{\text{BeforeToken}}}
\newcommand{\rooty}{{\text{End}}}
\newcommand{\domain}{{\text{domain}}}
\newcommand{\app}{{\text{domain adaptation}}}
\newcommand{\appcaps}{{\text{Domain Adaptation}}}
\newcommand{\mpc}{{\text{CI}}}
\newcommand{\indicate}[2]{[\![{#1}={#2}]\!]}  
\newcommand{\Maxlabel}{{\sc max}} % {{\mbox{$L_\infty$}}}
\newcommand{\Additive}{{\sc sum}}  %{{Additive}}
\newcommand{\Pairwise}{{\sc paired}}  %{{Pairwise}}
\newcommand{\Majority}{{\sc majority}}  %{{Majority}}
\newcommand{\maxlabel}{{\sc max}} % {{\mbox{$L_\infty$}}}
\newcommand{\additive}{{\sc sum}}  %{{Additive}}
\newcommand{\majority}{{\sc majority}}  %{{Pairwise}}
\newcommand{\alphapass}{{$\alpha$-pass}}
\newcommand{\alphaexpand}{{$\alpha$-expansion}}
\newcommand{\vapp}{{\hat{\vv}}}
\newcommand{\va}[2]{{\hat{\vv}^{#1#2}}}
\newcommand{\vya}{{\tilde{\vy}}}
\newcommand{\vva}{{\tilde{\vv}}}
\newcommand{\vmap}{{\vv^*}}
\newcommand{\vnp}{{\psi}}
\newcommand{\vcp}{{\text{C}}}
\newcommand{\vs}{{\vek{s}}}
\newcommand{\synpair}{{\sc potts}}
\newcommand{\synentropy}{{\sc entropy}}
\newcommand{\synms}{{\sc makespan}}
\newcommand{\synmajd}{{\sc maj-dense}}
\newcommand{\synmajs}{{\sc maj-sparse}}
\newcommand{\synmsq}{{\sc makespan2}}
\title{Generalized Collective Inference with Symmetric Clique Potentials}
\author{Rahul Gupta\hspace*{0.5in}Sunita Sarawagi\hspace*{0.5in}Ajit A.~Diwan\\\texttt{\{grahul,sunita,aad\}@cse.iitb.ac.in}\\Indian Institute of Technology, Bombay, India}
\date{}
\begin{document} 
\bibliographystyle{plain}
 
\maketitle 
\begin{abstract}

Many tasks like image segmentation, web page classification, and information
extraction can be cast as joint inference tasks in collective graphical models.
Such models exploit any inter-instance associative dependence to output more
accurate labelings. However existing collective models support very limited
kind of associativity --- like associative labeling of different occurrences of
the same word in a text corpus. This restricts accuracy gains from using such models.

In this work we make two major contributions. First, we propose a more general
collective inference framework that encourages various data instances
to agree on a set of {\em properties} of their labelings. Agreement is
encouraged through symmetric clique potential functions. We show that known collective models 
are specific instantiations of our framework with certain very simple
properties. We demonstrate that using non-trivial properties can lead to bigger gains,
and present a systematic inference procedure in our framework for a 
large class of such properties. In our inference procedure, we perform message passing on the cluster graph,
where property-aware messages are computed with cluster specific algorithms. 
Ordinary property-oblivious message passing schemes are intractable in such setups. 
We show that property conformance, as encouraged in our framework, provides an inference-only solution for \ignore{to the problem of} \app. 
Our experiments on bibliographic information extraction illustrate significant test error reduction over unseen \domain s.

Our second major contribution is a suite of algorithms to compute messages from
clique clusters to other clusters for a variety of symmetric clique
potentials (the {\em clique inference} problem). Our algorithms are exact for arbitrary cardinality-based clique potentials
on binary labels and for max-like and majority-like clique potentials on multiple labels.
For majority-like potentials, we also provide an efficient Lagrangian Relaxation based 
algorithm that compares favorably with the exact algorithm.
Moving towards more complex potentials, we show that clique inference becomes 
NP-hard for cliques with homogeneous Potts potentials. We present a $\frac{13}{15}$-approximation algorithm
with runtime sub-quadratic in the clique size. In contrast, the best known 
previous guarantee for graphs with Potts potentials is only $\frac{1}{2}$.
We perform empirical comparisons on real and synthetic data, and show that our
proposed methods for Potts potentials are an order of magnitude faster than the well-known
Tree-based re-parameterization (TRW) and graph-cut algorithms. We demonstrate that 
our Lagrangian Relaxation based algorithm for majority potentials beats the best applicable 
heuristic, ICM, in a variety of scenarios. 
\end{abstract}

%\input{aps2} 

%%todo: do not use all-caps for section titles..

\section{Introduction}
A variety of structured tasks such as image segmentation, information
extraction, part of speech tagging, text chunking, and named entity
recognition are modeled using Markov Random Fields (MRFs).  For
example, in information extraction, each sentence is treated as a MRF
that captures the dependency in the labels assigned to adjacent words
in the sentence. 

An example of such a setup is given in Figure~\ref{fig:ciExample} for the task
of named-entity extraction (NER). The base model in
Figure~\ref{fig:ciExampleBase} assigns a named-entity label such as Person,
Location, or Other  independently to each word in
the input. The structured model goes one step ahead and imposes a dependency
between labels of adjacent words, shown in
Figure~\ref{fig:ciExampleStructured} via chain-shaped MRF models. The model,
however, ignores long range and inter-sentence dependencies. The collective
model of Figure~\ref{fig:ciExampleCI} encourages the labels of different
occurrences of the same word to be the same. This is captured by connecting those
occurrences with blue cliques that encode associative dependencies. Variants of
these collective models have been proposed in the past few years for a variety
of information extraction tasks ~\cite{sutton04skip,bunescu04,krishnan06:effective,finkel05:Incorporating,gupta07}.
We look at other applications of collective graphical models in
Section~\ref{sec:app}.

\ignore{
In terms of tractability, the structured chain model
(e.g.~Figure~\ref{fig:ciExampleStructured}) can compute a chain's highest
scoring labeling (also called the MAP labeling) in $O(nm^2)$ where $n$ is the chain's length, and $m$ is the
number of labels. The collective model, however, quickly becomes intractable
because inference in graphs with multiple cycles is NP-hard in general. Thus,
the power to exploit associativity comes at a computational cost. 
}
\begin{figure}
\begin{center}
\subfigure[Input
sentences]{\label{fig:ciExampleInput}\includegraphics[width=0.3\textwidth]{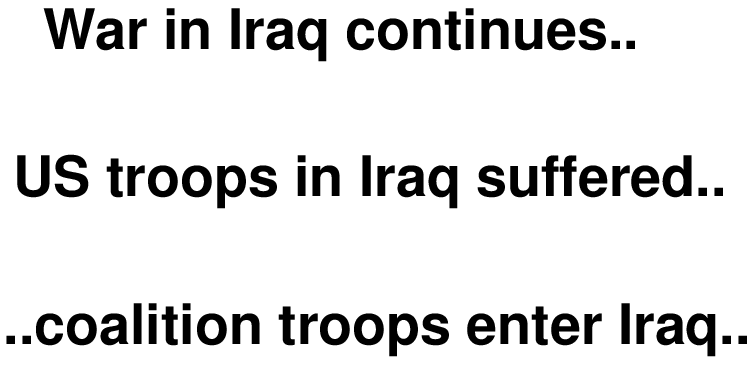}}
\subfigure[Base
Model]{\label{fig:ciExampleBase}\includegraphics[width=0.3\textwidth]{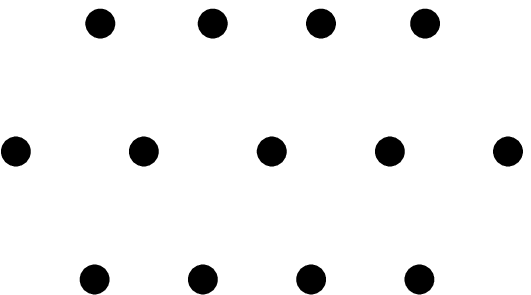}}
\subfigure[Structured
Model]{\label{fig:ciExampleStructured}\includegraphics[width=0.3\textwidth]{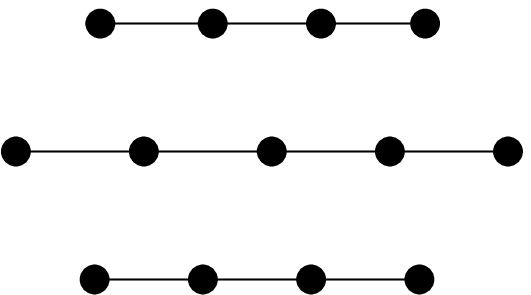}}
\subfigure[Collective
Model]{\label{fig:ciExampleCI}\includegraphics[width=0.3\textwidth]{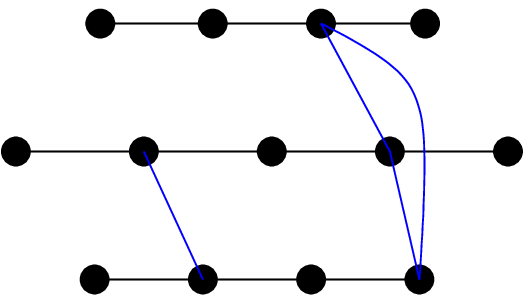}}
\subfigure[Cluster graph for the collective
model]{\label{fig:ciExampleCluster}\includegraphics[width=0.5\textwidth]{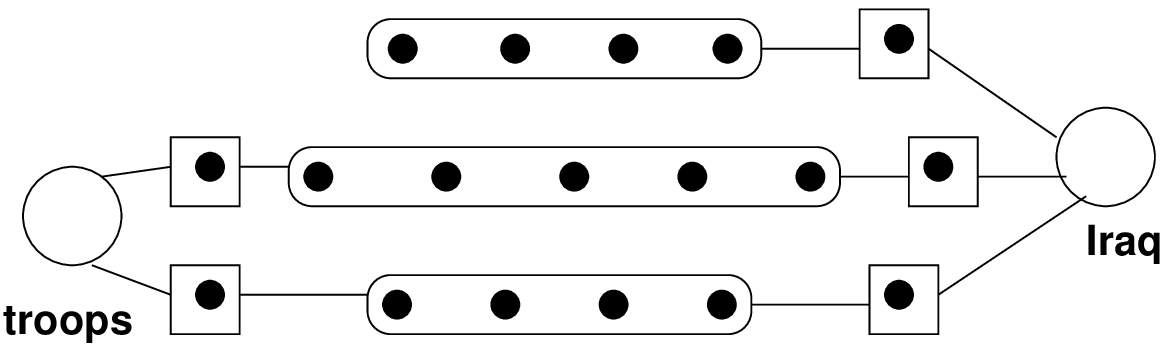}}
\caption{\label{fig:ciExample}Various models for named-entity recognition
illustrated on a small corpus. In Figure~\ref{fig:ciExampleCluster} the boxes
denote separators that link the clusters. The 'size' of a separator is the
number of nodes inside it.}
\end{center}
\end{figure}

\renewcommand{\arraystretch}{1.5}

\begin{table}[h] 
\begin{center} 
\begin{tabular*}{0.99\textwidth}{c|p{0.32\textwidth}|p{0.27\textwidth}|p{0.17\textwidth}} \hline 
{\bf Model} & {\bf Scoring function} & {\bf Inference algorithm} & {\bf Complexity} \\
\hline
Base & $\sum_{\text{chain }h}\sum_{u\in h}\phi_u(\vx, y_u)$  & $\argmax_{y_u}
\phi_u(\vx,y_u)$ for each vertex $u$ independently (Exact inference) & $|Y|$ per vertex \\
Structured & Base + $\sum_{h}\sum_{(u,v)\in h}\phi_{uv}(\vx,y_u,y_v)$  & Max-product separately on each chain (Exact)& $O(|V||Y|^2)$ per chain\\
Collective & Structured + $\sum_{\text{clique }c} \phi_c(y_c)$ & Message passing on the cluster graph (Approximate) & $O(|Y|^{\text{biggest separator}})$ per cluster \\
\hline
\end{tabular*}
\end{center} 
\caption{ \label{tab:models}
A brief summary of various graphical models for information extraction.}
\end{table}

The key ingredient in a collective model is the set of potentials used to tie
the individual MRFs together. These potentials, which can be defined over cliques
of arbitrary size, encourage their vertices to have the same/similar label.
We consider a special kind of clique potentials --- symmetric potentials.
Symmetric clique potentials are invariant under any permutation of their
arguments. This restriction is meant to keep inference tractable with such
potentials. Collective inference uses symmetric potentials that encourage all
the vertices to take the same label. This can be enforced by choosing an
appropriate bias function in the potential. Various kinds of symmetric potentials have
been used in collective inference thus far --- e.g.~Potts
potentials~\cite{sutton04skip,boykov01Fast} and Majority
potentials~\cite{krishnan06:effective}. We look at various families of symmetric clique potentials
in Section~\ref{sec:symPot}.

\subsection*{Properties-based Collective Inference Framework}

Figure~\ref{fig:ciExample} illustrates a highly common collective model in 
literature. Such a model enforces a very special kind of associativity --- that
labels of repeated occurrences of a word should be the same.
Restricting ourselves to such collective models does not allow us to exploit the
full power of collective inference, especially when unexploited associativity of a more
complex nature exists in the data, e.g.~all occurrences of Person should be
preceded by titular tokens such as Mr.~or Mrs.
We broaden the notion of collective inference to encourage for richer
forms of associativity amongst the labelings of multiple MRFs.  This
more general framework has applications in domain adaptation. We illustrate this
via an example.

\vspace*{0.1in}

\noindent{\bf Example:} Consider extracting bibliographic information from an author's
publications homepage using a model trained on a different set of
authors.  Typically, within each home (a domain) we expect consistency
in the style of individual publication records.  For example, we
expect that labelings of individual bibliographic records
(approximately) use the same ordering of labels (say $Title\rightarrow
Author*\rightarrow Venue$), regardless of what that ordering is. 
Another property whose conformance might be desirable is --- "the HTML tag
containing the Title of the publication". Different bibliographic records on the same page will
most probably format the title using the same HTML tag. Thus, we can encode this
associativity by biasing the labelings to be conformant wrt this property.
For both these properties, we only demand that the labelings agree on 
the property value, without caring for what the value actually is (which varies from
domain to domain). This allows us to use the same property on different domains, 
with varying formatting and authoring styles.

Now assume that we have an array of such conformance-promoting properties, and a
sequential chain model trained on a set of labeled domains. 
We show that an effective way of adapting the trained model to a new
domain is by labeling the MRFs in the new domain collectively while encouraging
the individual labelings to agree on our set of properties. This is an inference-only approach, unlike many
existing solutions for \app\ which require expensive model
re-training~\cite{blei02,blitzer2006Domain,mann07}. 
As we will see in Section~\ref{sec-domain}, using this properties-based
framework provides significant gains on a bibliographic information extraction
task.

To summarize, our collective inference framework consists of two new components
attached to any collection of MRFs:
\begin{itemize}
\item Properties: defined over the labelings of individual MRFs,

\item Potentials: defined on the values of properties of all MRFs such
that the potential value favors skewness in the frequencies of
property values.
\end{itemize}

We describe our framework in detail in Section~\ref{sec:properties}. 

The collective inference task in our framework is to choose a labeling of the
individual MRFs so as to maximize the sum of the scores of each MRF
and the potentials of each property.  This inference task is more
complicated than independently labeling each MRF, which are typically
simple tractable models like sequences.

We address the computational challenge by defining special forms of
decomposable properties and symmetric potentials that allow efficient
inference without sacrificing usability.
We exploit this special structure to design efficient MAP inference
algorithms.  Instead of ordinary belief propagation on the joint
graphical model, we define a cluster graph with two kinds of clusters
--- corresponding to tractable MRFs, and cliques with symmetric
potential functions. Two important aspects of this algorithm are as
follows.
First, we use combinatorial methods to compute cluster-specific
messages. In Section~\ref{sec:cliqueInference} we present exact and approximate
clique inference algorithms for a variety of symmetric potential functions.
These algorithms are used to compute max-marginal messages from a clique to its
neighboring clusters.
Second, we exploit the form of our properties to define new
intermediate message variables, and provide exact and approximate
algorithms for computing these special message values in Section~\ref{sec:algo}.  In contrast, a
na\"{\i}ve application of graphical model inference could lead
to an entire MRF instance being a separator.

In Section~\ref{sec:expts} our experiments on real tasks show that this form of message
passing is faster and more accurate than existing
inference methods that do not exploit the form of the potentials.

Finally in Section~\ref{sec:future}, we discuss some future directions for
collective inference and outline some important problems in the area.

\subsection*{Contributions}
Our first key contribution is a framework that encourages associativity between
properties of labelings of isolated MRFs. We show that the framework support a
large class of {\em decomposable} properties. Our properties are functions of a
data-instance and its labeling, in contrast to the existing associative setups
which model only very specific properties of only the instances. We give an
approximate inference procedure based on message passing on the cluster graph,
for computing the MAP labeling in our framework. Our procedure maintains
tractability by computing property-aware messages and invoking special
combinatorial algorithms at the cliques.

The second key contribution is a family of algorithms for various kinds of
symmetric clique potential functions. We give an $O(mn\log n)$ MAP algorithm for
cliques with arbitrary symmetric potentials, where $m$ is the number of labels,
and $n$ is the clique size. This algorithm is exact for max-like potentials, and
is $\frac{13}{15}$-approximate for Potts potentials. We show that this algorithm
can be generalized to an $O(m^2n\log n)$ algorithm while improving the
approximation bound to $\frac{8}{9}$. For majority-like potentials, we present
an LP-based exact algorithm with polynomial but expensive runtime. We present an
alternative approximate algorithm based on Lagrangian relaxation that is two
orders of magnitude faster and and provides close to optimal quality solutions
in practice.

Finally, we show that our suite of algorithms can be plugged into the
properties-based framework to achieve a highly expressive way for capturing
associativity. We illustrate this on a bibliographic information task where we
use properties to deploy our collective framework over unseen bibliographic
domains, and achieve significant error reductions.

\subsection*{Outline}

In Section~\ref{sec:app}, we give some real-life scenarios where collective
inference can be used to exploit associativity amongst isolated MRF instances.
We model associativity using symmetric potentials. In
Section~\ref{sec:symPot}, we describe three families of symmetric potentials,
that subsume the \Potts\ and linear \majority\ potentials.
Section~\ref{sec:properties} discusses our properties-based collective inference
framework in formal detail. Our framework ensures tractability of inference as
long as the properties are {\em decomposable}, a notion that we cover in
Section~\ref{sec:prop}. In Section~\ref{sec:algo}, we discuss the cluster message
passing algorithm to compute the MAP in our framework.
Section~\ref{sec:msgInstanceClique} presents our approach for exactly computing
property-aware messages from an MRF instance to a clique, and
Section~\ref{sec:approxMsg} contains practical approximations of this exact
computation. In Section~\ref{sec:cliqueAlgo}, we show that computing the reverse message
-- from a clique to a MRF instance, is the same as the {\em clique inference
problem}. Then in Section~\ref{sec:cliqueInference}, we present 
algorithms for solving the clique inference problem under a variety of symmetric
clique potentials. The two key algorithms presented are the \alphapass\
algorithm and a Lagrangian-relaxation based algorithm for \majority\ potentials,
in Sections~\ref{alg-alphaPass} and~\ref{sec:algoLR} respectively.
Section~\ref{sec:expts} contains experimental results of three types -- (a)
Establishing that our properties based framework leads to significant gains in a
\app\ task. (b) Our clique inference algorithms are better than applicable
alternatives and (c) The cluster message passing framework is a better way of
doing inference. Finally, Section~\ref{sec:future} contains conclusions and a
discussion of future work.

\section{Applications of Collective Inference}
\label{sec:app}
We review a few practical applications of collective inference in real-life
tasks. Both the tasks benefit when we introduce associative dependencies between
 labelings of isolated instances.
\subsection{Information Extraction}
\label{sec:app-ie}
Recall Figure~\ref{fig:ciExampleCI} for the task of named-entity
extraction. Let the potential function for an edge between adjacent word positions
$j-1$ and $j$ in document $i$ be $\phi_{ij}(y, y')$ and for
non-adjacent positions that share a word $w$ be $f_w(y, y')$.  
%% The first type of potentials are tied to positions $ij$ unlike the
%% second kind that depend only on the word, not where the word occurred.
The goal during inference is to find a labeling $\vy$ where $y_{ij}$
is the label of word $x_{ij}$ in position $j$ of doc $i$, so as to
maximize:
\begin{equation}
\label{eq1}
\sum_{i,j} \phi_{ij}(y_{ij}, y_{i(j-1)}) + \sum_w\sum_{\text x_{ij}=x_{i'j'}=w}f_w(y_{ij}, y_{i'j'})
\end{equation}
The above inference problem gets intractable very soon with the
addition of non-adjacent edges beyond the highly tractable
collection of chain models of classical IE. 
\ignore{
%% RG: Camera Ready    
For example, on the popular CoNLL dataset, this 
leads to more than $200$ cliques of size $10$ and above. }
Consequently, all prior
work on collective extraction for IE relied on generic approximation
techniques including belief propagation~\cite{sutton04skip,bunescu04}
, Gibbs sampling~\cite{finkel05:Incorporating} or stacking~\cite{krishnan06:effective}.
% ad hoc 2-stage labeling

%%
We present a different view of the above inference problem using
cardinality-based clique potential functions $\cp_w()$ defined over
label subsets $\vy^w$ of positions where word $w$ occurs.
We rewrite the second term in Equation~\ref{eq1} as
\begin{eqnarray*}
%%SS--pl. check if this is correct.
\frac{1}{2}\sum_{w}(\sum_{y,y'}f_w(y,y')n_y(\vy^w)n_{y'}(\vy^w)-\sum_y n_y(\vy^w)f_w(y,y))\\
=\sum_{w}\cp_w(n_{1}(\vy^w),\ldots n_{m}(\vy^w))
\end{eqnarray*}
 where $n_y(\vy^w)$ is the number of times $w$ is labeled $y$ in all
its occurrences.  The clique potential $\cp_w$ only depends on the
counts of how many nodes get assigned a particular label.  A useful
special case of the function is when $f_w(y,y')$ is positive only for
the case that $y=y'$, and zero otherwise. 

\ignore{
%%RG: Camera Ready
{\small
\begin{eqnarray*}
\cp_w(n_{1}(\vy^w),\ldots n_{m}(\vy^w)) &=& \sum_{y,y'}f_w(y,y')n_y(\vy^w) n_{y'}(\vy^w)\\
&=& \sum_{x_{ij}=x_{i'j'}=w}f_w(y_{ij}, y_{i'j'}) 
\end{eqnarray*}
}
}
%where $\vy^w$ denotes the vector of labels of positions where $w$
%occurs and $n_j(\vy)$ denotes the number of $j$s in vector $\vy$ and

\ignore{
    %% RG: Camera Ready
$\cp_w(n_{1}(\vy),\ldots n_{m}(\vy)) = \sum_{y,y'}f_w(y,y')n_y(\vy)
n_{y'}(\vy)$ 
} %$\cp_w$ is the clique potential function.
\ignore{
    %% RG: Camera Ready
A natural cluster graph for this task will have clusters of size two
for each of the adjacent edges, and clusters of size $n_w$ for each
word $w$ that appears $n_w\ge 2$ times.  The word clusters are
connected to exactly one adjacent edge cluster through singleton
separators.
Clearly, some of the word cliques can be very large, so
generic techniques for computing max-marginals will be impractical.
The sub-quadratic algorithms we propose in Section~\ref{sec-alg},
can aid in efficient belief propagation in the top-level cluster graph.
}
\ignore{
    %%RG: Camera Ready
while retaining the entire
clique-level dependency in each message propagation step.
}

\ignore{
In addition to allowing for fast inference, this view also allows us
to include a more flexible and general set of potentials than pairwise
potentials.  Recently \cite{krishnan06:effective} have proposed 

, the kind of majority-based inference used in
 fit in this framework.
}

%%ss--say anything about their inference algorithm here?
% \todo: Introduce the problem with above form of potentials $n^2$?

\subsection{Hypertext classification}
\label{sec-text}
In hypertext classification, the goal is to classify a document based
on features derived from its content and labels of documents it points
to.  A common technique in statistical relational learning to capture
the dependency between a node and the variable number of neighbors it
might be related to, is to define fixed length feature vectors out of
the neighbor's labels.  In text classification, most previous
approaches
~\cite{taskar02Discriminative,lu03Link,chakrabarti98Enhanced} have
created features based on the counts of labels in its
neighborhood. Accordingly, we can define the following set of
potentials: a node-level potential $\phi_i(y)$ that depends on
the content of the document $i$, and a neighborhood potential
$f(y, n_1(\vy^{O_i}),\ldots, n_m(\vy^{O_i}))$ that captures the
dependency of the label of $i$ on the counts in the label vector
$\vy^{O_i}$ of its out-links.
\begin{eqnarray*}
%\sum_i \phi(y_i)+&\sum_i f(y_i,n_{1}^i\ldots n_{m}^i)\\
%=\sum_i \phi(y_i)+&\sum_i \sum_{y}\cp_{y}(n_{1}^i\ldots n_{m}^i)\indicate{y}{y_i}\\
&&\sum_i (\phi_{iy_i}+ f(y_i,n_1(\vy^{O_i}),\ldots, n_m(\vy^{O_i})))\\
&=&\sum_i (\phi_{iy_i}+\sum_{y}\cp_{y}(n_1(\vy^{O_i}),\ldots, n_m(\vy^{O_i}))\indicate{y}{y_i})
\end{eqnarray*}
%where $n_y^i = |\{j \in O(i) | y_j=y\}|$

%%SS -- pl correct formula for majority, there seems to be an outer summation missing.
\cite{lu03Link} include several examples of such clique potentials,
viz.~the Majority potential $\cp_{y}(n_1,\ldots
n_m)=\phi(y,y_{max})$ where $y_{max} = \argmax_{y} n_y$, \ignore{
    %%RG: Camera Ready
    Binary potential $\cp_{y}(n_1,\ldots n_m)=\sum_{y':n_{y'} > 0}\phi(y',y)$}
and the Count potential $\cp_{y}(n_1,\ldots n_m)=\sum_{y':n_{y'} >
0}\phi(y',y)n_{y'}$.  
Some of these potentials, for example, the Majority potential are not
decomposable as sum of potentials over the edges of the clique.  This
implies that methods such as 
\ignore{
%%RG: Camera Ready
tree re-parameterization
(TRW)~\cite{kolmogorov04Convergent} and graph-cuts \cite{boykov01Fast}}
TRW and graph-cuts
are not applicable.
\cite{lu03Link} rely on the Iterated Conditional Modes (ICM) method
that greedily selects the best label of each document in turn based on
the label counts of its neighbors.  

\section{Symmetric Clique Potentials}
\label{sec:symPot}

As seen in the example scenarios, our associative clique potentials depend only
on the number of clique vertices taking a value $v$, denoted by $n_v$, and not
on the identity of those vertices. In other words, these potentials are
invariant under any permutation of their arguments and derive their value from
the histogram of counts $\{n_v|\forall v\}$. We denote this histogram by the
vector $\vn$. Since the potentials only depend on the value counts, we also
refer to them as cardinality-based clique potentials in this paper. If a
cardinality-based clique potential is associative, then it is maximized when
$n_v=n$ for some $v$, i.e.~one value is given to all the clique vertices.

We have deliberately left the notion of a `value' vague at this point. For
existing collective models, e.g.~those mentioned in Section~\ref{sec:app}, a
value corresponds to a label. As we shall see, in our more general framework,
a value refers to a particular member in the range of a property function. For now, we can
assume wlog that a {\em value} is a member of some discrete finite set $V$.

We consider specific families of clique potentials, many of which are 
currently used in real-life tasks. In Section~\ref{sec:cliqueInference} we will
look at various potential-specific exact and approximate clique inference algorithms that
exploit the specific structure of the potential.

In particular, we consider the three types of clique potentials listed in
Table~\ref{tab:potentials}.
% where we can find the optimal solution or obtain good
%approximations.
% \subsection{Types of Clique potentials}

\begin{table}
\begin{center}
\begin{tabular}{c|c|c}
\hline
{\bf Name} & {\bf Form} & {\bf Remarks} \\
\hline
\Maxlabel & $\max_v f_v(n_v)$ & $f_v$ is a non-decreasing function \\
\Additive & $ \sum_v f_v(n_v)$ & $f_v$ non-decreasing. Includes the Potts potential = $\lambda\sum_v
n_v^2$ \\
\Majority & $f_a(\vn)$, where $a=\argmax_v n_v$ & $f_a$ is typically linear \\
\hline
\end{tabular}
\end{center}
\caption{Various kinds of symmetric clique potentials considered in this paper.
$\vn = (n_1,\ldots,n_{|V|})$ denotes the counts of various values among the clique
vertices.
\label{tab:potentials}
}
\end{table}

\ignore{
\begin{enumerate}
\item \Maxlabel: $\cp(n_1, \ldots, n_m) = \max_y f_y(n_y)$.  
\ignore{We saw an
example of such potentials for hypertext classification in
Section~\ref{sec-text}.}
These have been used 
\ignore{for information
extraction in~\cite{krishnan06:effective} and }
for analyzing associative networks in \cite{taskar04learning}.
\item \Additive: $\cp(n_1, \ldots, n_m) = \sum_y f_y(n_y)$.  A common
example of this function is $\lambda\sum_y n_y^2$ that arises out of cliques
with all edges following the same Potts potential with parameter $\lambda$. 
Another important
example of this class is negative entropy where $f_y(n_y)=n_y\log
n_y$.
%% RG: Camera Ready
%%SS
%Generally speaking, this family of potentials contains all diversity measures that 
%can decompose over individual $n_y$'s.
\item \Majority: $\cp(n_1, \ldots, n_m) = f_a(\vn)$, where $a=\argmax_y n_y$. One 
common example of such a potential, as used in~\cite{krishnan06:effective} and ~\cite{lu03Link}, is 
$f_a(\vn) = \sum_{y}w_{ay}n_{y}$, where $\{w_{yy'}\}$ is an arbitrary matrix.
\ignore{
\item \Pairwise: $\cp(n_1, \ldots, n_m) = \sum_y \sum_{y'}
f_{yy'}(n_y, n_{y'})$ An example of this class is the quadratic entropy
function where $f_{yy'}(n_y, n_{y'})=s_{y,y'}n_yn_{y'}$ that would arise
out of summing any arbitrary potential matrix on the homogeneous edges of the clique. }
\end{enumerate}
}

\subsection{\Maxlabel\ clique potentials}

These clique potentials are of the form:
\begin{equation}
\cp(n_1,\ldots,n_{|V|})=\max_v f_v(n_v)
\end{equation}
 for arbitrary non-decreasing functions $f_v$. When $f_v(n_v)\triangleq
n_v$, we get the {\em makespan} clique potential which has roots in
the job-scheduling literature.  

In Section~\ref{alg-alphaPass}, we present an algorithm, called \alphapass, that
solves the clique inference problem for \maxlabel\ potentials exactly. The
algorithm runs in time $O(|V|n\log n)$, where $n$ is the clique size, 
Although \maxlabel\ potentials are not used directly in
real-life tasks, they are relatively easier potentials to tackle and provide key
insights to deal with the more complex \additive\ potentials. As we will see,
the \alphapass\ algorithm that we derive for this potential can be easily ported
to other more complex potentials. For the case of Potts potentials, we will
prove that the \alphapass\ algorithm provides a $\frac{13}{15}$-approximation.

\subsection{\Additive\ clique potentials}
\Additive\ clique potentials are of the form:
\begin{equation}
\cp(n_1,\ldots,n_{|V|}) = \sum_v f_v(n_v)
\end{equation}
These form of potentials includes the special case when the well-known
Potts model is applied homogeneously on all edges of a clique. Let
$\lambda$ be the Potts potential of assigning two nodes of an edge the
same value.  The summation of these potentials over a clique is
equivalent (up to a constant) to the clique potential:
\begin{equation}
\cp^\Potts=\cp(n_1,\ldots,n_{|V|})=\lambda\sum_v n_v^2
\end{equation}

%\subsubsection{Potts potential}
The Potts model with negative $\lambda$ corresponds to the dis-associative case when
edges prefer the two end points to take different values.
\ignore{
 With negative $\lambda$, our objective function $\F(\vy)$ becomes concave
and its maximum can be easily found using a relaxed quadratic program
followed by an optimal rounding step as suggested in
\cite{ravikumar06Quadratic}.  We therefore do not discuss this case further.}
The more interesting case is when $\lambda$ is positive. For this case, we will
borrow the \alphapass\ algorithm for \Maxlabel\ potentials and show that it
gives a $\frac{13}{15}$-approximation.

\subsection{\Majority\ potentials}
\Majority\ potentials have been used 
for a variety of tasks such as link-based classification of web-pages \cite{lu03Link} and named-entity 
extraction \cite{krishnan06:effective}. A majority potential over a clique $C$
is parameterized by a $|V|\times |V|$ matrix $W=\{w_{vv'}\}$. 
The role of $W$ is to capture the co-existence of different value pairs in the same clique. 

Co-existence allows us to downplay `strict associativity' viz.~giving all vertices of a
clique the same value. The justification for co-existence is as follows.
Consider the conventional collective inference model for named-entity
recognition (Figure~\ref{fig:ciExampleCI}) where a value corresponds
to a label. Suppose the word 'America' occurs in a corpus multiple times. Then
all occurrences of 'America'
will be joined with an associative clique. However, some occurrences of America
correspond to Location, while others might correspond to an Organization, say
Bank of America. Thus we require most but not all vertices in the America clique
to be labeled similarly. This motivates the need for a clique potential with
scope for co-existence.

Coming back to $W$, a highly positive $w_{vv'}$ would suggest that the values $v$ and $v'$ should be allowed to co-exist in 
a clique, when $v$ is the majority value in the clique. We allow $w_{vv'}$ to be
negative to model mutual-exclusion amongst value pairs. Our algorithms work for
unrestricted $W$, although in practice the training procedure that learns $W$
might add some constraints.

We know define \majority\ potentials as:
\begin{equation}
\cp(n_1,\ldots,n_{|V|})=f_a(\vn),\ a=\argmax_v\ n_v
\end{equation}
 We consider linear majority potentials where $f_a(\vn) = \sum_{v}w_{av}n_{v}$. The matrix 
$W=\{w_{vv'}\}$ need not be diagonally dominant or even symmetric. 
Unlike Potts potential, \majority\ potential cannot be represented using edge potentials.
%, hence TRW and graph-cut algorithms are not applicable  in this scenario. 

\section{Generalized Collective Inference Framework}
\label{sec:properties}
We now discuss our framework for generalized collective inference. Recall that
we wish to encourage the labelings of various isolated MRFs to agree on a set of
properties. Our generalized collective inference framework consists of three parts:
\begin{enumerate}
\item A collection of structured instance-labeling pairs
$\{(\vx_i,\vy_i)\}_{i=1}^N$ where each $\vy_i$\ is probabilistically
modeled using a corresponding Markov Random Field (MRF). Let
$\base(\vx_i,\vy_i)$ be a scoring function for assigning labeling
$\vy_i$ to $\vx_i$ using the MRF.  The scoring function decomposes
over the parts $c$ of the MRF as 
%\begin{equation}
$\base(\vxi,\vy) = \sum_c \base_c(\vxi,\vy_c)$.
%\end{equation}
\item A set $P$ of properties where each property $p\in P$ includes in
its domain a subset $\dom{\pr}$ of MRFs and maps each labeling $\vy$ of
an input $\vx\in \dom{\pr}$ to a discrete value from its range
$\range{\pr}$.  Each property decomposes over cliques of the MRF.  We
discuss decomposable properties in Section~\ref{sec:prop}
\item A clique potential 
$\cpr(\{\pr(\vxi,\vyi)\}_{\vx_i\in\dom{p}})$ for each property $p$.
This potential is a symmetric function of its input.  We elaborated on
various symmetric potential functions in Section~\ref{sec:symPot}. These
potentials encourage conformity of properties across labelings of multiple
MRFs. 
\end{enumerate}

The collective inference task is to label the $N$ instances so as to
maximize the sum of the individual MRF specific scores and the clique
potentials coupling many MRFs via the property functions.  This is
given by:
\begin{equation}
\label{eqn:map}
\max_{(\vy_1,\ldots,\vy_N)} \sum_{i=1}^N \base(\vxi,\vyi) + \sum_{\pr\in P}
\cpr(\{\pr(\vxi,\vyi)\}_{i\in\dom{p}})
\end{equation}
Even for symmetric $\cpr$ and binary labels, Equation \ref{eqn:map} is NP-hard to
optimize. One well-known hard case is the Ising model, where each $\cpr$ is a Potts potential.
Thus we look at an approximate approach based on message
passing that we elaborate in Section~\ref{sec:algo}.

\ignore{
The MRF-based scoring function
$\base(\vxi,\vy)$ is decomposable over smaller components $c$ of $\vy$, i.e.
%Let $c$ denote one such part and $\vy_c$ be the corresponding partial labeling.
% For example when $\vy$ is a graph, $c$ can be a maximal clique or even an edge.
We define a property $\pr$ as Property values across all instances in its domain are coupled using
symmetric potential functions as we describe later in this section. Coupling 
encourages intra-domain conformity of instance labelings. 

In Sections~\ref{sec:prop}, \ref{sec:pot} and \ref{sec:ci} respectively, we 
elaborate upon properties, symmetric potentials and the generalized collective
inference problem.
}

%2JUL---need to give motivation/justification behind this defn of properties.
\subsection{Decomposable Properties}
\label{sec:prop}
\ignore{ As an example of a tractable property, consider the case when
  $\vx$ and $\vy$ are both trees, and $c$ varies over nodes of the
  tree. Let $\pr(\vx,\vy_c)$ be a Boolean property defined only when
  $c$ is a leaf. The property is true if $y_c=\alpha$ for some fixed
  label $\alpha$ and false otherwise. Let $\pr(\vx,\vy) =
  \wedge_{\text{leaf }c }\pr(\vx,\vy_c)$. Then any max-marginal of
  $\pr$ can be computed easily.  } 

A property maps a $(\vx,\vy)$ pair to a discrete value in its
range. Typically, since $\vy$ is exponentially large in the size of
$\vx$, we cannot solve Equation~\ref{eqn:map} tractably without
constructing the value of a property from smaller components of $\vy$.
We define {\em decomposable} properties as those which can be broken
over the parts $c$ of the MRF of labeling $\vy$, just like
$\base$. Such properties can be folded into the message computation
steps at each of the MRFs, as we shall see in Section~\ref{sec:algo}.
We now formally describe decomposable properties:
\begin{definition}
\label{def:prop}
A decomposable property $\pr(\vx,\vy)$ is composed out of component
level properties $\pr(\vx,\vy_c,c)$ defined over parts $c$ of $\vy$.
$\pr:(\vx,\vy_c,c)\mapsto \rangeC{p} \cup \{\perp\}$ where the special
symbol $\perp$ means that the property is not applicable to
$(\vx,\vy_c,c)$.  $\pr(\vx,\vy)$ is composed as:
\begin{equation} 
\label{eq:prop}
\pr(\vx,\vy)
\triangleq \left\{ \begin{array}{ll} 
\emptyset & \text{if}~~ \forall c:  \pr(\vx,\vy_c,c)=~\perp \\ 
v & \text{if}~~ \forall c: \pr(\vx,\vy_c,c) \in \{v,\perp\} \\ 
\perp & \text{otherwise.}
\end{array}
\right.
\end{equation}
\end{definition}
The first case occurs when the property does not fire over any of the parts.
The last case occurs when $\vy$ has more than one parts where the
property has a valid value but the values are different.
The new range $\range{\pr}$ now consists of $\rangeC{\pr}$ and the two special symbols $\perp$ and $\emptyset$.

\ignore{
\begin{enumerate}
\item It defines an evaluator function $\eval_\pr$. For $\vx\in\dom{\pr}$ and a labeling
$\vy, \eval_\pr(\vx,\vy)$ is a set of pairs of the form $(c,v)$, where $c$ is a
part and $v\in\range{\pr}$.
\item It defines a special value $\perp\not\in\range{\pr}$, which denotes an
undefined property value.
\item The property function is defined as
\begin{equation}
\pr(\vx,\vy) \triangleq \left\{ \begin{array}{ll}
\emptyset & \eval_\pr(\vx,\vy)\mbox{ is empty} \\
\perp & \exists (c_1,v_1),(c_2,v_2\neq v_1)\in \eval_\pr(\vx,\vy) \\
v & \mbox{for any }(c,v)\in \eval_\pr(\vx,\vy)
\end{array}
\right.
\end{equation}
\end{enumerate}
\end{definition}
The evaluator function $\eval_\pr$ outputs the parts in the labeling where the
property fires, along with the property value at each part. The final property value is
well-defined if the property values at all locations are consistent.
}

We show that even with decomposable properties we can express many
useful types of regularities in labeling multiple MRFs arising in
applications like domain adaptation.

\paragraph{Example 1} %: \word\ property
We start with an example from the simple collective inference task of
\cite{sutton04skip,bunescu04,finkel05:Incorporating} of favoring the
same label for repeated words.  Let $\vx$ be a sentence and $\vy$ be a
labeling of all the tokens of $\vx$.  Consider a property $\pr$,
called {\bf \word}, which returns the label of a fixed token
$t$. Then, $\dom{\pr}$ comprises of all $\vx$ which have the token
$t$, and $\rangeC{\pr}$ is the set of all labels. Thus, if
$\vx\in\dom{\pr}$, then 
\begin{equation}
\pr(\vx,y_c,c) \triangleq \left\{ \begin{array}{ll}
y_c & \vx_c = t \\
\perp & \mbox{otherwise}
\end{array}
\right.
\end{equation}
and, given $\vy$ and $\vx \in \dom{\pr}$,
\begin{equation}
\pr(\vx,\vy) \triangleq \left\{ \begin{array}{ll}
y & \mbox{all occurrences of $t$ in $\vx$ are labeled with label $y$} \\
\perp & \mbox{otherwise}
\end{array}
\right.
\end{equation}

\paragraph{Example 2} %: \pary\ property}
\ignore{
Now consider a more complex example, where $\vy$ is a directed rooted
tree with labels on edges. Let there be a property $\pr$, called {\bf
\pary}, which returns the label of the parent edge of an edge labeled
$\alpha$ ($\alpha$ is a fixed label). For the sake of simplicity,
assume that there is a special edge from the root node to a dummy
root, and this edge is marked with a special label '\rooty'. So,
$\rangeC{\pr}$ comprises of '\rooty'\ and the set of all labels.
$\pr(\vx,\vy,e)= y_{\parent(e)}$ when $y_e = \alpha$, it is $\perp$
otherwise.  $\pr(\vx,\vy)$ is $\emptyset$ when there are no edges in
$\vy$ marked $\alpha$.  $\pr(\vx,\vy)$ is $v$ when all $\alpha$
labeled edges have $v$ labeled parent edges and it is $\perp$ otherwise.
} 
Next consider a more complex example that allows us to express
regularity in the order of labels in a collection of bibliography
records.  Let $\vx$ be a publications record and $\vy$ its labeling. Define
property $\pr$, called {\bf \pary}, which returns the first non-Other
label in $\vy$ after a Title.  A special label `\rooty' marks the end of
$\vy$. 
 So $\rangeC{\pr}$ contains '\rooty'\ and all labels except Other.
Thus, 
\begin{equation}
\pr(\vx,y_c,c) \triangleq \left\{ \begin{array}{ll}
\beta & y_c=\mbox{Title} \wedge y_{c+i}=\beta \wedge (\forall j:0<j<i:
y_{c+j}=\mbox{Other}) \\
\rooty & y_c=\mbox{Title} \wedge c \mbox{ is the last clique in }\vy \\
\perp & y_c\neq \mbox{ Title}
\end{array}
\right.
\end{equation}
Therefore,
\begin{equation}
\pr(\vx,\vy) \triangleq \left\{ \begin{array}{ll}
\emptyset & \vy\mbox{ has no Title}\\
\beta & \beta\mbox{ is the first non-Other label following {\bf each} Title in
}\vy \\
\perp & \mbox{otherwise}
\end{array}
\right.
\end{equation}

\paragraph{Example 3} %: \pary\ property}
In both the above examples the range $\range{p}$ of the properties was
labels.  Consider a third property, called {\bf \beforetoken}  whose range is the
space of tokens. This property returns the identity of the token before a Title
in $\vy$. So,
\begin{equation}
\pr(\vx,y_c,c) \triangleq \left\{ \begin{array}{ll}
x_{c-1} & y_c = \mbox{Title} \wedge (c>0) \\
\mbox{`Start'} & y_c=\mbox{Title} \wedge (c=0) \\
\perp & y_c\neq\mbox{Title}
\end{array}
\right.
\end{equation}
Therefore,
\begin{equation}
\pr(\vx,\vy) \triangleq \left\{ \begin{array}{ll}
\emptyset & \mbox{No Title in }\vy \\
\mbox{`Start'} & \mbox{The only Title in }\vy\mbox{ is at the beginning of }\vy
\\
t & \mbox{All Titles in }\vy\mbox{ are preceded by token }t \\
\perp & \vy\mbox{ has two or more Titles with different preceding tokens}
\end{array}
\right.
\end{equation}

\vspace*{0.2in}
%3JUL -- give example of BeforeToken of title.

\ignore{
\subsection{Symmetric Potential Functions}
\label{sec:pot}
To reward conformant labelings, we need a potential that favors skew in their property values. 
Functions like the negative Gini index (also called the Potts potential) or Shannon entropy are examples of such potentials.
The skew, in turn, depends only on the distribution of the property values of the member labelings,
and not the identities of the instances that generated them. So it suffices to consider {\em symmetric} potentials, which output the same reward even 
if their arguments are permuted, since the input distribution to the potential remains the same.
Let $\cpr$ be a symmetric potential function for property $\pr$.
Due to symmetry, the arguments of $\cpr$ can be denoted with the multi-set 
 $\{\pr(\vxi,\vyi), \vx_i\in\dom{\pr}\}$ or $\{v_1,\ldots,v_n\}$, where
$n=|\dom{\pr}|$ and
$v_i=p(\vxi,\vyi)$.
}

%We now shift our focus on symmetric clique potentials and the
%generalized collective inference problem.
\ignore{
Let $\pr$ be a property and $\{(\vxi,\vyi), \vx_i\in\dom{\pr}\}$ be distinct
instance-labeling pairs and let $n$ denote 
We define a function
$\cpr(\{\pr(\vxi,\vyi)\}_{\vx_i\in\dom{\pr}})$, called the clique potential, 
 over the property values of the
individual instance-labeling pairs. 
}
%A potential function $\cpr$ is
\ignore{
The potential function $\cpr$ used to couple multiple MRFs via a
property $p$ is a symmetric function of its input.  Function $\cpr$ is}
%termed symmetric iff it is invariant under any permutation of its arguments.  
\ignore{
 We use $n$ to
denote $|\dom{\pr}|$.  Thus, the clique potential of a property $p$ is
expressed as $\cpr(\{v_1,\ldots,v_n\})$ and }
% Any such symmetric function can be equivalently expressed as $\cpr'(n_1\ldots n_r)$

Some important families of symmetric clique potentials
have been described in \cite{gupta07}, and recalled in Section~\ref{sec:symPot}.
We use two most widely used of those families, called \Potts\
and \Majority, defined as:
\begin{eqnarray*}
\cp^{\Potts}(\{v_1,\ldots,v_n\}) &\triangleq& \lambda\sum_{v'} n_{v'}^2 \\
\cp^{\Maj}(\{v_1,\ldots,v_n\}) &\triangleq& \sum_{v'} w_{\hat{v}{v'}} n_{v'},\ \
\hat{v}=\argmax_v n_v
\end{eqnarray*}
where $n_v$ is frequency of $v$ in the multiset $\{v_1,\ldots,v_n\}$ and $\lambda$, $w$ are fixed parameters of the potentials.
\ignore{
Such symmetric functions arise naturally in many applications.  For
example, for the \pary\ property in Example 2 of
Section~\ref{sec:prop} the \Potts\ potential can be used to reward
labeling all Titles with the same follow-up label.  In Example 1, \Majority\
can be used to allow preferential co-occurrence of pairs of
labels, as first described in~\cite{krishnan06:effective}

In Section~\ref{sec:cliqueAlgo} we will see how we exploit the
symmetric form of the potential to design efficient MAP inference
algorithms.
}
\ignore{
An
interesting class of clique potentials --- symmetrical clique
potentials, arises in many practical inference tasks. Symmetrical
clique potentials are of practical interest because they permit
efficient exact/approximate inference algorithms.

{\bf{Note: }}Since $\pr(\vx,\vy)$ can be $\perp$, depending on the task, the
clique potential can choose to ignore undefined values and compute the potential
over well-defined values. Alternatively, it can treat $\perp$ like just another
value. In some tasks, encouraging a property to be undefined for a majority of
instances can also quality as conformity.

}

%2JUL: justify what is hard about this and go over existing options to motivate what is missing.
\section{MAP Estimation in the Generalized Collective Inference Framework}
\label{sec:algo}
The natural choice for approximating the NP-hard objective in
Equation~\ref{eqn:map} is ordinary pairwise belief propagation on the joint model.
This approach does not work due to many reasons. First, some symmetric potentials like the \Majority\ potential cannot 
be decomposed along the edges. Second, property-aware messages cannot be computed for arbitrary message passing schedules. 
Third, cluster-membership information of vertices, which is very vital, is not exploited at all. 

Other approaches like the stacking based approach of~\cite{krishnan06:effective} are specific to 
particular symmetric potentials and do not exploit the full set of messages to compute a more accurate MAP.

Hence we adopt message passing on the cluster graph of the model as our
approach, akin to the one proposed by~\cite{duchi06Using}.
We create a top-level cluster graph model where the clusters
correspond to the $N$ instances and  $|P|$ property cliques.  The
cluster node of each instance is internally another nested MRF.  The cluster for
a property $\pr$ is a clique whose vertices correspond to instances in $\dom{\pr}$.
Figure~\ref{fig:cluster} illustrates an example with two properties and three
data instances.

\begin{figure}
\begin{center}
\psfrag{p1}{{\bf \word}}
\psfrag{p2}{{\bf \pary}}
\includegraphics{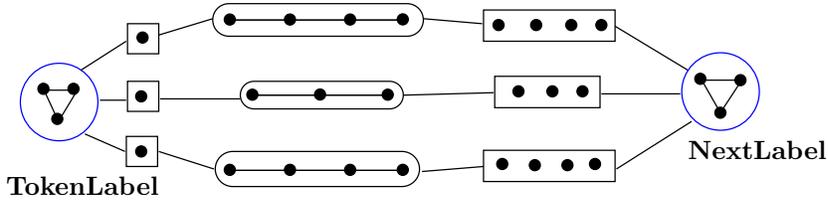}
\caption{\label{fig:cluster}Cluster graph for a toy example with three
chain-shaped MRF instances and two properties. The {\bf \word}\ property has thin
separators, while {\bf \pary}\ has separators that consist of the entire
instance. Both the properties have associative potentials defined on their
cliques (shown as blue circles).
}
\end{center}
\end{figure}

For complex properties like the {\bf \pary}\ property of Section~\ref{sec:prop}, 
the separator between a MRF cluster and a property cluster is the entire
instance. This is a major departure from known collective models such as the one
in Figure~\ref{fig:ciExampleCluster}. Known collective models use highly simple
properties, e.g.~{\bf \word} in Figure~\ref{fig:ciExampleCluster}, which lead
to single vertex separators because the property clique is incident on instances
only through a single token, which is known in advance. In the case of complex properties
like \pary, not only is the property clique incidence information missing, but the
clique's incidence is dependent on the property of the entire labeling and not
just a single token's label. This causes the entire instance to be a separator
between the property cluster and the instance MRF cluster. Therefore, na\"{\i}ve
message passing schemes whose runtime is exponential in the separator size are
inapplicable here. However we exploit the decomposability of properties to simplify message updates.

The setup of message passing on the cluster graph allows us to exploit potential-specific 
algorithms at the cliques, and at the same time work with any arbitrary clique
potential. It also allows intuitive computation of 
property-aware messages.

Let $m_{i\rightarrow p}$ and $m_{p\rightarrow i}$ denote message vectors from
instance $i$ to an incident property clique $\pr$ and vice-versa. Let
$v\in\range{\pr}$ denote a property value. Next we discuss how these messages are computed.

%2JUL: these should perhaps be folded in their respective sections.. here just a top-level description would be more friendly.
\ignore{
\begin{small}
\begin{eqnarray}
\label{eqn:msg1}
m_{i\rightarrow p}(v) = \max_{\vy:\text{\pr}(\vxi,\vy)=v}\base(\vxi,\vy) 
+ \sum_{\substack{p'\neq p\\\vxi\in\dom{p'}}}m_{p'\rightarrow i}(p'(\vxi,\vy))\ \\
\label{eqn:msg2}
m_{p\rightarrow i}(v) = \max_{\substack{(v_1,\ldots,v_n):\\v_i = v}}
\sum_{\substack{j\neq i\\\vx_j\in\dom{p}}}m_{j\rightarrow
p}(v_j) + \cpr(\{v_j\}_{\vx_j\in\dom{p}})
\end{eqnarray}
\end{small}
}

%%todo--issue of multiple absorption not clear..
\subsection{Message from an Instance to a Clique}
\label{sec:msgInstanceClique}
The message $m_{i\rightarrow p}(v)$ is given by:
\begin{eqnarray}
\label{eqn:msg1}
m_{i\rightarrow p}(v) = \max_{\vy:\pr(\vxi,\vy)=v}\left(\base(\vxi,\vy) 
+ \sum_{\substack{p'\neq p:\\\vxi\in\dom{p'}}}m_{p'\rightarrow
i}(p'(\vxi,\vy))\right)
\end{eqnarray}
To compute $m_{i\rightarrow p}(v)$, we need to absorb the incoming
messages from other incident properties $p'\neq p$, and do the maximization. When a
property $p'$ is applicable to only a single fixed clique $c$ of the instance,
we can easily absorb the message $m_{p'\rightarrow i}$ by including it in the potential of
the clique, $\base_c$.  This is true, for instance, for the \word\ property in
the previous section when within a sentence the word does not repeat.
In the general case, absorbing messages that are applicable over
multiple cliques requires us to ensure that the cliques agree on the
property values  following Equation~\ref{eq:prop}. We will refer
to these as multi-clique properties.

We first present an exact extension of the message passing algorithm
within an instance MRF to enforce such agreement and later present
approximations.  Let $K$ be the set of multi-clique properties in
whose domain instance $\vx_i$ lies.  We are targeting applications
where $K$ is small.  

\subsubsection{Exact Messages}

\begin{figure}
\begin{center}
\psfrag{incoming2}{$m_{p_{|K|}\rightarrow i}$}
\psfrag{incoming1}{$m_{p_2\rightarrow i}$}
\psfrag{c1}{$c_1$}
\psfrag{cl}{$c_l$}
\psfrag{v1}{$(v_{11},\ldots,v_{1|K|})$}
\psfrag{vl}{$(v_{l1},\ldots,v_{l|K|})$}
\psfrag{c}{$c$}
\psfrag{cprime}{$c'$}
\psfrag{instancei}{$\mbox{Instance }i$}
\psfrag{s1}{$s_1$}
\psfrag{sl}{$s_l$}
\psfrag{sc}{$s$}
\psfrag{u}{$(u_1,\ldots,u_{|K|})$}
\psfrag{r}{$r$}
\psfrag{Mu}{$M(u_1,\ldots,u_{|K|})$}
\psfrag{finalmesg}{$m_{i\rightarrow p}$}
\psfrag{intmesg}{$I_{c\rightarrow c'}({\bf y}_s,u)$}
\includegraphics{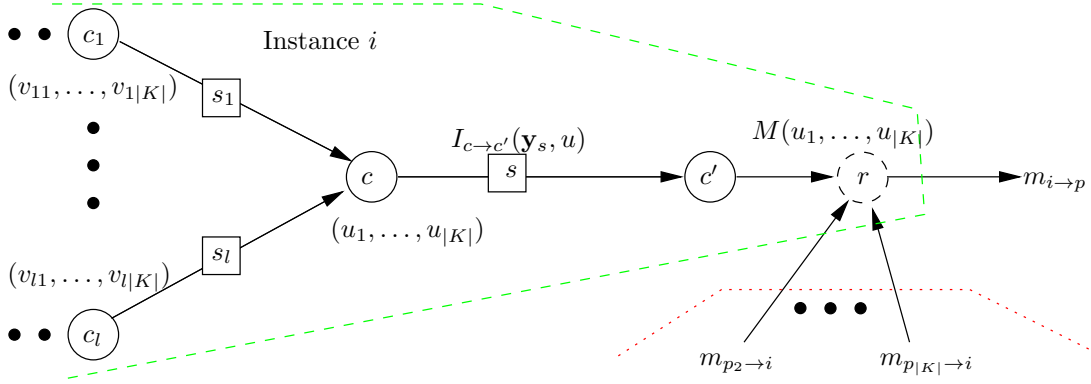}
\caption{\label{fig:internalMPA}Computation of the message $m_{i\rightarrow
p}$. Instance $i$ is incident to $|K|$ properties $p_1,\ldots,p_{|K|}$ where
$p=p_1$. The green portion shows the internal messages $I(.)$ in instance $i$.
Final message $m_{i\rightarrow p}$ is computed in terms of the aggregated
message $M(u_1,\ldots,u_{|K|})$ and any incoming messages $m_{p_j\rightarrow i}$,
$j>1$ (the red portion). }
\end{center}
\end{figure}

Figure~\ref{fig:internalMPA} shows the various messages involved in computing
the message $m_{i\rightarrow p}$. We describe the procedure step-by-step.
Consider an internal message $I_{c\ra c'}(\vy_s)$ between adjacent cliques $c$
and $c'$ {\em inside} the MRF of instance $i$ with $s$ as the separator. This is
computed in standard message passing as follows: 
\begin{equation}
\label{eq:mrf}
I_{c\ra c'}(\vy_s)=\max_{\vy_c \sim \vy_s} \base_c(\vy_c) + \sum_{j=1}^l I_{c_j\ra c}(\vy_{s_j})
\end{equation}
where $c_1\ldots c_l$ denote the $l$ neighboring cliques of clique $c$
excluding $c'$ and $s_1\ldots s_l$ are the corresponding separators.  To
handle multi-clique properties, we augment these internal messages to maintain state
about the set of properties already encountered in any partial
labeling up to $c$.  

For ease of explanation, first consider the case where
$|K|=1$, and let $p$ be the only property relevant for instance $i$.
 We maintain messages of the kind $I_{c\ra c'}(\vy_s, u)$, where $u \in
\range{p}$ is the called the {\em value argument} of $I$. 
These messages compute the following quantity: what is the score of the best
partial labeling $\vy_{part}$ up to $c$, which is consistent with $\vy_s$, such
that $p(\vx,\vy_{part})=u$ (as per Definition~\ref{def:prop}) if
we ignore the cliques beyond $c'$. 
To compute this message, we consider only the following entities: 
\begin{enumerate}
\item All local labelings $\vy_c$, consistent with $\vy_s$, such that $p(\vx,\vy_c,c)$ does not conflict
with $u$.
\item Incoming messages at $c$ (except from $c'$) of the kind $I_{c_i\rightarrow
c}(\vy_{s_i}, v_i)$ such that $v_i$ does not conflict with $u$, and all the $v_i$'s
together with $p(\vx,\vy_c,c)$ can produce a property value $u$ at $c$.
\end{enumerate}
Another way to look at it is as follows. Consider the green portion of Figure~\ref{fig:internalMPA},
and let $v_1,\ldots,v_l$ be $l$ candidate property values produced by some
partial labelings running up to $c_1,\ldots,c_l$ respectively. Also consider a
local labeling $\vy_c$ at $c$, consistent with $\vy_s$. Then computing the
message $I_{c\rightarrow c'}(\vy_s,u)$ is the same as {\em composing} $u$ by
picking a combination of $\vy_c$ and $v_1,\ldots,v_l$ such that
$p(\vx,\vy_c,c),v_1,\ldots,v_l$ can be amalgamated into $u$ via
Definition~\ref{def:prop}. If no such combination exists, then the message is
$-\infty$, else we return the combination with the highest total score.

Thus, depending on $u$, Equation~\ref{eq:mrf} is modified as follows:

%3JUL most cases are hard to understand and too cryptic. try reading it yourself.
\paragraph{Case 1: $u=\emptyset$: }
To get a value of $\emptyset$, the property should fire neither at $c$, nor up to
or at any of the $c_i$'s. That is, we need $p(\vx,\vy_c,c)=\emptyset$ and
incoming messages at $c$ whose value arguments are also $\emptyset$. 
Thus $I_{c\ra c'}(\vy_s,\emptyset)$ is computed as:
\begin{equation*}
\label{eq:mrfP}
%I_{c\ra c'}(\vy_s,\emptyset)=
   \max_{\substack{\vy_c \sim \vy_s\\p(\vx,\vy_c,c)=\perp}} \base_c(\vy_c) + \sum_{j=1}^l I_{c_j\ra c}(\vy_{s_j},\emptyset)
\end{equation*}

\paragraph{Case 2: $u \in \rangeC{p}$: }
In this case, $p(\vx,\vy_c,c)$ the value arguments of its incoming messages should be one of $u$ and $\emptyset$, with 
at least one of them being $u$ (otherwise all of them will be $\emptyset$ and we
will get $\emptyset$ at $c$). 
This will hold if the predicate $V_1(\vy_c,v_1,\ldots,v_l)$, defined below, is true.
\begin{equation}
V_1(\vy_c,v_1,\ldots,v_l) \triangleq (\forall j:v_j \in \{u,\emptyset\}) \wedge
(p(\vx,\vy_c,c)\in\{u,\perp\}) \wedge (p(\vx,\vy_c,c)=u~\vee~\exists v_j = u)
\end{equation}
Here $v_1,\ldots,v_l$ denote value arguments of the incoming messages at $c$ excluding the one from $c'$.
If the set $\{v_1,\ldots,v_l,p(\vx,\vy_c,c)\}$ contains two distinct values from
$\rangeC{p}$, then they will conflict and create $\perp$ at $c$ on composition.
Thus, $V_1$ precisely and completely represents the set of valid combinations
for producing $u$. Using $V_1$ we can compute $I_{c\ra c'}(\vy_s,u)$ as:
\begin{equation}
   \max_{\substack{\vy_c \sim \vy_s,v_1,\ldots,v_l:\\
   V_1(\vy_c,v_1,\ldots,v_l)}}\base_c(\vy_c) + \sum_{j=1}^l I_{c_j\ra c}(\vy_{s_j},v_j)
\end{equation}

\paragraph{Case 3: $u=\perp$: }
We can produce $\perp$ when either (a) value arguments of two or more incoming messages at $c$ conflict or (b) the property value at $c$ conflicts with 
the value argument of one of the incoming messages or (c) either the property
value at $c$ or any one of the value arguments is $\perp$.
The predicate $V_2(\vy_c,v_1,\ldots,v_l)$ returns true if any one of the above
outcomes hold:
\begin{eqnarray}
V_2(\vy_c,v_1,\ldots,v_l) &=& (\exists j: v_j=\perp)~\vee
(p(\vx,\vy_c,c)=\perp) \vee (\exists j,k: v_j\ne v_k \wedge v_j,v_k \in \rangeC{p}) \nonumber \\
&\vee &(p(\vx,\vy_c,c)=v_0,\ v_0\in \rangeC{p}) \wedge (\exists j: v_j \ne v_0,\ v_j \in \rangeC{p})
\end{eqnarray}
The outgoing message $I_{c\rightarrow c'}(\vy_s,u)$ is then:
\begin{equation}
I_{c\ra c'}(\vy_s,\perp)=   \max_{\substack{\vy_c \sim \vy_s,v_1,\ldots,v_l:\\ V_2(\vy_c,v_1,\ldots,v_l) }}
   \base_c(\vy_c) + \sum_{j=1}^l I_{c_j\ra c}(\vy_{s_j},v_j)
\end{equation}

After completing the internal message passing schedule, we can compute the
final aggregated message $M(u)$ by sending a message from the last clique
(wlog, say $c'$) to a dummy root clique $r$:
\begin{equation}
M(u) \triangleq I_{c'\rightarrow r}(\_,u)
\end{equation}
where the separator labeling is irrelevant because the message is to a dummy
clique $r$.

If $|K|=1$, then the message $m_{i\rightarrow p}(v)$ is simply $M(v)$.
This treatment generalizes nicely to the case when $|K|>1$.
We extend the internal message vector $I(.)$ for each combination of values of the
$|K|$ properties. Call it $I_{c\ra c'}(\vy_s,u_1,\ldots u_{|K|})$ where $u_j \in
\range{p_j}$. Let the final aggregated message at a dummy root
clique inside instance $i$ be $M(u_1,\ldots u_{|K|})$. 
 The outgoing message $m_{i\rightarrow p}(v)$ to property $p$ can now be computed
as:
\begin{equation}
m_{i\rightarrow p}(v)=\max_{\substack{u_1,\ldots u_{|K|}:\\ u_p=v}} M(u_1,\ldots
u_{|K|}) + \sum_{p'\neq p}m_{p'\rightarrow i}(u_{p'})
\end{equation}
The overhead for $|K|>1$ is that we have to absorb the incoming messages from
the other $|K|-1$ properties, shown in Figure~\ref{fig:internalMPA} in the red
portion.

\subsubsection{Approximations}
\label{sec:approxMsg}
Various approximations are possible to reduce the number of value combinations
for which messages have to be maintained.

Typically, the within-instance dependencies are stronger than the
dependencies across instances.  We can exploit this to reduce the
number of property values as follows: First, find the MAP of each
instance independently.  Only those property values that are fired in
the MAP labeling of at least one of the instances are considered in
the range.  In the application we study in Section~\ref{sec-domain},
this reduced the size of the range of properties drastically.

%3JUL--not clear. SS will handle this. This will also shrink.
A second trick can be used when properties are associated with
labels that tend not to repeat in the MRF, e.g.~Title for the citation extraction task.  In that case, the value 
$\perp$ can be ignored. And, we can relax the consistency checks on
properties $p'$ that are being absorbed so that they can be absorbed
locally at each clique as follows.  First, normalize incoming messages
from $p'$ as $m_{p' \ra i}(v)-m_{p' \ra i}(\emptyset)$.  Next, absorb
the normalized message in the clique potential $\base_c(\vy_c)$ of all
cliques where $p'(\vx,\vy_c,c)=v$.  Finally, compute the outgoing
message by only keeping state over the values of the outgoing
property.  When the MAP does not contain any repeat firings of a
property, this method returns the exact answer.

A third option is to depend on generic search optimization tricks like
beam search and rank aggregation.  In beam search instead of
maintaining messages for each possible combination of property values,
we maintain only top-k most promising combinations in each message
passing step.

\ignore{
We do the
absorption by incorporating the incoming messages into the local base
potentials $\base_c$:
\begin{equation}
\base_c(\vxi,\vy_c) \leftarrow \base_c(\vxi,\vy_c) + m_{p'\rightarrow i}(p'(\vxi,\vy_c))
\end{equation}
where $c$ is a valid part for property $p'$. This step is possible because
$\prp$ is locally-computable and hence $\prp(\vxi,\vy_c) = \prp(\vxi,\vy)$. 

However, local absorption leads to a new problem: a message
$m_{p'\rightarrow i}(v')$ might get absorbed at multiple parts. To tackle this,
we do a message normalization step before absorption. Let $v'\in\range(\prp)$ be
the value that is most likely to be multiply absorbed. In our experiments we
will see that there is usually only one such value, and it can be predicted apriori. 
We then normalize the messages:
\begin{equation}
m_{p'\rightarrow i}(v'') \leftarrow m_{p'\rightarrow i}(v'') - m_{p'\rightarrow i}(v') 
\end{equation}
for all $v''\in\range(\prp)$. This zeroes out the message entry for $v'$ and
its multiple absorption has no effect.

After absorption the messages into $\base$, we need to perform the maximization
to compute the messages. This is done by simply computing the max-marginal 
$\max_{\vy:\pr(\vxi,\vy)=v} \base(\vxi,\vy)$ on the modified scoring function $\base$. 
Since $\pr$ is a tractable property, the max-marginal can be computed or
approximated efficiently.
}

\subsection{Message from a Clique to an Instance}
\label{sec:cliqueAlgo}
The message $m_{p\rightarrow i}(v)$ is computed as:
\begin{eqnarray}
\label{eqn:msg2}
m_{p\rightarrow i}(v) = \max_{\substack{(v_1,\ldots,v_n):\\v_i = v}}
\sum_{\substack{j\neq i\\\vx_j\in\dom{p}}}m_{j\rightarrow
p}(v_j) + \cpr(\{v_j\}_{\vx_j\in\dom{p}})
\end{eqnarray}
Message $m_{p\rightarrow i}(v)$ requires maximizing the objective in Equation
\ref{eqn:msg2}, which can be re-written as 
\begin{equation*}
-m_{i\rightarrow p}(v) +\max_{\substack{(v_1,\ldots,v_n)\\v_i = v}}
\sum_{j:\vx_j\in\dom{p}}m_{j\rightarrow
p}(v_j) + \cpr(\{v_j\}_{\vx_j\in\dom{p}})
\end{equation*}
The maximization subtask can be cast in terms of the general {\em clique
inference problem} defined as:
\begin{definition}
\label{def:cliqueInference}
Given a clique over $n$ vertices, with a symmetric clique potential
$\text{C}({v_1,\ldots,v_n})$, and vertex potentials $\psi_{jv_j}$ for all $j\leq
n$ and values $v_j$. Compute the value assignment with the highest potential:
\begin{equation}
\max_{v_1,\ldots,v_n} \sum_{j=1}^n \psi_{jv_j} + \text{C}({v_1,\ldots,v_n})
\end{equation}
\end{definition}
%%todo: what vertex??
In our case,  $\psi_{jv}\triangleq m_{j\rightarrow p}(v)$ and
$\text{C}\triangleq\cpr$.
To compute $m_{p\ra i}(v)$, we can solve the clique inference problem with the
restriction $v_i=v$.
\ignore{
%%todo: point out that C^\Potts corresponds to edges with potts potential.
Even though $\cpr$ is symmetric, the clique inference problem is still NP-hard.
Two practically relevant symmetric potentials are the \Potts\ potential and the
\Majority potential, which we define next. 
If $v_1,\ldots,v_n$ is a value assignment to the clique
vertices, then let $n_{v'}$ denote the number of vertices assigned the value
$v'$ and let $\hat{v}$ be the value with the highest vertex membership. 
\Potts and Majority potentials are then defined as:
}

We are interested in the cases when the clique potential is \Potts\ or
\Majority, which were defined in Section~\ref{sec:properties}. These are most
popular potentials for real-life collective inference tasks.
\ignore{Note that \Potts\
potentials can be decomposed over the clique edges but \Majority\ cannot.}

%3JUL--needs rewriting. 
In \cite{gupta07}, a $\frac{13}{15}$-approximate
clique inference algorithm, called $\alpha$-pass was presented for $\cp^{\Potts}$, along with an expensive 
polynomial-time exact algorithm for $\cp^{\Maj}$. 
\ignore{We first review $\alpha$-pass, as it} $\alpha$-pass can also be applied to arbitrary symmetric potentials and is
exact for binary valued properties. 
The time complexity of $\alpha$-pass is $O(|\range{p}|n\log n)$, as compared to 
$(|\range{p}|^2n^2)$ for ordinary belief propagation.

We next show that although $\alpha$-pass is also applicable for \Majority\
potentials, it lacks desirable theoretical guarantees. We then present a new approximate 
inference algorithm for $\cp^{\Maj}$ based on Lagrangian Relaxation which is much
faster than the exact algorithm yet produces almost-optimal scores in practice.

%%todo: fold it in text..
\ignore{
\subsubsection*{Notation}
Let $\psi_{jv} \triangleq m_{j\rightarrow p}(v)$ denote the vertex potential for vertex $j$ taking the value $v$.
}
\ignore{
\subsubsection{The $\alpha$-pass Algorithm}
In $\alpha$-pass, we compute a fast estimate of the best solution that has
exactly $k$ vertices with value $v$ ($k$ and $v$ are arbitrary). The $k$ best
vertices
to be marked with value $v$ are obtained in descending order of
$\psi_{jv} - \max_{v'\neq v}\psi_{jv'}$. The remaining $n-k$ nodes are assigned
their best non-$v$ values. Finally, we compute estimates for all $k$ and $v$,
and return the one with the best overall score.

The time complexity of $\alpha$-pass is $O(|\range{p}|n\log n)$, whereas 
ordinary belief propagation takes $(|\range{p}|^2n^2)$ time.
}

\section{Algorithms for Clique Inference}
\label{sec:cliqueInference}
In this section we explore various exact and approximate schemes for maximizing
the clique inference objective in Definition~\ref{def:cliqueInference} under 
a variety of symmetric potential functions. Of particular
interest are the \Potts\ and \Majority\ potentials, but some of the algorithms
are more general and apply to families of potentials.

These clique algorithms are called as subroutines while calculating the
messages from property cliques to instances, in accordance with
Equation~\ref{eqn:msg2}. Throughout this section, we assume that the clique
corresponds to a fixed property $\pr$ with range $\range{\pr}$. $R$ will be
short-hand for $|\range{\pr}|$.

We will use $\F(v_1,\ldots,v_n)$ to denote the clique inference objective. As
short-hand, we will denote $\F(v_1,\ldots,v_n)$ by $\F(\vv) = \vnp(\vv) +
\vcp(\vv)$ where $\vnp(\vv)$ is the vertex score of $\vv$ and the second term is the clique
score. Wlog assume that the vertex potentials are positive. Otherwise a constant
can be added to all of them and that will not affect the maximization. 
The best value assignment will be denoted by $\vmap$, and $\vapp$ will denote an
approximate solution.

\subsection{\alphapass\ Algorithm}
\label{alg-alphaPass}

We begin with \Maxlabel\ potentials. These potentials are not used in practice, but
clique inference for \Maxlabel\ potentials gives rise to the \alphapass\ algorithm
which has very interesting properties. Recall that a \Maxlabel\ potential is of
the form $\cp(\{v_1,\ldots,v_n\}) = \max_v f_v(n_v)$. The \alphapass\ algorithm
is described in Algorithm~\ref{fig:alphapass}. 

\begin{algorithm}
\KwIn{Vertex Potentials $\psi$, Clique Potential $\cp$, set $\range{\pr}$ of allowed values}
\KwOut{Value assignment $v_1,\ldots,v_n$}
Best = $-\infty$\;
$\vapp$ = nil\;
\ForEach{Value $\alpha\in\range{\pr}$}{
	Sort the vertices by the metric $\psi_{j\alpha} - \max_{v\in \range{\pr}, v\neq \alpha}\psi_{jv}$\;
	\ForEach{$k \in \{1,\ldots,n\}$} {
		Assign the first $k$ sorted vertices the value $\alpha$\;
		Assign the remaining vertices their individual best non-$\alpha$ value\;
		s $\leftarrow$ score of this assignment\;
		\If{s $>$ Best} {
			Best $\leftarrow$ s\;
			$\vapp$ $\leftarrow$ current assignment\;
		}
	}
}
\KwRet $\vapp$\;
\caption{The \alphapass\ algorithm}
\label{fig:alphapass}
\end{algorithm}

For each $(\alpha,k)$ combination, the \alphapass\ algorithm computes the best
$k$ vertices to get the value $\alpha$. Let $\va{\alpha}{k}$ denote the complete
assignment in the $(\alpha,k)^{th}$ step. Then it is easy to see that \alphapass\ runs in $O(|\range{\pr}|n\log n)$ time
by incrementally computing $\F(\va{\alpha}{k})$ from $\F(\va{\alpha}{(k-1)})$.
We now look at properties of \alphapass.

\begin{claim} 
\label{npclaim}
Assignment $\va{\alpha}{k}$ has the maximum vertex score over all $\vv$
where $k$ vertices are assigned $\alpha$, that is, $\vnp(\va{\alpha}{k}) =
max_{\vv:n_\alpha(\vv)=k}\vnp(\vv)$.
\proof This is easily seen by contradiction. If some other assignment $\vv\neq
\va{\alpha}{k}$ has the best vertex score, then it differs from $\va{\alpha}{k}$ in
the assignment of at least two vertices, one of which is assigned $\alpha$ in $\vv$ and non-$\alpha$ in
$\va{\alpha}{k}$. The converse holds for the other differing vertex. By swapping
their assignments, it is possible to increase the vertex score of $\vv$, a
contradiction.
\end{claim}
\begin{claim} 
\label{cpclaim}
For \maxlabel\ potentials, $\vcp(\va{\alpha}{k}) \ge f_\alpha(k)$.
\proof This is because the value $\alpha$ has a count of $k$ and the \maxlabel\
potential considers the maximum over all counts.
\end{claim}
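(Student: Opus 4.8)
The plan is to observe that the inequality is an immediate consequence of the construction of the assignment $\va{\alpha}{k}$ together with the defining form of the \maxlabel\ potential, so essentially no work is required beyond bookkeeping the value counts.

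First I would establish that in $\va{\alpha}{k}$ the value $\alpha$ occurs exactly $k$ times. By construction the first $k$ sorted vertices are given the value $\alpha$, while each of the remaining $n-k$ vertices is assigned its best \emph{non}-$\alpha$ value. Hence none of the latter vertices can contribute to the count of $\alpha$, and so $n_\alpha(\va{\alpha}{k}) = k$ precisely.

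Next I would appeal to the form of the potential. A \maxlabel\ potential is $\cp(\vn) = \max_v f_v(n_v)$, a maximum taken over all values $v$. Since $\alpha$ is one such value and its count in $\va{\alpha}{k}$ is $k$, the quantity $f_\alpha(k) = f_\alpha(n_\alpha(\va{\alpha}{k}))$ is one of the terms over which the maximum is computed. A maximum of a collection of numbers is at least as large as any single member, so
\[
\vcp(\va{\alpha}{k}) = \max_v f_v\big(n_v(\va{\alpha}{k})\big) \ge f_\alpha\big(n_\alpha(\va{\alpha}{k})\big) = f_\alpha(k),
\]
which is exactly the claimed bound.

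There is no genuine obstacle here; the claim follows directly from the definitions. The only point needing any care is verifying that the remaining $n-k$ vertices receive non-$\alpha$ labels, so that the count of $\alpha$ equals (and does not merely exceed) $k$. Even this is not strictly necessary: an inequality $n_\alpha \ge k$ would already suffice, because $f_\alpha$ is non-decreasing and hence $f_\alpha(n_\alpha) \ge f_\alpha(k)$ whenever $n_\alpha \ge k$. Thus the bound is robust regardless of this subtlety, and I expect the entire argument to occupy a single line.
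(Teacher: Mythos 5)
Your proof is correct and is essentially identical to the paper's one-line argument: $\alpha$ receives count $k$ in $\va{\alpha}{k}$, and since the \maxlabel\ potential is a maximum over all values, it is at least the term $f_\alpha(k)$. Your additional remark that monotonicity of $f_\alpha$ would cover the case $n_\alpha \ge k$ is a fine robustness check but adds nothing beyond the paper's reasoning.
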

\begin{theorem}
The \alphapass\ algorithm finds the MAP for \maxlabel\ clique
potentials.
\begin{proof}
Let $\vmap$ be the optimal assignment and let $\beta = \argmax_v
f_v(n_v(\vmap)),\ \ell = n_\beta(\vmap)$.  Let $\vapp$ be the assignment found by \alphapass.
We have:
\begin{eqnarray*}
\F(\vapp) &=& \max_{1\le \alpha \le |\range{\pr}|, 1 \le k \le
n}\F(\va{\alpha}{k}) \\
&\ge& \F(\va{\beta}{\ell})\\
&=&\vnp(\va{\beta}{\ell})+\vcp(\va{\beta}{\ell})\\
&\ge& \vnp(\va{\beta}{\ell})+f_\beta(\ell) \\
&=&\vnp(\va{\beta}{\ell})+\vcp(\vmap)\\
&\ge& \vnp(\vmap)+\vcp(\vmap) \\
&=& \F(\vmap)
\end{eqnarray*}
The second and third inequalities follows from Claims~\ref{cpclaim}
and~\ref{npclaim} respectively.
\end{proof}
\end{theorem}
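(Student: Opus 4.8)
The plan is to show that the \alphapass\ algorithm, which maximizes over all $(\alpha,k)$ pairs, necessarily examines at least one assignment whose score matches or exceeds the optimal $\F(\vmap)$. The key observation is that for \maxlabel\ potentials, the clique potential $\vcp(\vv)=\max_v f_v(n_v)$ depends only on the counts $n_v$, and in particular it is lower-bounded by $f_\beta(n_\beta)$ for the specific value $\beta$ that attains the maximum in $\vmap$. This lets me isolate a single term $f_\beta(\ell)$ that behaves like a \maxlabel\ potential but is tied to a fixed $(\alpha,k)=(\beta,\ell)$ configuration, bridging the gap between the structured optimum and what the greedy per-$(\alpha,k)$ search attains.

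First I would introduce the optimal assignment $\vmap$ and define $\beta=\argmax_v f_v(n_v(\vmap))$ together with $\ell=n_\beta(\vmap)$, so that $\vcp(\vmap)=f_\beta(\ell)$ exactly. Since \alphapass\ returns the best over all $(\alpha,k)$, its output satisfies $\F(\vapp)\ge \F(\va{\beta}{\ell})$ --- this is the step that uses the exhaustive search over the $O(|\range{\pr}|n)$ grid. Next I would decompose $\F(\va{\beta}{\ell})=\vnp(\va{\beta}{\ell})+\vcp(\va{\beta}{\ell})$ and apply Claim~\ref{cpclaim} to get $\vcp(\va{\beta}{\ell})\ge f_\beta(\ell)$, since $\beta$ has count exactly $\ell$ in $\va{\beta}{\ell}$ and the \maxlabel\ potential takes a max over counts.

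The final link is the vertex-score comparison: I would use Claim~\ref{npclaim} to argue $\vnp(\va{\beta}{\ell})\ge \vnp(\vmap)$, because $\va{\beta}{\ell}$ is by construction the \emph{vertex-optimal} assignment among all assignments placing exactly $\ell$ vertices on $\beta$, and $\vmap$ is one such assignment. Chaining these inequalities gives
\[
\F(\vapp)\ge \vnp(\va{\beta}{\ell})+f_\beta(\ell)\ge \vnp(\vmap)+f_\beta(\ell)=\vnp(\vmap)+\vcp(\vmap)=\F(\vmap),
\]
and since $\F(\vapp)\le\F(\vmap)$ trivially, equality holds and \alphapass\ is exact.

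I do not anticipate a serious obstacle here, as the two claims do all the heavy lifting; the one subtle point to state carefully is \emph{why} substituting $f_\beta(\ell)$ for the full $\vcp(\vmap)$ is an equality rather than merely a bound --- it holds precisely because $\beta$ was chosen as the argmax value in $\vmap$, so the \maxlabel\ potential's outer maximum is realized at $\beta$. The proof is essentially a telescoping chain of the two claims plus the definition of $\beta$, so the main care is in ordering the inequalities so each invocation of a claim is valid.
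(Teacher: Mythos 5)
Your proposal is correct and is essentially identical to the paper's own proof: both define $\beta=\argmax_v f_v(n_v(\vmap))$ and $\ell=n_\beta(\vmap)$, use Claim~\ref{cpclaim} to bound $\vcp(\va{\beta}{\ell})\ge f_\beta(\ell)=\vcp(\vmap)$, use Claim~\ref{npclaim} to bound $\vnp(\va{\beta}{\ell})\ge\vnp(\vmap)$, and chain these through the exhaustive $(\alpha,k)$ search. Your explicit remark on why $f_\beta(\ell)=\vcp(\vmap)$ holds with equality is a nice clarification of a step the paper leaves implicit, but the argument is the same.
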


Thus, \alphapass\ finds the optimal assignment for the \maxlabel\ family of
potentials in $O(Rn\log n)$ time. We now move on to \additive\
potentials.

\subsection{Clique Inference for \Additive\ Potentials}
We will mainly focus on the \Potts\ potential, which is arguably the most popular
member of the \additive\ family. \Potts\ potential is given by $\vcp(\vv) =
\lambda\sum_v n_v^2$.

When $\lambda < 0$, the clique edges prefer the two end points to take different
values. With
negative $\lambda$, our objective function $\F(\vv)$ becomes concave
and its maximum can be easily found using a relaxed quadratic program
followed by an optimal rounding step as suggested in
\cite{ravikumar06Quadratic}.  We therefore do not discuss this case further.
The more interesting case is when $\lambda$ is positive. We show that
finding $\vmap$ now becomes NP-hard.
\begin{theorem}
When $\vcp(\vv)=\lambda\sum_v n_v^2, \lambda > 0$, finding the MAP
assignment is NP-hard.
\end{theorem}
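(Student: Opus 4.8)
The plan is to exhibit a polynomial-time reduction from a known NP-hard problem to the clique inference problem with $\cp(\vv)=\lambda\sum_v n_v^2$, $\lambda>0$. The objective $\F(\vv)=\vnp(\vv)+\lambda\sum_v n_v^2$ rewards concentrating the $n$ vertices on as few values as possible (the quadratic term is maximized when all vertices share one value), while the vertex potentials $\psi_{jv}$ pull individual vertices toward their preferred values. The tension between these two forces is exactly what makes the problem a combinatorial partitioning problem, and it is this tension I want to encode a hard instance into. A natural source is a partition-style or graph-coloring-style problem; since the quadratic term $\sum_v n_v^2$ is literally a count of same-value pairs, I would aim to reduce from a problem about maximizing agreements/disagreements under constraints.

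The key steps, in order, are as follows. First, I would pick the source problem and set $R=|\range{\pr}|$ equal to the relevant parameter (e.g.\ the number of colors, or a small fixed constant for a problem like \textsc{Max-Cut}-type or exact-cover variants). Second, for each element of the source instance I would create a clique vertex, and I would design the vertex potentials $\psi_{jv}$ so that assigning vertex $j$ a value outside its ``allowed'' set incurs a prohibitively large penalty relative to $\lambda$ — i.e.\ I would scale $\psi$ so that any optimal assignment must respect the intended per-vertex constraints, reducing the search to feasible configurations of the source problem. Third, I would verify the correspondence: show that $\F(\vv)$ on a feasible assignment equals (up to an additive or monotone transformation) the source objective evaluated on the corresponding configuration, so that the MAP assignment of the clique instance yields an optimal solution of the NP-hard source problem, and conversely. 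Because $\sum_v n_v^2 = \sum_v \binom{n_v}{2}\cdot 2 + n$ counts monochromatic pairs, a convenient target is a problem whose objective is naturally expressed in terms of how many items land together in each of $R$ groups subject to local preferences.

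The main obstacle I anticipate is getting the gadget scaling right so that the quadratic clique term and the linear vertex term interact to force feasibility \emph{without} the quadratic term's global ``lump everything together'' bias trivializing the instance. Concretely, since $\lambda\sum_v n_v^2$ alone is maximized by the all-equal assignment, I must ensure the vertex potentials are strong enough to rule out degenerate solutions yet weak enough that the residual optimization over feasible assignments genuinely reproduces the source problem's hardness. I would control this by choosing $\psi$ values polynomially bounded in $n$ and $\lambda$ and arguing that any deviation from a feasible (source-respecting) assignment strictly decreases $\F$; a careful inequality bounding the maximum possible gain from the clique term (at most $\lambda n^2$) against the penalty built into $\psi$ would close the argument. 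Once feasibility is enforced, matching the two objectives should be a routine algebraic identity, so the crux is really this scaling/enforcement step.
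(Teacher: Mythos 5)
Your plan is essentially the paper's proof in outline. The paper reduces from the NP-complete Exact Cover by 3-Sets problem: each element $e_i$ becomes a clique vertex, each 3-set $S_v$ becomes a value, and the vertex potentials are $\psi_{iv}=2n\lambda$ if $e_i\in S_v$ and $0$ otherwise; the MAP score then equals $(2n^2+3n)\lambda$ iff an exact cover exists. Your scaling analysis is sound (the paper's $2n\lambda$ is the tight choice, since moving a single vertex changes $\sum_v n_v^2$ by at most $2n$; your coarser per-vertex penalty exceeding $\lambda n^2$ would also work), and the ``routine algebraic identity'' your third step defers is exactly this: once every vertex is forced into a containing set, every count obeys $n_v\le 3$, hence $\lambda\sum_v n_v^2\le 3\lambda\sum_v n_v=3n\lambda$, with equality precisely when the nonempty values each have count $3$, i.e.\ when the chosen sets partition the universe.

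The one substantive caveat concerns the hedge in your first step: of the candidate source problems you float, only the exact-cover family can be made to work. Max-Cut-type and graph-coloring-type problems reward or require \emph{dis}agreement, which is the $\lambda<0$ regime --- and the paper observes that case is tractable (the objective becomes concave). More fundamentally, the clique potential here is symmetric and cardinality-based, and the only remaining degrees of freedom are per-vertex potentials, so there is no way to encode the edge structure of an arbitrary graph: all pairs of clique vertices interact identically. Your own closing criterion --- a problem ``whose objective is naturally expressed in terms of how many items land together in each of $R$ groups subject to local preferences'' --- is the correct filter, and exact cover is the standard problem passing it; committing to it and writing out the threshold argument above is all that separates your sketch from the paper's proof.
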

\begin{proof}
Let $R \triangleq |\range{\pr}|$.
We prove hardness by reducing from the NP-complete {\em exact cover by 3-sets}
problem~\cite{ps82:combinatorial} of deciding if exactly $\frac{n}{3}$
of $R$ subsets $S_1,\ldots,S_R$ of 3 elements each from $U=\{e_1,\ldots
e_n\}$ can cover $U$.  We let elements correspond to vertices and sets to
values.  Assign $\vnp_{iv}=2n\lambda$ if $e_i\in S_v$ and 0 otherwise.
MAP score will be $(2n^2+3^2\frac{n}{3})\lambda$ iff we can find an
exact cover.
\end{proof}
%%SS -- put in the issue of NP-hardness in n versus m as in the author response discussion.
%The problem is NP-hard in $m$, which can easily be as large as $20$ in real-life scenarios.
The above proof establishes that there cannot be an algorithm that is
polynomial in both $n$ and $R$.  But we have not ruled out
algorithms with complexity that is polynomial in $n$ but exponential in 
$R$, say of the form $O(2^Rn^c)$ for a constant $c$.

We next propose approximation schemes.  Unlike for general graphs
where the Potts model is approximable only within a factor of
$\frac{1}{2}$~\cite{boykov01Fast,klienberg02Approx}, we show that for
cliques the Potts model can be approximated to within a factor of
$\frac{13}{15}\approx 0.86$ using the \alphapass\ algorithm.
We first present an easy proof for a
weaker bound of $\frac{4}{5}$ and then move on to a more
detailed proof for the $\frac{13}{15}$ bound.  Recall that the optimal
assignment is $\vmap$ and the assignment output by \alphapass\ is $\vapp$.

\begin{theorem}
\label{th-alpha1}
 $\F(\vapp) \ge \frac{4}{5}\F(\vmap)$.
\end{theorem}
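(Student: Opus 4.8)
The plan is to lower-bound $\F(\vapp)$ by the scores of two particular assignments that \alphapass\ examines, and then blend these two bounds with carefully chosen weights. Let $\beta=\argmax_v n_v(\vmap)$ be the majority value in the optimal assignment and let $\ell=n_\beta(\vmap)$ be its count. The two candidates I would use are $\va{\beta}{\ell}$ (place $\beta$ on the best $\ell$ vertices, the rest taking their best non-$\beta$ values) and $\va{\beta}{n}$ (place $\beta$ on all $n$ vertices); both are inspected by the algorithm, so $\F(\vapp)$ dominates each of their scores.

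First I would bound the score of $\va{\beta}{\ell}$. By Claim~\ref{npclaim}, $\vnp(\va{\beta}{\ell})=\max_{\vv:n_\beta(\vv)=\ell}\vnp(\vv)\ge\vnp(\vmap)$, since $\vmap$ itself has $n_\beta=\ell$. As $\beta$ occurs $\ell$ times in $\va{\beta}{\ell}$, its Potts score satisfies $\vcp(\va{\beta}{\ell})=\lambda\sum_v n_v^2\ge\lambda\ell^2$, so $\F(\vapp)\ge\vnp(\vmap)+\lambda\ell^2$. Second, in $\va{\beta}{n}$ all $n$ vertices carry $\beta$, so its clique score is exactly $\lambda n^2$ and (since vertex potentials are wlog positive) its vertex score is nonnegative, giving $\F(\vapp)\ge\lambda n^2$.

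Now I would take the convex combination with weights $\tfrac45$ and $\tfrac15$:
\[
\F(\vapp)=\tfrac45\F(\vapp)+\tfrac15\F(\vapp)\ge\tfrac45\bigl(\vnp(\vmap)+\lambda\ell^2\bigr)+\tfrac15\lambda n^2 .
\]
The weight $\tfrac45$ on the first bound is chosen precisely so that its $\vnp(\vmap)$ term already matches the target $\tfrac45\F(\vmap)=\tfrac45\vnp(\vmap)+\tfrac45\vcp(\vmap)$. It then remains to verify the purely arithmetic inequality $\tfrac45\lambda\ell^2+\tfrac15\lambda n^2\ge\tfrac45\vcp(\vmap)$, i.e.\ $4\ell^2+n^2\ge 4\sum_v n_v(\vmap)^2$. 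Because $\beta$ is the majority value, every $n_v(\vmap)\le\ell$, hence $\sum_v n_v(\vmap)^2\le\ell\sum_v n_v(\vmap)=\ell n$; and $4\ell n\le 4\ell^2+n^2$ is just $(n-2\ell)^2\ge 0$. Chaining these yields $\F(\vapp)\ge\tfrac45\F(\vmap)$.

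The conceptually important step, and the one I expect to be the real obstacle, is recognizing that a single \alphapass\ candidate does not suffice: using only $\va{\beta}{\ell}$ leaves $\vcp(\vmap)$ uncontrolled (it can be as large as $\lambda\ell n\gg\lambda\ell^2$ when $\ell\ll n$), while using only the all-$\beta$ assignment $\va{\beta}{n}$ discards the vertex score entirely. The two must be combined, and the point is that the weight $\tfrac45$ collapses the residual clique-score comparison to the trivial square $(n-2\ell)^2\ge 0$. Pushing the weight split any further is exactly what breaks at $\tfrac45$ and motivates the sharper $\tfrac{13}{15}$ analysis developed next.
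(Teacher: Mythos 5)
Your proof is correct and is essentially the paper's own argument: both rest on the same two \alphapass\ candidates (the $(\beta,\ell)$ assignment and the assign-everything-one-value assignment), Claim~\ref{npclaim}, the bound $\sum_v n_v(\vmap)^2 \le \ell n$, and the square inequality $(n-2\ell)^2 \ge 0$. The only difference is how the two lower bounds are merged --- the paper splits into cases according to whether $\F(\vmap)\ge\frac{5}{4}\lambda n^2$ holds, while you take a $\frac{4}{5},\frac{1}{5}$ convex combination of the two bounds; these are equivalent ways of exploiting $\F(\vapp)\ge\max$ of the two scores.
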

\begin{proof}
Without loss of generality assume that the counts in $\vmap$ are $n_1
\ge n_2 \ge \ldots \ge n_R$, where $R=|\range{\pr}|$. Then $\sum_v n_v^2 \le n_1\sum_v n_v =
nn_1$.
\begin{eqnarray*}
\F(\vapp) &\ge& \F(\va{1}{n_1}) =
\vnp(\va{1}{n_1})+\vcp(\va{1}{n_1}) \\ &\ge&
\vnp(\vmap)+\vcp(\va{1}{n_1}) ~~~\text{(from Claim~\ref{npclaim})}\\
&\ge& \vnp(\vmap)+\lambda n_1^2~~~\text{(since $\lambda > 0$)}\\
&\ge& \vnp(\vmap)+\vcp(\vmap)-\lambda n_1 n + \lambda n_1^2 \\
&\ge& \F(\vmap)-\lambda n^2/4
%\ge (1-1/5)\F(\ymap)~~~\text{if~} (1+1/4)\lambda n^2 \le \F(\ymap)
\end{eqnarray*}
Now consider the two cases where $\F(\vmap) \ge \frac{5}{4}\lambda
n^2$ and $\F(\vmap) < \frac{5}{4}\lambda n^2$. For the first case we
get from above that $F(\vapp) \ge \F(\vmap)-\lambda n^2/4 \ge
\frac{4}{5}\F(\vmap)$.  For the second case, we know that the score 
$\F(\va{m}{n})$ where we assign all vertices the last value is at least
$\lambda n^2$ and  thus $\F(\vapp) \ge \frac{4}{5}\F(\vmap)$.
% since the vertex potential for all vertices of the last label is 0.
\end{proof}

We now state the more involved proof for showing that \alphapass\ actually
provides a tighter approximation bound of $\frac{13}{15}$ for Potts potentials.

\begin{theorem}
\label{theorem:alphapass}
 $\F(\vapp) \ge \frac{13}{15}\F(\vmap)$.
\end{theorem}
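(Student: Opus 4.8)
The plan is to mimic the structure of the $\frac{4}{5}$ argument of Theorem~\ref{th-alpha1} but to feed the final case analysis with a richer pool of candidate assignments that \alphapass\ is guaranteed to inspect. As before, write $\F(\vmap)=\vnp(\vmap)+\lambda\sum_v n_v^2$ with the \vmap-counts ordered $n_1\ge n_2\ge\cdots\ge n_R$, and keep the normalization $\vnp\ge 0$. Since $\vapp$ is the best assignment over all $(\alpha,k)$ pairs, $\F(\vapp)$ dominates the score of \emph{every} assignment \alphapass\ examines (and hence any convex combination of such scores); the whole proof is an exercise in choosing a few of these assignments whose maximum already captures $\frac{13}{15}\F(\vmap)$.

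I would use two families of lower bounds. First, the \emph{majority candidate} $\va{1}{n_1}$: by Claim~\ref{npclaim} its vertex score is at least $\vnp(\vmap)$, and by Claim~\ref{cpclaim}-style reasoning its clique score is at least $\lambda n_1^2$, giving $\F(\vapp)\ge \vnp(\vmap)+\lambda n_1^2$; I would also keep in reserve the sharper observation that \alphapass\ cannot pile the remaining $n-n_1$ vertices back onto value $1$, so they must disperse over the other labels and contribute additional clique mass. Second, the \emph{all-one-value candidates} $\va{v}{n}$: assigning every vertex the value $v$ yields $\F(\vapp)\ge \lambda n^2 + S_v$, where $S_v$ is the part of $\vnp(\vmap)$ contributed by the vertices that \vmap\ labels $v$, so that $\sum_v S_v=\vnp(\vmap)$. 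Taking the maximum over $v$, the second family gives $\F(\vapp)\ge \lambda n^2+\max_v S_v$, which is strong precisely when the clique reward $\lambda n^2$ is large relative to $\F(\vmap)$ and the vertex score is not concentrated on one label.

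With these in hand I would set $\lambda n^2=1$, put $x_v=n_v/n$, and reduce the theorem to lower-bounding
\[
\frac{\max\!\left(\vnp(\vmap)+x_1^2,\ \ \vnp(\vmap)/R + 1\right)}{\vnp(\vmap)+\sum_v x_v^2}
\]
over admissible count profiles and score splits. The analysis splits on the shape of the count profile: when a single label dominates ($x_1$ close to $1$) the majority candidate is already within $\frac{13}{15}$; when $\sum_v x_v^2$ is small relative to $1$ the all-one-value candidate dominates; the binding situation is the intermediate regime where the two families cross, and I would show that the worst admissible profile there forces the ratio down to exactly $\frac{13}{15}$, while every other profile is strictly better. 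This is also the place where the finer $O(m^2n\log n)$ variant later improves the constant to $\frac{8}{9}$.

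The main obstacle is exactly this intermediate regime. A naive relaxation that treats the count vector $(x_v)$ and the score split $(S_v)$ as free variables is too loose: it admits profiles (e.g.\ two nearly equal large groups with an evenly split vertex score) that are not realizable as a genuine $(\vnp,\vcp)$ pair and would push the provable constant below $\frac{13}{15}$. The real work is therefore to respect the feasibility coupling --- that $S_v$ is supported only on labels actually used by \vmap, and that \alphapass's leftover vertices are confined to the remaining labels and so cannot avoid re-clustering --- and to use this coupling to rule out the bad relaxed profiles. Pinning the extremal configuration and verifying that it yields $\frac{13}{15}$ rather than the weaker $\frac{4}{5}$ is the crux of the argument.
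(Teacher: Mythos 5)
Your two candidate families are exactly the ones the paper's own proof uses: the majority candidate $\va{1}{n_1}$, giving $\F(\vapp)\ge \vnp(\vmap)+\lambda n_1^2$ via Claim~\ref{npclaim}, and the all-one-value candidates, giving $\F(\vapp)\ge \lambda n^2+\max_v S_v$ with the pigeonhole $\max_v S_v\ge \vnp(\vmap)/k$ over the $k$ labels that $\vmap$ actually uses. So the skeleton is the paper's. The genuine gap is that the step you defer with ``I would show that the worst admissible profile there forces the ratio down to exactly $\frac{13}{15}$'' is not a routine verification --- it is the entire technical content of the theorem, and nothing in your text derives (or could derive) the constant $\frac{13}{15}$. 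Concretely, with $\lambda=1$ and the contradiction hypothesis $\F(\vapp)\le\frac{13}{15}\F(\vmap)$, your two bounds become
\begin{equation*}
\vnp^* \le \tfrac{13}{2}\left(n_2^2+\cdots+n_k^2\right)-n_1^2
\qquad\text{and}\qquad
\vnp^* \ge \frac{15kn^2-13k\left(n_1^2+\cdots+n_k^2\right)}{13k-15},
\end{equation*}
and a contradiction requires proving, for every integer profile $n_1\ge\cdots\ge n_k\ge 1$ with $k\ge 2$,
\begin{equation*}
kn^2-\tfrac{13}{2}(k-1)\left(n_2^2+\cdots+n_k^2\right)-n_1^2\ \ge\ 0 .
\end{equation*}
The paper proves this by an extremal-sequence argument: any minimizer must have the form $(n_1,\ldots,n_1,1,\ldots,1)$, a concavity argument in the last large coordinate pushes the minimum to the all-ones profile, and there the expression equals $k^3-\tfrac{13}{2}(k-1)^2-1$, which is nonnegative and vanishes exactly at $k=3$. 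That computation is simultaneously the source of the constant $\frac{13}{15}$ and of its tightness (the $(1,1,1)$ profile scales up to the instance of Theorem~\ref{tight-instance}). Your proposal asserts the conclusion of this analysis but contains none of it.

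A secondary issue: your displayed reduction uses $\vnp(\vmap)/R+1$, i.e.\ pigeonhole over all $R$ values rather than the $k$ used ones. That version is not merely loose but unusable as a certificate: for counts $(n/2,n/2)$, $\vnp^*=n^2$ and $R$ large, both of your displayed bounds ($1.25n^2$ and roughly $n^2$) fall below $\frac{13}{15}\F(\vmap)=1.3n^2$. You do state the fix in prose (that $S_v$ is supported only on labels $\vmap$ uses), which is precisely the paper's second bound, but the displayed formula contradicts it; and since everything downstream of that fix is the unexecuted extremal analysis above, what you have is a correct plan that matches the paper's route, with its load-bearing step missing.
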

\begin{proof}
The proof is by contradiction. Suppose there is an instance where $\F(\vapp)\le
\frac{13}{15}\F(\vmap)$.
Wlog assume that $\lambda=1$ and $n_1 \geq n_2 \geq \ldots \geq n_k > 0,\ (2\leq k
\leq R)$
be the non-zero counts in the optimal
solution and let $\vnp^*$ be its vertex potential. 
Thus $\F(\vmap) = \vnp^* + n_1^2 + n_2^2 + \ldots + n_k^2$.

Now, $\F(\vapp)$ is at least $\vnp^* + n_1^2$ (ref.~Claim \ref{npclaim}). This implies
$\frac{\vnp^* + n_1^2}{\vnp^* + n_1^2 + \ldots + n_k^2}\le \frac{13}{15}$, i.e.~
$2(\vnp^* +n_1^2) < 13(n_2^2 + \ldots + n_k^2)$  or
\begin{equation}
\label{eqn:proof_eqn1}
\vnp^* < \frac{13}{2}(n_2^2 + \ldots + n_k^2) - n_1^2
\end{equation}

Since $k$ values have non-zero counts, and the vertex score is $\vnp^*$, at least
$\vnp^*/k$ of the vertex score is assigned to one value. Considering a solution where
all vertices are assigned to this value, we get  $\F(\vapp) \geq \vnp^*/k + n^2$. 

Therefore $\F(\vmap) > 15/13(n^2 + \vnp^*/k)$.

Since $\F(\vmap) = \vnp^* + n_1^2 + \ldots + n_k^2$,
we get:
\begin{equation}
\label{eqn:proof_eqn2}
\vnp^* > \frac{15kn^2-13k(n_1^2 + \ldots + n_k^2)}{13k-15}
\end{equation}

We show that Equations \ref{eqn:proof_eqn1} and \ref{eqn:proof_eqn2} contradict each other.
It is sufficient to show that for all $n_1 \geq \ldots \geq n_k \geq 1$,
\begin{equation*}
\frac{15kn^2-13k(n_1^2 + \ldots n_k^2)}{13k-15} \geq \frac{13}{2}(n_2^2 + \ldots + n_k^2) - n_1^2
\end{equation*}
Simplifying, this is equivalent to
\begin{equation}
kn^2 - \frac{13}{2}(k-1)(n_2^2 +\ldots + n_k^2) - n_1^2 \geq 0.
\end{equation}

Consider a sequence $n_1, \ldots, n_k$ for which the expression on the left hand side is
minimized. If $n_i > n_{i+1}$ then we must have $n_{l} = 1\ \forall l\ge i+2$. 
Otherwise, replace $n_{i+1}$ by $n_{i+1}+1$ and decrement $n_j$ by 1,
where $j$ is the largest index for which $n_j > 1$. This gives a new sequence
for which the value of the expression is smaller. Therefore the sequence
    must be of the form $n_i = n_1$ for $1 \leq i < l$ and $n_i = 1$ for $i > l$, for
    some $l \geq 2$. Further, considering the expression as a function of $n_l$,
    it is quadratic with a negative second derivative. So the minimum occurs 
    at one of the extreme values $n_l = 1$ or $n_l = n_1$.
    Therefore we only need to consider sequences of the form $n_1,\ldots,n_1,1,\ldots,1$
    and show that the expression is non-negative for these.

    In such sequences, differentiating with respect to
    $n_1$, the derivative is positive for $n_1 \geq 1$, which means that the expression
    is minimized for the sequence $1,\ldots,1$. Now it is easy to verify that
    it is true for such sequences. The expression is zero only for the sequence
    $1,1,1$, which gives the worst case example.
\end{proof}

The next theorem that the analysis in Theorem~\ref{theorem:alphapass} is tight.
We present a pathological example where \alphapass\ gives a solution which is
exactly $\frac{13}{15}$ of the optimal.

\begin{theorem}
\label{tight-instance}
The approximation ratio of $\frac{13}{15}$ of the \alphapass\
algorithm is tight.
\end{theorem}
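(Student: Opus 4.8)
The plan is to reverse-engineer the instance directly from the equality analysis in the proof of Theorem~\ref{theorem:alphapass}. There the approximation factor was controlled by the quantity $g(n_1,\ldots,n_k)=kn^2-\frac{13}{2}(k-1)(n_2^2+\cdots+n_k^2)-n_1^2$, and the bound degrades to exactly $\frac{13}{15}$ precisely when $g=0$, which that proof pins down at the count vector $(1,1,1)$. So I would look for an instance whose optimum $\vmap$ uses three values with equal counts, and in which the two inequalities exploited there --- namely $\F(\vapp)\ge \vnp(\vmap)+n_1^2$ (from Claim~\ref{npclaim}) and the ``collapse onto the best single value'' bound $\F(\vapp)\ge \vnp(\vmap)/k+n^2$ --- hold \emph{simultaneously} with equality. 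Fixing $\lambda=1$ and normalizing so that the optimal counts are $(1,1,1)$, both equalities force the optimal vertex score to be exactly $\vnp(\vmap)=12$, giving $\F(\vmap)=12+3=15$ and a target \alphapass\ value of $13$.

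Concretely, I would take a clique on three vertices over three values and choose the vertex potentials $\psi$ so that (i) the maximum-weight assignment giving the three vertices three distinct values has vertex score $12$, so $\F(\vmap)=15$; (ii) for every value $v$ the column sum $\sum_j\psi_{jv}$ equals $4$, so that collapsing all vertices onto any single value scores $4+3^2=13$; and (iii) no intermediate $(\alpha,k)$ assignment the algorithm tries can exceed $13$. Items (i) and (ii) are linear constraints on the nine entries of $\psi$ and pin the instance down up to a few free parameters; the score of each candidate $\va{\alpha}{k}$ can then be read off using Claim~\ref{npclaim} (which fixes its vertex score as the best over all assignments with $n_\alpha=k$) together with the explicit clique term $\vcp(\va{\alpha}{k})=\sum_v n_v^2$. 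To finish I would enumerate the finitely many candidates $\va{\alpha}{k}$ produced by \alphapass, show each scores at most $13$, and exhibit one (any collapse onto a single value, i.e.\ $k=n$) attaining $13$ exactly; since \alphapass\ returns the best candidate, $\F(\vapp)=13=\frac{13}{15}\F(\vmap)$.

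The main obstacle is item (iii). With so tight a budget, the real danger is that \alphapass\ reconstructs $\vmap$ for some small $k$ --- most acutely when $\alpha$ equals one of the three optimal values and $k=1$, since fixing one vertex to $\alpha$ and sending the remaining two to their best non-$\alpha$ values can re-assemble the distinct-valued optimum and score $15$. Ruling this out is the delicate step: one must arrange the off-diagonal potentials, and the tie-breaking both in the sort metric $\psi_{j\alpha}-\max_{v\neq\alpha}\psi_{jv}$ and in the ``best non-$\alpha$'' choice, so that for \emph{every} $\alpha$ the two unfixed vertices are driven onto a common value, collapsing the clique term instead of realizing the optimal coloring.

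I expect this to be the crux of the argument, and the instance to be genuinely pathological. Indeed, equality in the collapse bound forces each column sum to be exactly $4$, hence total potential mass exactly $12=\vnp(\vmap)$, which pushes essentially all mass onto the optimal assignment and tends to make the greedy recovery succeed; escaping this either needs adversarial tie-breaking on a carefully degenerate $\psi$, or a limiting family of instances whose \alphapass-to-optimum ratio tends to $\frac{13}{15}$ from above. Establishing that the budget constraints (i)--(ii) can be met while still defeating greedy recovery for all $(\alpha,k)$ is the part of the proof I would expect to require the most care.
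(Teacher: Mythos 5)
Your starting point---reading off the worst-case count vector $(1,1,1)$ from the equality analysis of Theorem~\ref{theorem:alphapass}, and asking for simultaneous tightness of the Claim~\ref{npclaim} bound $\F(\vapp)\ge\vnp(\vmap)+n_1^2$ and of the collapse bound $\F(\vapp)\ge\vnp(\vmap)/k+n^2$---is exactly the right instinct. But the concrete design you commit to (three vertices, three values, exact equalities (i) and (ii)) provably cannot work, and this is not a matter of delicate tie-breaking. Conditions (i) and (ii) force the total potential mass $\sum_v\sum_j\psi_{jv}=3\cdot 4=12$ to equal the vertex score of the distinct-valued optimum; since potentials are nonnegative, every entry of $\psi$ outside that optimal assignment must be $0$, i.e.\ $\psi$ is $4$ times a permutation matrix. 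On this instance \alphapass\ has no ties anywhere: in the pass for any value $\alpha$ with $k=1$, the sort metric strictly selects the vertex that owns $\alpha$, and the ``best non-$\alpha$'' completion strictly sends each remaining vertex to its own value, so \alphapass\ reconstructs $\vmap$ and scores $15$, giving ratio $1$, not $\frac{13}{15}$. The same degeneration argument shows that no fixed instance attains $\frac{13}{15}$ exactly; the theorem can only be established by a family whose ratio tends to $\frac{13}{15}$, which is precisely the escape route you mention at the end but do not construct.

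The missing device---and the heart of the paper's proof---is to scale the counts up to $(n/3,n/3,n/3)$ and, crucially, to enlarge the set of values: the paper takes $R=n+3$, $\lambda=1$, gives each of three blocks of $n/3$ vertices potential $4n/3$ on its group value ($1$, $2$, or $3$ respectively), and gives every vertex $u$ a private ``decoy'' value $u+3$, also with potential $4n/3$. The optimum assigns each block its group value, scoring $4n^2/3+3(n/3)^2=5n^2/3$. In the \alphapass\ pass for a group value $\alpha$, each vertex outside $\alpha$'s block has its best non-$\alpha$ value tied between its group value and its private decoy; with ties broken toward decoys (or after an $\epsilon$-perturbation of the decoy potentials), the completion step scatters the remaining $2n/3$ vertices over distinct values instead of reassembling the other two blocks, so the clique term collapses to $(n/3)^2+2n/3$ and the best assignment \alphapass\ ever produces scores $4n^2/3+n^2/9+2n/3=13n^2/9+2n/3$, whence the ratio tends to $\frac{13}{15}$. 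Note that both of your target equalities do hold asymptotically in this family ($\vnp(\vmap)+n_1^2=13n^2/9$ and $\vnp(\vmap)/3+n^2=13n^2/9$), so your reverse-engineering identified the right structure; what it cannot survive is holding $R$ fixed at $3$, because defeating the greedy completion requires one spare value per vertex.
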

\begin{proof}
We show an instance where this is obtained.  Let $R=n+3$ and
$\lambda=1$. For the first $n/3$ vertices let $\vnp_{u1}=4n/3$, for the
next $n/3$ vertices let $\vnp_{u2}=4n/3$, and for the remaining $n/3$ let
$\vnp_{u3}=4n/3$.  Also for all vertices let $\vnp_{u(u+3)}=4n/3$. All
other vertex potentials are zero.  The optimal solution is to assign the
first three values $n/3$ vertices each, yielding a score of
$4n^2/3+3(\frac{n}{3})^2=5n^2/3$.  The first \alphapass\ on value 1,
where initially a vertex $u$ is assigned its vertex optimal value $u+3$,
will assign the first $n/3$ vertices 1.  This keeps the sum of total vertex
potential unchanged at $4n^2/3$, the clique potential increased to
$n^2/9+2n/3$ and total score = $4n^2/3+n^2/9+2n/3=13n^2/9+2n/3$. No
subsequent iterations with any other value can improve this
score.  Thus, the score of \alphapass\ is $\frac{13}{15}$ of the
optimal in the limit $n\rightarrow \infty$.
\end{proof}

\subsubsection{\alphaexpand}
In general graphs, a popular method that provides the approximation
guarantee of 1/2 for the Potts model is the graph-cuts based $\alpha$ expansion
algorithm~\cite{boykov01Fast}.  We explore the behavior of this
algorithm for \Potts\ potentials.

%%SS
In this scheme, we start with any initial assignment --- for example, all
vertices are assigned the first value as suggested in ~\cite{boykov01Fast}. Next, for
each value $\alpha$ we perform an $\alpha$ expansion phase where we
switch the assignment of an optimal set of vertices to $\alpha$ from their
current value.  We repeat this until in a round over the $R$ values,
no vertices switch their assignment.

For graphs whose edge potentials form a metric, an optimal $\alpha$
expansion move is based on the use of the mincut
algorithm of \cite{boykov01Fast} which for the case of cliques can be
$O(n^3)$. 

%%SS -- We can bring back the DP given we have space.
We next show how to perform optimal $\alpha$ expansion
moves more efficiently for all kinds of \additive\ potentials.
\paragraph{An $\alpha$ expansion move}
Let $\vva$ be the assignment at the start of this move. For each value
$v \ne \alpha$ create a sorted list $S_v$ of vertices assigned $v$ in
$\vva$ in decreasing order of $\vnp_{i\alpha} - \vnp_{iv}$.  If in an
optimal move, we move $k_v$ vertices from $v$ to $\alpha$, then it is
clear that we need to pick the top $k_v$ vertices from $S_v$.  Let $r_i$
be the rank of a vertex $i$ in $S_v$.  Our remaining task is to decide
the optimal number $k_v$ to take from each $S_v$.  We find these using
dynamic programming.  Without log of generality assume $\alpha=R$ and
$1,\ldots, R-1$ are the $R-1$ values other than $\alpha$.

%$y_1,\ldots, y_{m-1}$ of the $m-1$ values other than $\alpha$.  Let
Let $D_j(k)$ denote the best score with $k$ vertices assigned values from
$1\ldots j$ switched
to $\alpha$.
We compute
%ss -- remove double subscript.
\begin{eqnarray*}
D_j(k) = \max_{l \le k, l \le
n_{j}(\vva)} D_{j-1}(k-l) + f_{j}(n_j(\vva)-l) \\+ \sum_{i':r_{i'}\le
l}\vnp_{i'\alpha} + \sum_{i':r_{i'} > l}\vnp_{i'j}
\end{eqnarray*}
From here we can calculate the optimal number of vertices to switch to
$\alpha$ as $\argmax_{k \le n-n_\alpha(\vva)}D_{R-1}(k) +
f_\alpha(k+n_\alpha(\vva))$.
\ignore{
However for cardinality-based clique potentials, an optimal $\alpha$ expansion move 
can be computed in sub-cubic time easily using dynamic programming. We omit the details 
due to lack of space. The DP essentially computes for all $y$, the number of vertices $k_y$ currently labeled 
$y$, to flip to $\alpha$.
}
%%RG \todo{Need to prove that the single sorting algorithm is optimal, in
%the meantime keep this DP version}.
\ignore{
Associate with each vertex $u$ a score $s(u)$ that is the sum of
$\theta_{v\alpha}$ for all vertices before it in $S_y$, $\theta_{vy}$ for
all vertices after it in $S_y$, and, $f_y(n_y(\vya)-r_u)$.  Merge the
$m-1$ lists into $S$ in decreasing order of $s(u)$.  When $f_y(n_y)$
is an increasing function of $n_y$ as in the Potts case with $\lambda
> 0$, the relative order of vertices in $S$ will be the same as in $S_y$.
Next, for each $k=1 to |S|$, switch the top $k$ vertices in $S$ to
$\alpha$ and choose the $k$ for which the overall score is highest.
}
\ignore{
    %%RG: Camera Ready
\begin{theorem}
The \alphaexpand\ algorithm provides no better  approximation guarantee
than \alphapass\ even for the special case of homogeneous Potts potential on clique.
\end{theorem}
\begin{proof}
Consider the instance used in Theorem~\ref{tight-instance}. Another
locally optimal solution after the first $\alpha$ move is to assign
all vertices value 1. This has the score of
$4n^2/9+n^2=13n^2/9$. Starting with this, the next $\alpha$ expansion
move with any of the other values, will yield no improvement. Thus for
this instance both algorithms achieve the same ratio of $\frac{13}{15}$ from
the optimal.  
\end{proof}
In fact, with a bad starting point \alphaexpand\ can be 4/5 times of
optimal, which is worse than the $\frac{13}{15}$ bound of \alphapass.
Here is an instance where this occurs. Let $m=3$ and $\lambda=1$. All
vertices have $\vnp_{u1}=n$, the first $n/2$ vertices have $\vnp_{u2}=2n$ and
the next n/2 vertices have $\vnp_{u3}=2n$. All other vertex potentials are
zero. If \alphaexpand\ starts with all vertices labeled 1, then the
solution $2n^2$ cannot be improved by switching to either value 2 or
3. The optimal solution is to assign the first half vertices value 2 and
the remaining value 3 at a score of $2n^2+n^2/2$. This gives a ratio
of 4/5.
}
\begin{theorem}
The \alphaexpand\ algorithm provides no better approximation guarantee
than 1/2 even for the special case of homogeneous Potts potential on cliques.
\end{theorem}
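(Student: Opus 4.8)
The plan is to exhibit a family of instances, indexed by a parameter $q$, on which \alphaexpand\ --- started, as prescribed, from the all-first-value assignment --- is immediately trapped in a poor local optimum, while a simple alternative assignment scores almost twice as much; letting $q\to\infty$ then drives $\F(\vapp)/\F(\vmap)$ down to $\frac12$. Since a single such family shows that no bound strictly better than $\frac12$ is provable, this establishes the theorem.

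For the construction I would take $R=|\range{\pr}|=q+1$ values and $n$ vertices (with $q\mid n$) split into $q$ equal groups of size $n/q$, and set $\lambda=1$. Value $1$ is a ``consensus'' value liked mildly by everyone: $\vnp_{i1}=2n/q$ for all $i$. Each group $j\in\{1,\dots,q\}$ strongly prefers its own private value $j+1$: every vertex $i$ in group $j$ has $\vnp_{i,j+1}=2n$, and all remaining vertex potentials are $0$. The all-value-$1$ assignment has score $\vnp(\vv)+\vcp(\vv)=n\cdot\tfrac{2n}{q}+n^2=\tfrac{2n^2}{q}+n^2$, whereas the ``split'' assignment that sends each group $j$ to its value $j+1$ has score $n\cdot 2n+q(n/q)^2=2n^2+\tfrac{n^2}{q}$. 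Hence $\F(\vmap)$ is at least the split score, and
\[
\frac{\F(\vapp)}{\F(\vmap)}\ \le\ \frac{\tfrac{2n^2}{q}+n^2}{2n^2+\tfrac{n^2}{q}}\ =\ \frac{1+2/q}{2+1/q}\ \xrightarrow{\,q\to\infty\,}\ \frac12 .
\]

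The crux --- and the step I expect to be the main obstacle --- is verifying that all-value-$1$ is genuinely an \alphaexpand\ fixed point, i.e.\ that on the very first round no expansion move on any value $\alpha$ strictly improves it. Since every vertex currently holds value $1$, an $\alpha=j{+}1$ move merely recolors some subset $S$ of the value-$1$ vertices to $j+1$. I would parametrize such a move by $k=|S|$: the \Potts\ clique term changes by exactly $k^2+(n-k)^2-n^2=-2k(n-k)$, and for fixed $k$ the best vertex gain recolors the $k$ vertices with largest $\vnp_{i,j+1}-\vnp_{i1}$ (the group-$j$ vertices first, each worth $2n-2n/q$, then ``foreign'' vertices, each worth $-2n/q$). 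Writing the net change $\Delta(k)$ as a function of $k$, it is piecewise quadratic with leading coefficient $+2$, hence convex on each piece, so its maximum over $k\in\{1,\dots,n\}$ is attained at an endpoint --- either $k=n/q$ (recolor exactly group $j$) or $k=n$ (recolor everyone into a fresh consensus). A direct calculation shows the chosen parameters make $\Delta(n/q)=0$ and $\Delta(n)=0$ (and $\Delta(1)<0$), so every move is non-improving and \alphaexpand\ halts at all-value-$1$. A vanishingly small increase of each $\vnp_{i1}$ makes the trap strict, ruling out tie-breaking escapes without affecting the limiting ratio.

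Finally, I would note that the instance uses only the homogeneous \Potts\ form $\vcp(\vv)=\lambda\sum_v n_v^2$, so the $\frac12$ barrier already bites in the clique \Potts\ setting; and that this is consistent with the sharper $\frac{13}{15}$ guarantee of Theorem~\ref{theorem:alphapass}, since the weakness here is a feature of the expansion dynamics and of the fixed all-first-value start, not of clique \Potts\ inference itself.
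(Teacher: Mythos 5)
Your proposal is correct and is essentially the paper's own proof: the instance family (consensus value worth $2n/q$ to everyone, $q$ groups of size $n/q$ each strongly preferring a private value worth $2n$, $\lambda=1$) is identical to the paper's construction with $k$ renamed to $q$, and the same two assignments give the same limiting ratio $\frac{1+2/q}{2+1/q}\rightarrow\frac12$. The only difference is that you explicitly verify the local optimality of the all-first-value assignment via the piecewise-convexity argument (and handle ties by perturbation), whereas the paper simply asserts it; this is a welcome filling-in of detail, not a different approach.
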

\begin{proof}
Consider an instance where $R=k+1$, and $\lambda=1$. Let $\vnp_{u1} = 2n/k$ for all $u$ 
and for $k$ disjoint groups of $n/k$ vertices each, let
$\vnp_{u,i+1} = 2n$ for the vertices in the $i^{th}$ group. All other vertex potentials are zero.
Consider the solution where every vertex is assigned value $1$. This assignment is locally optimal wrt
any \alphaexpand\ move, and its score is $n^2(1+2/k)$.
However, the exact solution assigns every vertex group its value, with a score $n^2(2+1/k)$ , 
thus giving a ratio of $1/2$ in the limit.
\end{proof}

%%SS -- dropped
%By starting on the \alphapass\ assignment, \alphaexpand\ can be ensured
%to be no worse.  Empirically, we rarely found \alphaexpand\ to improve
%the \alphapass\ score.
%Also, the running time of a single
%expansion step could in the worse case be $O(n^2m)$.
We next present a generalization of the \alphapass\ algorithm that
provides provably better guarantees while being faster than
\alphaexpand.

\subsubsection{Generalized \alphapass\ algorithm}
In \alphapass\, for each value $\alpha$, we go over each count $k$ and
find the best vertex score with $k$ vertices assigned value $\alpha$.  We
generalize this to go over all value combinations of size no more than
$q$, a parameter of the algorithm that is fixed based on the desired
approximation guarantee.  %We call this the \Palpha\ algorithm.

For each value subset $A\subseteq \range{\pr}$ of size no more than $q$, and for each count
$k$, maximize vertex potentials with $k$ vertices assigned a value from set
$A$.  For this, sort vertices in decreasing order of $\max_{\alpha \in A}
\vnp_{i\alpha} - \max _{v \not\in A} \vnp_{vuy}$, assign the top $k$
vertices their best value in $A$ and the remaining their best value not
in $A$.  The best solution over all $A,k$ with $|A|\le q$ is the final
assignment $\vapp$.

The complexity of this algorithm is $O(nR^q\log n)$. 
\ignore{Since $R$ is
typically smaller than $n$, this is a useful option for large
cliques.} % than what we would get with \alphaexpand. 
In practice, we can use heuristics to prune the number 
of value combinations. Further, we can make the following claims
about the quality of its output. \ignore{We skip the proof because it is quite involved.}
\begin{theorem}
\label{theorem:genalphapass}
 %$\F(\yapp) \ge \frac{4p}{4p+1}\F(\ymap)$.
 $\F(\vapp) \ge \frac{8}{9}\F(\vmap)$.
 \begin{proof}
 %We present a proof for the case $p=2$ for the sake of illustration. Let the optimal solution have 
 This bound is achieved if we run the algorithm with $q=2$. Let the optimal solution have 
 counts $n_1\geq n_2\geq\ldots\geq n_R$ and let its vertex potential be $\vnp^*$.
 For simplicity let $a = n_1/n$, $b = n_2/n$ and $c = \vnp^*/n^2$.
 Then $\F(\vmap)/n^2  \leq  c + a^2 + b(1-a)$, $\F(\vapp)/n^2 \geq c + a^2$  and $\F(\vapp)/n^2 \geq c + \frac{(a+b)^2}{2}$.

 \noindent{\bf{Case 1:}} $a^2 \geq \frac{(a+b)^2}{2}$.
 Then $\F(\vmap)-\F(\vapp) \leq bn^2(1-a)$. For a given value of $a$, this is maximized when $b$ is as large as possible.
 For Case 1 to hold, the largest possible value of $b$ is given by
 $a^2 = \frac{(a+b)^2}{2}$, which gives $b = a(\sqrt{2}-1)$.
 Therefore $\F(\vmap)-\F(\vapp) \leq \frac{n^2(\sqrt{2}-1)}{4} < \frac{n^2}{8}$, i.e.~$\F(\vapp) \geq \frac{8}{9}\F(\vmap)$.

 \noindent {\bf{Case 2:}} $a^2 \leq \frac{(a+b)^2}{2}$.
 This holds if $b \geq (\sqrt{2}-1)a$. Since $a+b \leq 1$, this is possible only if
 $a \leq 1/\sqrt{2}$.
 Now $\frac{\F(\vmap)-\F(\vapp)}{n^2} \leq  a^2 + b(1-a) - (a+b)^2/2 = \frac{a^2 - 4ab + 2b - b^2}{2}$.

 For a given $a$, this expression is quadratic in $b$ with a negative second derivative.
 This is maximized (by differentiating) for $b = 1 - 2a$.
 Since $b \leq a$, this value is possible only if $a \geq 1/3$.
 Similarly, for case 2 to hold with this value of $b$, we must have $a \leq \sqrt{2}-1$.
 Substituting this value of $b$, the difference in scores is $\frac{5a^2 - 4a + 1}{2}$. 

 Since this is quadratic with a positive second derivative, it is maximized when $a$ has 
 either the minimum or maximum possible value. For $a = 1/3$ this value is $1/9$,
 while for $a = \sqrt{2}-1$, it is $10-7\sqrt{2}$. In both cases, it is less than $1/8$.

     If $a \leq 1/3$ the maximum is achieved when $b = a$. In this case, the score difference
     is at most $(a - 2a^2)$ which is maximized for $a = 1/4$, where the value is $1/8$.
     (This is the worst case).

     For $\sqrt{2}-1 < a \leq 1/\sqrt{2}$, the maximum will occur for  $b = (\sqrt{2}-1)a$.
     Substituting this value for $b$, the score difference is $(\sqrt{2}-1)(a-a^2)$, which
     is maximized for $a = 1/2$, where its value is $(\sqrt{2}-1)/4  < 1/8$.
 \end{proof}
\end{theorem}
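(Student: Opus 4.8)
The plan is to analyze the generalized \alphapass\ algorithm at $q=2$ and certify its output through a small set of comparison inequalities against the optimum. First I would normalize: take $\lambda=1$, let $n_1\ge n_2\ge\cdots\ge n_R$ be the optimal counts with optimal vertex potential $\vnp^*$, and set $a=n_1/n$, $b=n_2/n$, $c=\vnp^*/n^2$. The feasible region is $0\le b\le a$ and $a+b\le 1$ with $c\ge 0$. Bounding the optimal clique term by $\sum_v n_v^2 = n_1^2+\sum_{v\ge 2}n_v^2\le n_1^2+n_2\sum_{v\ge2}n_v = n_1^2+n_2(n-n_1)$ gives the upper bound $\F(\vmap)/n^2\le c+a^2+b(1-a)$.

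Next I would produce two lower bounds on $\F(\vapp)$ from two candidate subsets the algorithm is guaranteed to examine. The singleton $A=\{1\}$ with $k=n_1$ achieves, by the obvious analogue of Claim~\ref{npclaim}, vertex score at least $\vnp^*$ and clique score at least $n_1^2$, so $\F(\vapp)/n^2\ge c+a^2$. The pair $A=\{1,2\}$ with $k=n_1+n_2$ again attains vertex score at least $\vnp^*$ (since $\vmap$ itself places exactly $k$ vertices on values in $A$), while any split of $k$ vertices over two values forces clique contribution at least $k^2/2$ by convexity, so $\F(\vapp)/n^2\ge c+(a+b)^2/2$. I would also record the trivial bound $\F(\vapp)\ge n^2$, obtained by assigning all vertices a single value.

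The reduction I want is $\F(\vmap)-\F(\vapp)\le n^2/8$: combined with $\F(\vapp)\ge n^2$ this yields $\F(\vmap)\le\F(\vapp)+\tfrac18 n^2\le\tfrac98\F(\vapp)$, i.e.~$\F(\vapp)\ge\tfrac89\F(\vmap)$, which is the claim. To bound the difference I would split on which lower bound is tighter. In Case~1, $a^2\ge(a+b)^2/2$ (equivalently $b\le(\sqrt2-1)a$), the singleton bound dominates and the normalized gap is at most $b(1-a)$; pushing $b$ to its extreme $(\sqrt2-1)a$ and maximizing $a(1-a)$ at $a=\tfrac12$ gives at most $(\sqrt2-1)/4<1/8$. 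In Case~2, $b\ge(\sqrt2-1)a$ (which forces $a\le1/\sqrt2$), the pair bound dominates and the gap is at most $a^2+b(1-a)-(a+b)^2/2$.

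The main obstacle is the Case~2 optimization, a genuine two-variable maximization over a polygon. My approach is to fix $a$ and treat the gap as a downward quadratic in $b$ with unconstrained vertex $b=1-2a$, then check feasibility against $b\le a$ and the Case~2 constraint $b\ge(\sqrt2-1)a$. This splits into three regimes: the interior vertex $b=1-2a$ is feasible only for $1/3\le a\le\sqrt2-1$, where substitution gives $(5a^2-4a+1)/2$, a convex function whose endpoint values $1/9$ and $10-7\sqrt2$ are both below $1/8$; for $a\le1/3$ the constrained maximum sits at $b=a$, giving $a-2a^2$, maximized at $a=1/4$ with value exactly $1/8$; and for $a>\sqrt2-1$ it sits at $b=(\sqrt2-1)a$, giving $(\sqrt2-1)(a-a^2)\le(\sqrt2-1)/4<1/8$. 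I expect the lone tight point $a=b=1/4$ to be exactly what forces the worst case, pinning the ratio at $8/9$.
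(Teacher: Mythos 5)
Your proposal is correct and follows essentially the same route as the paper's proof: the same normalization and three bounds ($\F(\vmap)/n^2 \le c+a^2+b(1-a)$, $\F(\vapp)/n^2 \ge c+a^2$, $\F(\vapp)/n^2 \ge c+(a+b)^2/2$), the same case split on $a^2$ versus $(a+b)^2/2$, and the same constrained quadratic optimization in $b$ with the same three regimes and worst case $a=b=1/4$. Your one genuine addition is making explicit what the paper leaves implicit — that the additive gap bound $\F(\vmap)-\F(\vapp)\le n^2/8$ must be combined with the trivial bound $\F(\vapp)\ge n^2$ (from assigning all vertices a single value) to yield the multiplicative $\frac{8}{9}$ guarantee.
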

% because it is quite involved.  
We believe that the bound for general $q$ is $\frac{4q}{4q+1}$.
This bound is not tight as for $q=1$ we have already shown that
the $\frac{4}{5}$ bound can be tightened to $\frac{13}{15}$.  With
$q=2$ we get a bound of $\frac{8}{9}$ which is better than
$\frac{13}{15}$.

\subsubsection*{Entropy potentials and the \alphapass\ algorithm}

As an aside, let us explore the behavior of \alphapass\ on another family of additive potentials ---
entropy potentials. Entropy potentials are of the form:
\begin{equation}
\vcp(\vv) = \lambda \sum_v n_v \log n_v, \mbox{ where }\lambda > 0
\end{equation}
The main reason \alphapass\ provides a good bound for Potts potentials is that it guarantees a
clique potential of at least $n_1^2$ where $n_1$ is the count of the most
dominant value in the optimal solution. 
The quadratic term compensates for possible sub-optimality of counts of other
values. If we had a
sub-quadratic term instead, say $n_1\log n_1$ for the entropy potentials, the same bound would not
have held. In fact the following theorem shows that for entropy potentials, even though \alphapass\
guarantees a clique potential of at least $n_1\log n_1$, that is not enough to
provide a good approximation ratio.

\begin{theorem}
\label{theorem:entropy}
\alphapass\ does not provide a bound better than $\frac{1}{2}$ for entropy potentials.
\begin{proof}
Consider a counter example where there are $R=n+\log n$ values. Divide the
values into two sets --- 
$A$ with $\log n$ values and $B$ with $n$ values. The vertex potentials are as follows: the vertices are
divided into $\log n$ chunks of size $n/\log n$ each. If the $j^{th}$ vertex lies in the $v^{th}$ chunk,
then let it have a vertex potential of $\log n$ with value $v$ in $A$ and a vertex potential of $\log n+\epsilon$
with the $j^{th}$ vertex in $B$. Let all other vertex potentials be zero. Also, let $\lambda=1$.

Consider the assignment which assigns the $v^{th}$ value in $A$ to the $v^{th}$ chunk. Its score is
$2n\log n - n\log\log n$. Now consider \alphapass, with $\alpha\in A$. Initially vertex $v$ will be
set to the $v^{th}$ value in $B$. The best assignment found by \alphapass\ will assign every vertex to
$\alpha$, for a total score of roughly $n+n\log n$. If $\alpha\in B$, then again
the best assignment
will assign everything to $\alpha$ for a total score of roughly $(n+1)\log n$.

Thus the bound is no better than $\frac{1}{2}$ as $n\rightarrow\infty$.
\end{proof}
\end{theorem}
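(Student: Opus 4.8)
The plan is to prove the claim by exhibiting an explicit family of instances (parameterized by $n$) on which the ratio $\F(\vapp)/\F(\vmap)$ tends to $\frac12$. The guiding intuition comes from the discussion preceding the theorem: for \Potts\ potentials \alphapass\ succeeds because Claim~\ref{cpclaim} guarantees a clique score of at least $f_\alpha(n_1)=n_1^2$, a \emph{quadratic} term that dominates the clique contributions $\sum_{v\ge 2} n_v^2$ of the remaining optimal counts. For entropy potentials the analogous guarantee is only $n_1\log n_1$, which is \emph{sub-quadratic}, so I will design an instance in which the optimum spreads its mass across many values---each contributing simultaneously to the vertex score and to the clique score---while any single-value concentration that \alphapass\ can reach recovers only one of these two contributions.

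The construction I would use introduces $R=n+\log n$ values partitioned into a balanced set $A$ of $\log n$ values and a decoy set $B$ of $n$ values. Partition the $n$ vertices into $\log n$ equal chunks of size $n/\log n$. A vertex in chunk $v$ is given vertex potential $\log n$ for the $v$-th value of $A$, and vertex potential $\log n+\epsilon$ for one private value of $B$ (a distinct $B$-value per vertex); all other entries of $\vnp$ are zero, and $\lambda=1$. The optimal assignment sends chunk $v$ to the $v$-th value of $A$: its vertex score is $n\log n$, and since each of the $\log n$ used values has count $n/\log n$, its clique score is $\log n\cdot\frac{n}{\log n}\log\frac{n}{\log n}=n(\log n-\log\log n)$, giving $\F(\vmap)=2n\log n-n\log\log n$.

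The heart of the proof is to check that no $(\alpha,k)$ iteration of \alphapass\ beats $\approx n\log n$. The role of the decoy set $B$ is precisely to sabotage the greedy step: because each vertex's locally best value is its private $B$-value (an $\epsilon$ edge over its $A$-value), \alphapass's best-non-$\alpha$ choices scatter the vertices it does not move into $\alpha$ across distinct singleton $B$-values, each of count $1$ and hence contributing $1\cdot\log 1=0$ to the clique score. Thus for $\alpha\in A$ the only way to accrue clique score is to pile vertices onto $\alpha$ itself; concentrating all $n$ vertices on $\alpha$ gives vertex score $n$ (only the $\alpha$-chunk contributes) plus clique score $n\log n$, i.e.\ $\approx n+n\log n$. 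For $\alpha\in B$ the symmetric calculation gives $\approx (n+1)\log n$. I would then rule out intermediate $k$: the vertex score lost by moving a non-$\alpha$-chunk vertex to $\alpha$ is $\log n+\epsilon$, while the marginal clique gain $(k{+}1)\log(k{+}1)-k\log k=O(\log n)$ is of the same order, so the total never exceeds $n\log n\,(1+o(1))$. Hence $\F(\vapp)=n\log n\,(1+o(1))$ and $\F(\vapp)/\F(\vmap)\to\frac12$.

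I expect the main obstacle to be exactly this last verification---ruling out \emph{all} intermediate cutoffs $k$ and all choices of $\alpha$ simultaneously, rather than only the two extreme assignments. The clean way to handle it is to bound the total \alphapass\ score as a sum of two sub-quadratic pieces (a vertex part capped at $\approx n\log n$ and a clique part that reaches $\approx n\log n$ only through concentration, which forfeits most of the vertex part) and to argue that the two caps cannot be attained together. The convex but sub-quadratic growth of $x\log x$ is what forces this trade-off, in sharp contrast to the quadratic $x^2$ that made the \Potts\ analysis of Theorem~\ref{theorem:alphapass} go through.
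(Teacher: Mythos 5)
Your proposal is correct and is essentially identical to the paper's own proof: the same construction with $R=n+\log n$ values split into $A$ ($\log n$ values) and $B$ ($n$ decoy values), the same chunk-based vertex potentials $\log n$ and $\log n+\epsilon$, and the same score computations ($2n\log n - n\log\log n$ for the optimum versus roughly $n+n\log n$ and $(n+1)\log n$ for the \alphapass\ passes). The only difference is that you explicitly rule out intermediate cutoffs $k$, a verification the paper leaves implicit; that is a welcome tightening but not a different approach.
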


Thus, \alphapass\ provides good approximations when the clique potential is
dominated by the most dominated value. We now look at \majority\ potentials,
which are linear in the counts $\{n_v\}_v$. Looking at
Theorem~\ref{theorem:entropy}, we expect that \alphapass\ will not have decent
approximation guarantees for \majority. This is indeed the case. We will prove
in Section~\ref{sec:majority} that neither \alphapass\ nor a natural
modification of \alphapass\ enjoy good approximation guarantees.

\subsection{Clique Inference for \Majority\ Potentials}
\label{sec:majority}

Recall that \Majority\ potentials are of the form $\vcp=f_a(\vn),\ a=\argmax_v\
n_v$. We consider 
linear majority potentials where $f_a(\vn) = \sum_{v}w_{av}n_{v}$. The matrix 
$W=\{w_{vv'}\}$ is not necessarily diagonally dominant or symmetric. 

We show that exact MAP for linear majority potentials can be found in
polynomial time.  We also present a modification to the \alphapass\
algorithm to serve as an efficient heuristic, but without approximation
guarantees. Then we present a Lagrangian relaxation based approximation, whose
runtime in practice is similar to \alphapass, but provides much better
solutions.
 
\subsubsection{Modified \alphapass\ algorithm}
In the case of linear majority potentials, we can incorporate the
clique term in the vertex potential, and this leads to the following
modifications to the \alphapass\ algorithm: (a) Sort the list for
$\alpha$ according to the modified metric
$\vnp_{i\alpha}+w_{\alpha\alpha}-\max_{v\ne\alpha}(\vnp_{iv}+w_{\alpha
v})$, and (b) While sweeping the list for $\alpha$, discard all
candidate solutions whose majority value is not $\alpha$.

However even after these modifications, \alphapass\ does not provide the same approximate guarantee
as for homogeneous Potts potentials, as we prove next. We denote a matrix $W$ as diagonally
dominant iff each of its diagonal entries are the largest in their corresponding rows.

\begin{theorem}
The modified \alphapass\ algorithm cannot have an approximation ratio better than $\frac{1}{2}$ on 
linear majority potentials with unconstrained $W$.
\begin{proof}
Consider the degenerate example where all vertex potentials are zero. Let $\beta$ and $\gamma$
be two fixed values and let the matrix $W$ be defined as follows: $w_{\beta\gamma}=M+\epsilon$, 
 $w_{\beta v}=M\ \forall v\neq \beta,\gamma$ and all the other entries in $W$ are zero.

In modified \alphapass, when $\alpha\neq\beta$, the assignment returned will have a zero score. When
$\alpha=\beta$, all vertices will prefer the value $\gamma$, so \alphapass\ will
have to assign exactly 
$n/2$ vertices as $\beta$ to make it the majority value, thus returning a score of $\frac{(M+\epsilon)n}{2}$. 
However, consider the assignment which assigns $n/R$ vertices to each value, with a score of $(R-1)Mn/R$.
Hence the approximation ratio cannot be better than $\frac{1}{2}$.
\end{proof}
\end{theorem}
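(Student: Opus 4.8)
This is a negative (lower-bound) result, so the plan is to construct an explicit family of instances, parameterized by the number of values $R=|\range{p}|$ and the clique size $n$, on which the ratio $\F(\vapp)/\F(\vmap)$ tends to $\frac{1}{2}$. The guiding idea is that modified \alphapass\ only ever inspects assignments in which some value $\alpha$ is the \emph{forced} majority (via the discarding rule in step (b)), so I would design the weight matrix $W$ so that every such majority-constrained assignment is crippled, while the unconstrained optimum can sidestep the crippling by spreading vertices evenly across all values.

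First I would set every vertex potential to zero, so that $\vnp(\vv)=0$ and the objective collapses to the majority term $\sum_v w_{\hat v v}\,n_v$ alone; this isolates exactly the phenomenon we want to exhibit and makes the sorting metric of step (a) degenerate (all vertices tie), so the algorithm's behaviour is governed purely by the counts. Next I would make only a single row of $W$ nonzero: fix a distinguished value $\beta$, set $w_{\beta v}=M$ for the values $v\neq\beta$ (bumping one partner $\gamma$ to $M+\epsilon$ just to break ties deterministically), put $w_{\beta\beta}=0$, and zero out every other row. Two consequences follow. Whenever modified \alphapass\ forces a majority $\alpha\neq\beta$, the entire row is zero, so the clique score of that assignment is $0$ and the iteration is useless. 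In the only productive iteration $\alpha=\beta$, the majority constraint forces strictly more than $n/2$ vertices onto $\beta$ itself, which carries zero reward, leaving fewer than $n/2$ vertices to collect the $M$-valued entries; hence $\F(\vapp)\approx(M+\epsilon)\tfrac{n}{2}$.

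Finally I would evaluate the true optimum on the same instance. Since a positive clique score still requires $\beta$ to be the majority, and the score then equals $M(n-n_\beta)+\epsilon\, n_\gamma$, the optimum wants $n_\beta$ as small as possible while $\beta$ remains a (tie-broken) majority --- which is achieved by the near-uniform split $n_v\approx n/R$, giving $\F(\vmap)\approx(R-1)M\tfrac{n}{R}$. Dividing and then letting $\epsilon\to 0$ and $R\to\infty$ drives the ratio to $\frac{R}{2(R-1)}\to\frac{1}{2}$.

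The step I expect to be the main obstacle is the careful bookkeeping at the two endpoints, particularly the interaction with the $\argmax$ tie-breaking. I must confirm that the $\alpha=\beta$ sweep genuinely cannot beat $\approx M\tfrac{n}{2}$ --- assigning fewer vertices to $\beta$ forfeits the majority and is discarded, while assigning more only wastes vertices on the zero-reward diagonal --- and, symmetrically, that the near-uniform optimum legitimately retains $\beta$ as its majority so that it actually banks the full $(R-1)M\tfrac{n}{R}$. Pinning these two quantities tightly against one another, rather than leaving a loose constant factor, is what makes the ratio converge to exactly $\frac{1}{2}$.
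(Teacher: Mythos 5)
Your proposal is correct and matches the paper's own proof essentially verbatim: the same construction (zero vertex potentials, a single nonzero row $w_{\beta\gamma}=M+\epsilon$, $w_{\beta v}=M$ for $v\neq\beta,\gamma$), the same observation that non-$\beta$ passes score zero while the $\alpha=\beta$ pass is pinned at roughly $(M+\epsilon)\frac{n}{2}$, and the same near-uniform optimal assignment scoring $(R-1)M\frac{n}{R}$, giving the ratio $\frac{R}{2(R-1)}\rightarrow\frac{1}{2}$. No gaps; your extra care about tie-breaking and the exact majority threshold only sharpens details the paper glosses over.
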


\begin{theorem}
The modified \alphapass\ algorithm cannot provide an approximation bound better than $\frac{2}{3}$
for linear \majority\ potentials even when $W$ is diagonally dominant and each of its rows have equal sums.
\begin{proof}
The proof is by counter example which is constructed as follows. Let
the set of values $\range{\pr}$ be
divided into two subsets, $A$ and $B$ with $k$ and $n-k$ values respectively,
where $k < n/2$. Let $\beta\in B$ be a 
fixed value. We define $W$ as:
\begin{equation*}
w_{vv'} = \left\{ \begin{array}{ll}
n-k & v,v'\in A \\
k+1 & v\in A, v'=\beta \\
k+1 & v,v'\in B \\
0 & otherwise\end{array} \right. 
\end{equation*}

Thus $W$ is diagonally dominant with all rows summing to $(n-k)(k+1)$.

The vertex potentials $\vnp$ are defined as follows. Divide the vertices into $k$ chunks of size $n/k$ each. 
For the $v^{th}$ value in $A$, each vertex in the $v^{th}$ chunk has a vertex potential of
$2(n-k)$. Further, $\vnp_{i\beta}=2(n-k)\ \forall i$. The remaining vertex potentials are zero.

The optimal solution is obtained by assigning the $v^{th}$ value in $A$ to the $v^{th}$ chunk,
with a total score of $\frac{n}{k}.2(n-k).k + (n-k).\frac{n}{k}.k = 3n(n-k)$. 

In \alphapass, consider the pass for $\alpha\in A$. Each vertex $i$ prefers $\beta$ because 
$\vnp_{i\beta}+w_{\alpha\beta} = 3(n-k)$ is the best across all values. Thus, the best \alphapass\ 
generated assignment with majority value $\alpha$ is one where we assign $n/2$
vertices $\alpha$, including the 
$n/k$ vertices that correspond to the chunk of $\alpha$. The vertex and clique potentials of this assignment are
$\frac{n}{k}.2(n-k)+\frac{n}{2}.2(n-k)$ and $\frac{n}{2}(n-k)+\frac{n}{2}(n-k)$, giving a total
score of $2n(n-k)+\frac{2n(n-k)}{k}$. This gives an approximation ratio of
$\frac{2}{3}(1+\frac{1}{k})$.
 
Now consider the pass when $\alpha\in B$. Each vertex $i$ again prefers $\beta$ because 
$\vnp_{i\beta}+w_{\alpha\beta}=2n-k+1$ is the best across all values. 
If $\alpha=\beta$, the best \alphapass\ assignment is one where all vertices are assigned $\beta$, 
giving a total cost of $n(2n-k+1)$. If $\alpha\neq\beta$, then to make $\alpha$ the majority value, 
\alphapass\ can only output an assignment with score less than $n(2n-k+1)$. In this case, the
approximation ratio is no better than $\frac{2n-k+1}{3(n-k)}$.

Setting $k=\sqrt{n}$ and $n\rightarrow\infty$, we get the desired result.
\end{proof}
\end{theorem}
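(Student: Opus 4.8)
The statement is a negative (impossibility) result, so the plan is to exhibit an explicit family of clique inference instances, indexed by the clique size $n$, on which modified \alphapass\ returns a solution worth at most $\frac23 + o(1)$ times the optimum, while the weight matrix $W$ stays diagonally dominant with (asymptotically) equal row sums. The design principle I would follow is to turn the two features that make modified \alphapass\ greedy --- the sort by $\vnp_{i\alpha}+w_{\alpha\alpha}-\max_{v\ne\alpha}(\vnp_{iv}+w_{\alpha v})$ and the rule that discards any sweep whose majority is not the target $\alpha$ --- against it. I want a single ``decoy'' value $\beta$ that, once the clique term is folded into the vertex scores, looks best to \emph{every} vertex for every target $\alpha$ I might try; this forces \alphapass\ to commit roughly $n/2$ vertices to $\alpha$ merely to make $\alpha$ the majority, wasting the reward that a spread-out assignment would collect.

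Concretely, I would split the value set $\range{\pr}$ into a small block $A$ of $k$ values and a large block $B$ containing $\beta$, and split the vertices into $k$ equal chunks, arranging the vertex potentials so that the $v$-th chunk strongly prefers the $v$-th value of $A$ while $\beta$ is uniformly attractive to all vertices. I would then choose $W$ so that co-locating two $A$-values is highly rewarded and co-locating two $B$-values is moderately rewarded, with a controlled $A$--$\beta$ coupling and zeros elsewhere. The intended optimal assignment is the balanced one --- send chunk $v$ to the $v$-th value of $A$, so every $A$-value receives $n/k$ vertices --- which collects both the full vertex reward and the large $A$-internal clique reward; a short computation should give an optimum of order $3n(n-k)$.

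With the instance fixed, I would analyze \alphapass\ in two cases. For a target $\alpha\in A$, the decoy $\beta$ wins the sort for (almost) every vertex, so keeping $\alpha$ as the majority costs about $n/2$ forced assignments and the best admissible sweep tops out near $2n(n-k)$; for a target $\alpha\in B$ (including $\alpha=\beta$ itself, where all vertices pile onto $\beta$), the same dilution caps the score below the same threshold. Taking the better of the two cases and then letting $k=\sqrt{n}$ and $n\to\infty$ should drive the ratio $\F(\vapp)/\F(\vmap)$ down to $\frac23$, since the $O(1/k)$ slack in each bound vanishes.

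The main obstacle I anticipate is meeting the two structural constraints on $W$ simultaneously: diagonal dominance is easy (allow ties and make each diagonal entry maximal in its row), but forcing \emph{equal row sums} rules out the cheap trick of making one value globally dominant, so the $A$-block entry, the $B$-block entry, and the $A$--$\beta$ coupling must be balanced against one another --- and with $k=\sqrt{n}$ the sums can be made equal only up to lower-order terms, so I must confirm that this residual mismatch does not leak into the leading-order ratio. The second delicate point is the case analysis itself: I must verify that \emph{no} choice of count in the sweep and \emph{no} choice of target value lets \alphapass\ recover more than $\tfrac23+o(1)$, paying particular attention to the $\alpha=\beta$ subcase, where the discard rule is vacuous and the algorithm is free to concentrate all mass on the decoy.
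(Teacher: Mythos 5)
Your proposal follows essentially the same route as the paper's proof: the same decoy value $\beta$ that dominates every vertex's sort once the clique term is folded in, the same partition of values into a small block $A$ (with vertices in $k$ chunks preferring their own $A$-value) and a large block $B$ containing $\beta$, the same balanced optimum of order $3n(n-k)$, the same two-case analysis over passes $\alpha\in A$ versus $\alpha\in B$ (with the all-$\beta$ assignment handling the $\alpha=\beta$ subcase), and the same scaling $k=\sqrt{n}$, $n\to\infty$. The residual row-sum mismatch you flag can in fact be removed exactly by choosing the $A$--$\beta$ coupling equal to the $A$-block entry $n-k$ (diagonal dominance survives since ties are allowed), which is what the paper's score computations implicitly use.
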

\ignore{
\begin{theorem}
The modified \alphapass\ algorithm cannot provide an approximation ratio better than \frac{1}{2}
even if $W$ is diagonally dominant, symmetric and has equal row sums.
\begin{proof}
Consider the example shown in Figure. There are $n$ nodes as well as values. 
The values are divided into two sets -- $A$ and $B$ containing 
$k$ and $n-k$ values respectively. Let $\beta$ be a fixed value in $B$. 
The nodes are divided into $k$ chunks, and nodes in the $i^{th}$
chunk have vertex potential of $n$ with the $i^{th}$ value in $A$. Further, 
$\phi_{u\beta}=n,\ \forall u$.

The optimal assignment assigns $i^{th}$ value in $A$ to the $i^{th}$ chunk, with a total score of 
$n^2+(n-k).k.\frac{n}{k} = 2n^2-nk$.

In \alphapass, if $\alpha\in B$ is the majority value, then the output assignment cannot have a score 
better than when where all vertices are assigned $\beta$ (i.e.~when $\beta$ is the majority value). 
In this case the score is bounded by $n^2+kn$. 

When $\alpha\in A$ is the majority value, then all nodes $u$ not belonging to the chunk
corresponding to $\alpha$ will prefer 
\end{proof}
\end{theorem}
}
\ignore{
\begin{enumerate}
\item When the matrix $W$ is unconstrained, modified \alphapass\ cannot have a bound more than
$1/2$.
\item If $\forall y,y'\ w_{yy}\geq w_{yy'},$ and $w_{yy'}=w_{y'y}$, then the bound cannot be better
than $3/4$.
\ignore{
\item If $\forall y,y'\ w_{yy}\geq w_{yy'},$ and all rows of $W$ have equal sums, then the bound is
at most $2/3$. }
\end{enumerate} }
%In addition, as we shall see in Section \ref{sec-expt}, the algorithm's performance depends on the
%sparsity of $W$.
However, in practice where the $W$ matrix is typically sparse, our
experiments in Section~\ref{sec:expt-clique} show that \alphapass\ performs
well and is significantly more efficient than the exact algorithm
described next.

%\subsubsection{Incremental combinatorial algorithm}
\subsubsection{Exact Algorithm}
%%SS -- use k to relate to notation used in alphapass.
Since \majority\ potentials are linear, we can pose the optimization problem in terms of Integer
Programs (IPs). 
Assume that we know the majority value $\alpha$. Then, the optimization problem corresponds to the
IP:
\begin{eqnarray}
\label{majority_ip1}
\max_{\vz}\sum_{i,v}(\vnp_{iv}+w_{\alpha v})z_{iv} \nonumber \\
\forall v:\ \sum_{i}z_{iv} \leq \sum_{i}z_{i\alpha}, \nonumber \\
\forall i:\ \sum_{v}z_{iv} = 1,\ z_{iv}\in \{0,1\}
\end{eqnarray}
We can solve $R$ such IPs by guessing various values as the majority value, and reporting the best
overall assignment as the output. However, Equation \ref{majority_ip1} cannot be relaxed to a linear
program. This can be easily shown by proving that the constraint matrix is not totally unimodular. 
Alternatively, here is a counter example: Consider a 3-node, 3-value graphical model with a zero $W$
matrix. Let the vertex potential vectors be $\vnp_0 = (1,4,0),\ \vnp_1=(4,0,4),\
\vnp_2=(3,4,0)$. While
solving for $\alpha=0$, the best IP assignment is $1,0,0$ with a score of $11$. However the LP
relaxation has the solution $\vz=(0,1,0;1,0,0;1/2,1/2,0)$ with a score of $11.5$.

This issue can be resolved by making the constraint matrix totally unimodular as follows. 
Guess the majority value $\alpha$, the count $k=n_\alpha$, and solve the following IP:
\begin{eqnarray}
\label{majority_ip2}
\max_{\vz}\sum_{i,v}(\vnp_{iv}+w_{\alpha v})z_{iv} \nonumber \\
\forall v\neq\alpha:\ \sum_{i}z_{iv} \leq k, \nonumber \\
\sum_{i}z_{i\alpha} =k, \nonumber \\
\forall i:\ \sum_{v}z_{iv} = 1,\ z_{iv}\in \{0,1\}
\end{eqnarray}
This IP solves the degree constrained bipartite matching problem, which can be solved exactly in polynomial time.
Indeed, it can be shown that the LP relaxation of this IP has an integral solution.
\begin{theorem}
The integer program in Equation \ref{majority_ip2} has a tight LP relaxation.
\begin{proof}
Denote the constraint matrix of program \ref{majority_ip2} by $A_{(m+n)\times mn}$, and let $A_1$
and $A_2$ denote its first $m-1$ and last $n+1$ rows respectively. The $n+1$
equality constraints can be converted into '$\leq$' constraints by adding negative slack variables.
For example, $s_i + \sum_v z_{iv} \leq 1$ and $s_\alpha + \sum_i z_{i\alpha} \leq k$. The variables are
now $(\vz,\vs)^T$, and the extra constraints are $\vs \leq \mathbf{0}$. The new constraint matrix of
this system (which has only inequality constraints) 
is given by $B = \left[\begin{array}{cc}A_1 & 0\\ A_2 & I\\ 0 & I \end{array}\right].$
The tightness of the LP relaxation follows if $B$ is totally unimodular. For that, it suffices to
prove that $A = \left[\begin{array}{cc}A_1\\A_2\end{array}\right]$ is totally unimodular. 
This is so because then $\left[\begin{array}{cc}A_1 & 0\\ A_2 & I\end{array}\right]$
would be totally unimodular, and by extension of the same argument, so would be $B$. The total
unimodularity of $A$ is proven as follows. 

Let $C$ be an arbitrary $t\times t$ sub-matrix of $A$. 
Our argument uses induction on $t$, with the base case $t=1$ being straightforward. Note that each
column in $A$ has exactly two 1's. Let $B_1$ denote the first $m$ rows and $B_2$ denote the
remaining $n$ rows of $A$.\\
{\bf{Case 1: }} $C$ has a column with all zeros. Then $det(C)=0$ and we are done. \\
{\bf{Case 2: }} $C$ lies totally inside either $B_1$ or $B_2$. Since there is only one non-zero entry in each
column of $B_1$ (or $B_2$), pick any column and $det(C)$ will be $\pm 1$ times the determinant of its 
$(t-1)\times (t-1)$ sub-matrix, depending on the column index. So using the induction hypothesis, we get $det(C)\in
\{0,\pm 1\}$. \\
{\bf{Case 3: }} $C$ spans rows in $B_1$ and $B_2$. Wlog assume that each column in $C$ has exactly
two 1's, otherwise we can apply the same argument as Case 1 or 2. 
Now, summing up the rows corresponding to $B_1$ and $B_2$ separately, we get 
$\forall j:\ \sum_{i\in rows(B_1)} c_{ij} = 1 = \sum_{i\in rows(B_2)} c_{ij}$. Hence the rows of $C$
are linearly dependent and so $det(C)=0$.
\end{proof}
\end{theorem}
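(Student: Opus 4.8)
The plan is to prove that the constraint matrix underlying Equation~\ref{majority_ip2} is totally unimodular (TU); since every right-hand side is an integer ($1$ for the instance rows and $k$ for the value rows), total unimodularity forces every vertex of the relaxed polytope to be integral, so the LP optimum coincides with the IP optimum and the relaxation is tight. This is the natural route because the program is exactly a degree-constrained bipartite matching: the $n$ instances each send one unit to some value, while each value $v\ne\alpha$ absorbs at most $k$ units and $\alpha$ absorbs exactly $k$. Matching polytopes on bipartite graphs are known to be integral, and TU is the standard certificate for this.

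First I would bring the system into pure inequality form. Following the slack device, the $n$ instance equalities $\sum_v z_{iv}=1$ and the count equality $\sum_i z_{i\alpha}=k$ become $s_i+\sum_v z_{iv}\le 1$ and $s_\alpha+\sum_i z_{i\alpha}\le k$, together with sign rows $\vs\le\mathbf{0}$. Writing the structural (variable) part as $A=\left[\begin{array}{c}A_1\\A_2\end{array}\right]$, where $A_1$ holds the $m-1$ inequality value-rows and $A_2$ the $n+1$ equality rows, the full matrix is $B=\left[\begin{array}{cc}A_1 & 0\\ A_2 & I\\ 0 & I\end{array}\right]$. Since bordering a TU matrix with identity blocks preserves total unimodularity, it suffices to prove that $A$ alone is TU; the slack columns and the $\vs\le\mathbf{0}$ rows then follow automatically.

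The decisive observation is structural. Regroup the $m+n$ rows of $A$ into a \emph{value-side} block of $m$ rows (the $m-1$ inequalities plus the single $\alpha$-count row) and an \emph{instance-side} block of $n$ rows (the instance-sum rows). Every column of $A$ corresponds to one variable $z_{iv}$ and carries exactly two $1$'s: one in the value-side row for $v$ and one in the instance-side row for $i$, with zeros elsewhere. This is precisely the incidence matrix of a bipartite graph whose two colour classes are the value-side and instance-side rows. I would then establish TU by induction on the order $t$ of an arbitrary square submatrix $C$, with the trivial base $t=1$. In the inductive step: if $C$ has an all-zero column then $\det C=0$; if $C$ has a column with a single $1$, Laplace-expand along it to reduce to a $(t-1)\times(t-1)$ minor and apply the hypothesis; and if every column of $C$ carries exactly two $1$'s, then because the two $1$'s always straddle the two colour classes, summing the value-side rows of $C$ and summing the instance-side rows of $C$ each yields the all-ones vector, so the rows of $C$ are linearly dependent and $\det C=0$. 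In every case $\det C\in\{0,\pm1\}$, so $A$, and hence $B$, is TU.

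I expect the main obstacle to be the bookkeeping around the equality constraints and slacks rather than the combinatorial core. One must verify that the reduction from $B$ to $A$ is legitimate, i.e.\ that appending the identity augmentations and the extra $\vs\le\mathbf{0}$ rows genuinely preserves TU, and, crucially, that the clean ``exactly two $1$'s, one per colour class'' property still holds for the core columns once the $\alpha$-count constraint is split off as its own value-side row. Once that bipartite incidence structure is confirmed exactly, the inductive determinant argument is routine, so the real care lies in arranging the block $B$ so that the two-ones-per-column picture is precise.
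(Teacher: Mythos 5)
Your proposal is correct and follows essentially the same route as the paper's own proof: converting the equalities via slack variables, reducing total unimodularity of the bordered matrix $B$ to that of the core matrix $A$, and then running the standard induction on square submatrices using the fact that each column has exactly two $1$'s straddling the value-side and instance-side row blocks, so that the all-two-ones case yields linearly dependent rows. The only cosmetic difference is that you phrase the middle case as Laplace expansion along a column with a single $1$ rather than the paper's ``$C$ lies entirely inside $B_1$ or $B_2$,'' which is an equivalent trichotomy.
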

%either through
%the relaxed LP or through a combinatorial matching algorithm in $O((m+n)^3)$ time. i
Thus we can solve $O(Rn)$ such problems by varying $\alpha$ and $k$, and report the best solution.
We believe that since the subproblems are related, it should be possible to solve them incrementally 
using combinatorial approaches.
\ignore{
Though polynomial, the above approach is still at least $O(mn^4)$, 
which is expensive for large cliques. As an alternative, 
we present an exact incremental approach that costs only $O(m^4n\log n)$.

The algorithm follows the same outline of solving $O(mn)$ subproblems, and reporting the best
result. However rather than solving an LP, it uses incremental computation to solve one subproblem 
after its predecessor. Let $a$ be
the current majority value. The first subproblem for value $a$ has $n_a=n$, which results in a
trivial solution. We now describe how to arrive at the solution for $n_a=k-1$, given the solution
for $n_a=k$. Let the degree of a value $y$ be defined as the number of nodes labeled $y$.

To compute the solution for $n_a=k-1$, we need to reduce the degrees of degree $k$ values by $1$,
and the new degree of any other value should not exceed $k-1$. We model this as a min cost flow
problem on a directed clique with $m$ vertices. A dummy source vertex is attached to the vertices
corresponding to all degree $k$ values, and a sink is attached to all the other vertices. All these
arcs have zero cost and unit capacity. For every ordered pair of values $y$ and $y'$, we
add $min(m,degree(y))$ parallel arcs, each of capacity one, from $y$ to $y'$. Sort the nodes
currently assigned to $y$ in descending order of $\np_{uy}+w_{ay}-\np_{uy'}-w_{ay'}$, and take the
top $min(m,degree(y))$ nodes. The $i^{th}$ arc has a cost equal to the sort metric of the $i^{th}$
entry in this list. These arcs provide a way to choose the least costly set of nodes to flip
from $y$ to $y'$.

In this clique, we compute the min cost flow of value equal to the number of values with degree $k$.
This flow can be computed using shortest path algorithms because there are no negative cycles. After
flow computation, the sort-orders can be efficiently updated by maintaining a heap for every value pair. 
One such iteration can
thus be done in $O(m^3\log n)$ time, leading to an overall runtime of $O(m^4n\log n)$.
}

\newcommand{\lam}{{\gamma}}

\subsubsection{Lagrangian Relaxation based Algorithm for \Majority\ Potentials}
\label{sec:algoLR}

 Solving the linear
system in Equation \ref{majority_ip2} is very expensive because we
need to solve $O(Rn)$ LPs, whereas the system in Equation \ref{majority_ip1}
cannot be solved exactly using a linear relaxation.  Here, we look at a
Lagrangian Relaxation based approach, where we solve the system in Equation
\ref{majority_ip1} but bypass the troublesome constraint $\forall v\neq\alpha:
\sum_i z_{iv}\leq \sum_i z_{i\alpha}$.

\ignore{ If the majority label $\alpha$ is known apriori, then the vertex
and clique potentials can be merged into a single term $\phi^\alpha_u(y)
= \phi_u(y) + w_{\alpha y}$. We can therefore compute a vertex optimal
assignment under these new vertex potentials, while maintaining the constraint
that the majority label in the assignment be $\alpha$. This is achieved
by solving the following integer program:

\begin{eqnarray} \label{eqn:maj_obj} \max_{\vx}
\sum_{u,y}(\np_{uy}+w_{\alpha y})x_{uy} \nonumber \\ \forall y: \sum_u
x_{uy} \leq \sum_u x_{u\alpha} \\ \forall u: \sum_y x_{uy} = 1,~~x_{uy}\in
\{0,1\} \end{eqnarray}

Since the majority label $\alpha$ is not known a priori, we can solve this
integer program for all $\alpha$ and report the best assignment overall.
However, there are two issues with this approach. First, this integer
program does not have a tight relaxation to a linear program. This is
can be shown by illustrating that the constraint matrix is not always
totally unimodular. Consequently, we cannot use LP solvers to optimize
this integer program.

Secondly, any assignment that optimizes the new vertex potentials
$\phi^\alpha$ has to ensure the majority of $\alpha$. This is analogous
to computing a degree-constrained matching in a bipartite graph,
which, although polynomially solvable, is still expensive to compute
\footnote{General algorithms to compute degree constrained matchings
have a runtime of $O((n+m)^{2.5}$.}.  }

We make use of the Lagrangian Relaxation technique to move the
troublesome majority constraint to the objective function. Any
violation of this constraint is penalized by a positive penalty
term. Consider the following modified program, also called the
Lagrangian: 
\begin{eqnarray} \label{eqn:maj_lagrangian} 
L(\vl) = L(\lam_1,\ldots,\lam_R) = \max_{\vz} \sum_{i,v}(\vnp_{iv}+w_{\alpha v})z_{iv} &+&
\sum_v\lam_v(\sum_i
z_{i\alpha} - \sum_i z_{iv} )\nonumber \\ \forall i: \sum_v z_{iv} =
1&,&~~z_{iv}\in \{0,1\} 
\end{eqnarray} 

For $\vl\geq {\bf 0}$, and feasible
assignments $\vz$, $L(\vl)$ is an upper bound for our objective in Equation
\ref{majority_ip1}. Thus, we compute the lowest such upper
bound: 
\begin{equation} \label{eqn:maj_lag_obj} 
L^* = \min_{\vl\geq{\bf 0}} L(\vl) 
\end{equation} 
%RG: Not sure if this still holds with the integrality of x
%By duality, $L^*$ will be equal to the optima
%of Equation \ref{eqn:maj_obj} \cite{Lagrange}.

Since the penalty term in Equation~\ref{eqn:maj_lagrangian} is linear
in $\vz$, we can merge it with the first term 
to get another set of modified vertex potentials: 
\begin{equation}
\label{eqn:mod_np} 
\vnp^\alpha_{iv} \triangleq \vnp_{iv} + w_{\alpha v} - \lam_v
+ \left\{ \begin{array}{ll} \sum_{v'}\lam_{v'} & v=\alpha
\\ 0 & v\neq\alpha \end{array}\right.  
\end{equation} 

Equation \ref{eqn:maj_lagrangian} can now be rewritten in terms of $\vnp^\alpha$,
with the only constraint that $\vz$ be a assignment:
\begin{eqnarray}
\max_{\vz} \sum_{i,v}\vnp^\alpha_{iv}z_{iv} \nonumber \\
\forall i:\sum_v z_{iv}=1,\ \ z_{iv}\in \{0,1\} 
\end{eqnarray} 

 Hence, $L(\vl)$ can
be computed by independently assigning each vertex $i$ to its best value
, viz.~$\argmax_v \vnp^\alpha_{iv}$.

We now focus on computing $L^*$. We use an iterative approach, beginning
with $\vl={\bf{0}}$, and carefully choose a new $\vl$ at each step to
get a non-increasing sequence of $L(\vl)$'s. We describe the method of
choosing a new $\vl$ later in this section, and instead outline sufficient
conditions for termination and detection of optimality.

\begin{theorem} 
\label{theorem:lr}
$\vz^*$ and $\vl^*$ are optimum solutions to Equations
\ref{majority_ip1} and \ref{eqn:maj_lag_obj} respectively if they
satisfy the conditions: 
\begin{eqnarray} 
\forall v: &&\sum_i z^*_{iv} \leq \sum_i z^*_{i\alpha} \label{eqn:violation1}\\ 
\forall v: &&|\lam^*_y(\sum_i z^*_{iv} -
\sum_i z^*_{i\alpha})| = 0 \label{eqn:violation2}
\end{eqnarray}
%where $x^*_{uy}=\indicate{y}{\argmax_{y'}\vnp^\alpha_{uy'}}$
\end{theorem}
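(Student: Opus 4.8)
The plan is to invoke the standard sufficient conditions for zero Lagrangian duality gap: I will show that the two stated conditions, together with the (standing) fact that $\vz^*$ attains the maximum defining $L(\vl^*)$, force the primal objective of $\vz^*$, the dual value $L(\vl^*)$, and the optimum $L^*$ all to coincide with the optimum of Equation~\ref{majority_ip1}. Write $\text{OPT}$ for that primal optimum. The first step is weak duality. For any multiplier vector $\vl\ge{\bf 0}$ and any $\vz$ feasible for Equation~\ref{majority_ip1} (an assignment with $\sum_i z_{iv}\le\sum_i z_{i\alpha}$ for every $v$), the penalty term $\sum_v\lam_v(\sum_i z_{i\alpha}-\sum_i z_{iv})$ is nonnegative, since each $\lam_v\ge 0$ multiplies a nonnegative slack. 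Hence the Lagrangian objective evaluated at such a $\vz$ is at least the primal objective at $\vz$; because $L(\vl)$ maximizes over \emph{all} assignments it dominates this value, so $L(\vl)\ge\text{OPT}$, and taking the infimum over $\vl\ge{\bf 0}$ gives $L^*\ge\text{OPT}$.

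Next I would use that $\vz^*$ is exactly the assignment produced by the relaxed subproblem at $\vl^*$, i.e.\ each vertex is set to $\argmax_v\vnp^\alpha_{iv}$ for the modified potentials of Equation~\ref{eqn:mod_np}. Thus $L(\vl^*)$ equals the full Lagrangian objective evaluated at $(\vz^*,\vl^*)$, namely $\sum_{i,v}(\vnp_{iv}+w_{\alpha v})z^*_{iv}+\sum_v\lam^*_v(\sum_i z^*_{i\alpha}-\sum_i z^*_{iv})$. Now I invoke the complementary-slackness condition~\ref{eqn:violation2}: since $\lam^*_v(\sum_i z^*_{iv}-\sum_i z^*_{i\alpha})=0$ for every $v$, the entire penalty sum vanishes, collapsing $L(\vl^*)$ to exactly the primal objective $\sum_{i,v}(\vnp_{iv}+w_{\alpha v})z^*_{iv}$ of $\vz^*$.

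The final step chains the inequalities. Feasibility condition~\ref{eqn:violation1} guarantees that $\vz^*$ is primal feasible, so its objective is at most $\text{OPT}$. Combining the pieces yields $\text{OPT}\le L^*\le L(\vl^*)=\sum_{i,v}(\vnp_{iv}+w_{\alpha v})z^*_{iv}\le\text{OPT}$, where the second inequality holds because $L^*$ is the infimum over $\vl\ge{\bf 0}$ and $\vl^*$ is one such point. The chain squeezes every term to equality, so $\vz^*$ attains $\text{OPT}$ (primal optimality) and $L(\vl^*)=L^*$ (dual optimality), which is the claim.

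I expect the only delicate points to be purely organizational: keeping the sign of the slack $\sum_i z_{i\alpha}-\sum_i z_{iv}$ straight so the penalty is seen to be nonnegative under feasibility, and stating explicitly the standing assumption that $\vz^*$ solves the inner maximization at $\vl^*$ (otherwise the collapse of $L(\vl^*)$ to the primal objective does not follow). A secondary subtlety worth flagging is that $\vl^*\ge{\bf 0}$ must hold for weak duality to apply, which is guaranteed since it lies in the feasible domain of Equation~\ref{eqn:maj_lag_obj}. Everything beyond this is routine weak duality plus complementary slackness.
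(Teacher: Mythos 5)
Your proof is correct, and it is worth noting that the paper itself states Theorem~\ref{theorem:lr} \emph{without} proof, treating it as the standard sufficient-optimality condition from Lagrangian relaxation theory; your weak-duality-plus-complementary-slackness argument is precisely the canonical derivation the paper implicitly relies on. The chain $\text{OPT}\le L^*\le L(\vl^*)=\sum_{i,v}(\vnp_{iv}+w_{\alpha v})z^*_{iv}\le\text{OPT}$ is assembled correctly: weak duality follows because the penalty $\sum_v\lam_v(\sum_i z_{i\alpha}-\sum_i z_{iv})$ is nonnegative at any primal-feasible $\vz$ when $\vl\ge{\bf 0}$ and because $L(\vl)$ maximizes over the larger (relaxed) feasible set; condition~\ref{eqn:violation2} kills the penalty at $(\vz^*,\vl^*)$; and condition~\ref{eqn:violation1} makes $\vz^*$ primal feasible, closing the loop. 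Your most valuable observation is the one you flag as ``organizational'': the theorem as literally stated omits the hypothesis that $\vz^*$ attains the inner maximum defining $L(\vl^*)$, and without that hypothesis the statement is actually false --- take $\vl^*={\bf 0}$ and $\vz^*$ any suboptimal feasible assignment, and both displayed conditions hold vacuously. The paper's surrounding algorithm (Algorithm~\ref{fig:lag_algo}) guarantees this hypothesis by construction, since $\vz$ is always produced by solving the relaxed subproblem, but making it explicit, as you do, is necessary for the proof to go through and sharpens the theorem statement itself.
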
 

Theorem \ref{theorem:lr} holds only for fractional $\vz^*$. To see how, consider
an example with 3 vertices and 2
values. Let $\vnp_{i0}+w_{\alpha 0} > \vnp_{i1}+w_{\alpha 1}$ for all $i$ and
$\alpha$. During Lagrangian relaxation with
$\alpha=1$, initially $\vl={\mathbf{0}}$ will cause all vertices to be assigned
value $0$, violating Equation 
\ref{eqn:violation1}. Since the count difference $\sum_i z_{i0} - \sum_i z_{i1} \in \{\pm 1,\pm
2,\pm 3\}$, any non-zero $\lam_0$ will violate Equation \ref{eqn:violation2}. 
Subsequent reduction of $\lam_0$ to zero will again cause the original violation of
Equation \ref{eqn:violation1}. Consequently,
one of Equations \ref{eqn:violation1} and \ref{eqn:violation2} 
will never be satisfied and the algorithm will oscillate.
\ignore{
In practice, the second condition is relaxed to $|\lam_v(\sum_u x^*_{uy} -
\sum_u x^*_{u\alpha})| \leq \epsilon$ to achieve convergence with
tolerable errors. However the integrality of $\vx$ can cause problems
with convergence. The conditions are achievable only by fractional
$\vx$, and it is possible that we may never satisfy these conditions, even
though our $L(\vl)$'s may be non-increasing. To tackle this, we stop
the execution of the algorithm after a sufficient number of iterations.
}

To tackle this, we relax Equation \ref{eqn:violation2} to $|\lam_v(\sum_i
z^*_{iv} - \sum_i z^*_{i\alpha})| \leq \epsilon$, where $\epsilon$ is a small fraction of an upper bound on
$\lam_v$, whose computation is illustrated later. This helps in reporting assignments that
respect the majority constraint in Equation \ref{eqn:violation1} and are close to the optimal.

The outline of the algorithm is described in Figure
\ref{fig:lag_algo}. We now discuss a few possible approaches to select a new
$\vl$ at every step.

\subsubsection*{Subgradient Optimization}
This approach can be used to change all components of $\vl$ in a single step. Subgradient
optimization generates a sequence of direction vectors $\{\vd^1,\vd^2,\ldots\}$ and positive step sizes 
$\{\eta_1,\eta_2,\ldots\}$. At the $k^{th}$ step, the $\vl$ vector is changed as:
\begin{equation}
\label{eqn:subgrad}
\lam_v \leftarrow \max(0,\lam_v + \eta_k\vd^k)
\end{equation}
In its simplest form, the direction $\vd^k$ is the violation vector $(\sum_i
z_{iv} - \sum_i
z_{i\alpha})_{v=1\ldots R}$. Thus, if a value $v$ has a count greater than $\alpha$, then
$\lam_v$ will be increased to take some vertices away from $v$. In practice though, $\vd^k$ is
usually a convex combination of the violation vector and the previous direction $\vd^{k-1}$. This
helps in avoiding oscillations of the kind where $\vl^{k+2}$ is very close to $\vl^k$, while
simultaneously moving closer to the optimum.

The subgradient optimization framework allows various ways to choose the step sizes. For example,
if we choose to set the direction using only the violation vector, then a
	sequence $\{\eta_1,\eta_2,\ldots\}$
of step sizes satisfying (i) $\lim_{k\rightarrow\infty}\eta_k = 0$ and 
(ii) $\sum_{k=0}^{\infty}\eta_k = \infty$ will ensure asymptotic convergence \cite{Guta03Subgradient}.
These are not the only set of sufficient conditions that
guarantee convergence. Practical implementations often compute $\eta_k$ using the current value of
$L(\vl)$, current violations, and a few user defined parameters.

However, during experimentation the degrees of freedom in choosing the step sizes posed a big
problem for us. Since a single step size is shared across all components of $\vl$, a large step size
moved everything to $\alpha$, while a small step size considerably slowed down convergence. Data
independent approaches to choosing step sizes also failed to converge in a reasonable number of
iterations. In general, subgradient optimization is known to require very careful tweaking of 
the step sizes across iterations in order to achieve meaningful convergence speeds
\cite{Guta03Subgradient}.
For this reason, we looked at alternate approaches to change $\vl$.

\subsubsection*{Golden Search based Coordinate Descent} If all components
of $\vl$ except one (say $\lam_v$), are kept fixed, then $L(\vl)$ is
a quasi-convex function of $\lam_v$.  Thus, it has a unique global
minima, which can be found using golden search, which is an efficient
line search method. We choose the value $v$, whose corresponding 
violation is the highest in magnitude.

Golden search requires lower and upper bounds on $\lam_v$ and evaluates
$L(\vl)$ at various $\vl$'s inside that interval. As before, $L(\vl)$
can be easily obtained by a computing an assignment which is vertex-optimal
wrt the $\vnp^\alpha$'s. We use the trivial lower bound of zero,
and estimate a good upper bound from the current solution state. If
currently $\sum_i z_{iv}\leq \sum_u z_{u\alpha}$, then $\lam_v$ (which is a penalty
parameter) can be decreased, and therefore the current value of
$\lam_v$ can serve as an upper bound.  On the other hand, if we start
increasing $\lam_v$, then one by one, the vertices currently assigned $v$
will switch to their next best values, and by a particular increased
value of $\lam_v$, all vertices assigned $v$ would have flipped. There is no 
need to increase $\lam_v$ beyond this point, so we
use this value of $\lam_v$ as our upper bound, which can thus
be summarized as : 
\begin{equation*} 
UB(v) = \max_{i:z_{iv}=1} \delta_i
\end{equation*} 
where (denoting the current second best value of $i$
by $\beta$), 
\begin{equation*} \delta_i =\left\{ \begin{array}{ll}
\vnp_{iv}+w_{\alpha v}-\vnp_{i\beta}-w_{\alpha\beta}+\lam_\beta
& \beta\neq\alpha \\ \frac{1}{2}(\vnp_{iv}+w_{\alpha
v}-\vnp_{i\alpha}-w_{\alpha\alpha}-\sum_{\beta\neq v}\lam_v) &
\beta=\alpha \end{array}\right.  \end{equation*}

In spite of its simplicity, Golden Search suffers from a few drawbacks
that come to light during experimentation. First, it always performs
$\theta(\log UB(v))$ evaluations of $L(\vl)$. This can drive up the
overall runtime of the algorithm. Second, a change in $\lam_v$
affects the modified vertex potentials for values $v$ and $\alpha$
(Equation \ref{eqn:mod_np}). Thus, a large change in $\lam_v$ may
flip many vertices to $\alpha$, causing a big change in the current assignment
$\vz$, and we may end up spending the next few iterations repairing these
changes. Third, line search methods zero-in on the optima by evaluating
the objective at various points and choosing a sub-interval accordingly. In
our case, due to the integrality of $\vz$, ties can happen at many places
in an interval. In such a scenario, arbitrary tie resolution may cause
a wrong sub-interval to be chosen for further consideration.

To tackle these issues, we use a more conservative coordinate descent
approach, which we describe next and also use in our experiments.

\subsubsection*{Conservative Coordinate Descent}
We can avoid a large number of flips in the current assignment if we replace our golden search method
with a more conservative one. Let $v$ be the worst violating value in the current iteration. We will first 
consider the case when its count exceeds that of $\alpha$, so that Equation \ref{eqn:violation1} does not hold. 

To decrease the count of $v$, we need to increase $\lam_v$. 
Let $i$ be a vertex currently assigned $v$ and let $\beta(i)$ be its second most
preferred value
under the vertex potentials $\vnp^\alpha_i$.
The vertex $j = \argmax_{i:z_{iv}=1} \vnp^\alpha_{i\beta(i)}-\vnp^\alpha_{iv}$ is the easiest to
flip. So we increase $\lam_v$ till the point when this difference becomes zero. The new value of 
$\lam_v$ is therefore given by:
\begin{equation}
\label{eqn:consCase1}
\lam_v = \min_{i:z_{iv}=1} \left\{\begin{array}{ll}
\Delta\vnp(i,v,\beta(i))+\lam_{\beta(i)} & \beta(i)\neq\alpha \\
\frac{1}{2}(\Delta\vnp(i,v,\alpha)-\sum_{v'\neq v}\lam_{v'}) & \beta(i)=\alpha
\end{array}\right.
\end{equation}
where $\Delta\vnp(i,v,v')$ denotes $\vnp_{iv}+w_{\alpha v}-\vnp_{iv'}-w_{\alpha
v'}$. It is possible
that by flipping vertex $j$, $\beta(j)$ now violates Equation \ref{eqn:violation1}. Further,
increasing $\lam_v$ also increases $\vnp^\alpha_{i\alpha}$, so some other
vertices that are not assigned $v$
may also move to $\alpha$. However since the change is conservative, we expect this behavior to be
limited. In our experiments, we found that this conservative scheme converges much faster than
golden section over a variety of data.

We now look at the case when Equation \ref{eqn:violation1} is satisfied by all
values but 
Equation \ref{eqn:violation2} is violated by some value $v$. In this
scenario, we need to decrease $\lam_v$ to decrease the magnitude of the violation. Here too, we
conservatively decrease $\lam_v$ barely enough to flip one vertex to $v$.
If $i$ is any vertex not assigned value $v$ and $\beta(i)$ is its current value, then the new value of
$\lam_v$ is given by:
\begin{equation}
\label{eqn:consCase2}
\lam_v = \max_{i:z_{iv}\neq 1} \left\{\begin{array}{ll}
\Delta\vnp(i,v,\beta(i))+\lam_{\beta(i)} & \beta(i)\neq\alpha \\
\frac{1}{2}(\Delta\vnp(i,v,\alpha)-\sum_{v'\neq v}\lam_{v'}) & \beta(i)=\alpha
\end{array}\right.
\end{equation}
Note that the arguments of Equations \ref{eqn:consCase1} and \ref{eqn:consCase2} are the same. In this case
too, in spite of a conservative move, more than one vertex marked $\alpha$ may
flip to some other value,
although at most one of them will be flipped to $v$. As before, the small magnitude of the change
restricts this behavior in practice.

\SetKw{Kw}{break}

\begin{algorithm}
\KwIn{$\vnp,W,\alpha$,maxIters,tolerance}
\KwOut{approximately best assignment $\vapp$}
$\vl\leftarrow {\mathbf{0}}$\;
iter $\leftarrow$ 0\;
$\vzh\leftarrow\mbox{Assignment with all vertices assigned }\alpha$\;
\While{iter $<$ maxIters} {
	Compute $L(\vl)\ \  \mbox{(Equation \ref{eqn:maj_lagrangian})}$, let $\vz$
	be the solution\;
	\If {$\F(\vz)>\F(\vzh)$} {
		$\vzh\leftarrow\vz$\;
	}
	$(v,\Delta)\leftarrow$ Worst violator and violation (Equations \ref{eqn:violation1} and \ref{eqn:violation2})\;
	\uIf {$\Delta\ <$ tolerance} {
		\mbox{We are done, $L^*=L(\vl)$}\;
		\Kw{break}\;
	} 
	\uElseIf {using subgradient optimization} {
		Compute the new direction $\vd^{iter}$ and step-size $\eta^{iter}$\;
		Modify $\vl$ using Equation \ref{eqn:subgrad}\;
	}
	\Else {
		Modify $\lam_v$ using golden search or conservative descent\;
	}
	iter $\leftarrow$ iter+1\;
}
Construct value assignment $\vapp$ from $\vzh$\;
\KwRet{$\vapp$}
\caption{Compute $L^*$}
\label{fig:lag_algo}
\end{algorithm}

\section{Applications and Experiments}
\label{sec:expts}
We present results of three different experiments. 

First, in Section~\ref{sec:expt-clique} we compare our clique inference algorithms against applicable
alternatives in the literature. We compare the algorithms on speed and accuracy
of the output assignments. For \Potts\ potentials, we show that
\alphapass\ is superior to the TRW-S and min-cut based algorithms. For
\majority\ potentials, we compare the modified \alphapass\ and Lagrangian
relaxation based algorithms against the exact LP-based approach and the iterated
conditional modes (ICM) algorithm. 

Second, in Section~\ref{sec-domain} we demonstrate the application of the
generalized collective framework on \app\ and show that using a good set of
properties can bring down the test error significantly. 
Finally, in Section~\ref{sec-word} we show that message passing on the cluster
graph is a more effective  way to perform inference compared to
alternatives such as ordinary belief propagation, and enjoys better convergence
speeds.

\subsection{Clique Inference Experiments}
\label{sec:expt-clique}

In this section, we compare our algorithms against sequential tree
re-weighted message passing (TRW-S) and graph-cut based inference
\ignore{\cite{boykov01Fast}} for clique potentials that are decomposable over clique edges; 
and with ICM when the clique potentials are not edge decomposable.  We
compare them on running time and quality of the MAP.  
Our experiments were performed on both synthetic and real data.
%\paragraph*{Datasets}
%ss don't like this sentence.
%We report our results on one synthetic and one real dataset.

%ss -- syy is not the notation we use in the rest of the paper.  Again make sure notations here agree with the earlier part of the paper.  Call these Potts model because that is the closest fit from sec 3.
\noindent{\textbf{Synthetic Dataset:}} We generated cliques with $100$ vertices
and $R=24$ values
by choosing vertex potentials at random from $[0,2]$ for all
values.
\ignore{
    %%RG: Camera Ready
    Different versions of the dataset were created by choosing
from various families of clique potentials. }
%%SS --- why put lambda range here when it is set separately in 4.1 and 4.2, drop.
A Potts version (\synpair)
was created by gradually varying $\lambda$ \ignore{in $[0,2]$}, and
generating $25$ cliques for every value of $\lambda$.  We also created
analogous \synentropy, \synms\ and \synmsq\ versions of the dataset by
choosing entropy, linear makespan ($\lambda\max_v n_v$) and square
makespan ($\lambda\max_v n_v^2$) clique potentials respectively. 

For \majority\ potentials we generated two kinds of datasets (parameterized by $\lambda$):
(a) \synmajd\ obtained by generating a random symmetric $W$ for each clique, 
where $W_{vv}=\lambda$ was the same for all $v$
and $W_{vv'}\in [0,2\lambda]\ (v\neq v')$, and (b) \synmajs\ 
%%SS -- totally unconstrained real number? no range?
from symmetric $W$ with $W_{ij}\in [0,2\lambda]$ for all $i,j$, roughly $70$\% of whose entries were zeroed.
%%SS
%These two datasets were chosen to study the effect of sparsity, which we expect to be 
%common in practice, on the performance of the algorithms.

Of these, only \synpair\ is decomposable over clique edges. 

\noindent{\textbf{CoNLL Dataset:}} The CoNLL 2003 dataset\footnote{\url{http://cnts.uia.ac.be/conll2003/ner/}} 
% \cite{sang03Introduction} 
is a popular choice for demonstrating the benefit of collective labeling in named entity recognition tasks.  
We used the BIOU encoding of the entities, that resulted in $20$ labels.
We took a subset of $1460$ records from the test set of CoNLL, 
\ignore{
and obtained a set of $2235$ cliques using the representation described in Section \ref{sec:app-ie}. From this set, we }
and selected all $233$ cliques of size $10$ and above. The median and largest clique sizes were $16$ and  $259$ respectively.
The vertex potentials of the cliques were set by a sequential Conditional Random Field trained on a
separate training set. We created a Potts version by setting $\lambda=0.9/n$, where $n$ is the clique size. Such a 
$\lambda$ allowed us to balance the vertex and clique potentials for each clique. A majority version
was also created by learning $W$ discriminatively in the training phase. \ignore{The $W$ thus learnt was 85\% sparse.}
%%ss -- a reviewer might ask, why 0.9? is this what gave highest accuracy?

%%ss drop alphaexpand, say Our algorithms...
% \paragraph*{Platform} 
All our algorithms were written in Java. We compared these with C++ implementations of 
the TRW-S\footnote{\url{http://www.adastral.ucl.ac.uk/~vladkolm/papers/TRW-S.html}},
%\cite{trwsImplementation} 
and graph-cut based expansion
algorithms\footnote{\url{http://vision.middlebury.edu/MRF/}}~\cite{boykov01Fast,Szeliski06Comparative,kolmogorov04What,boykov04Experiment}. 
%\cite{graphcutImplementation}. 
%A complete description of the graph-cut algorithms is available in
All experiments were performed on a Pentium-IV 
3.0 GHz machine with four processors and 2 GB of RAM. 

\begin{figure*}
\begin{center}
\subfigure[Comparison with TRW-S:
Synthetic]{\label{fig:trws1synth}\includegraphics[width=0.37\textwidth]{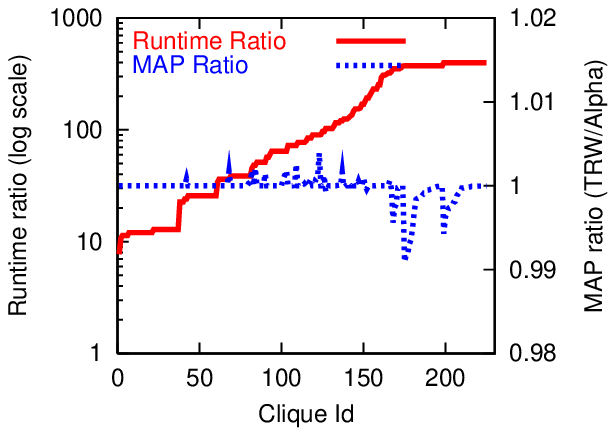}}
\subfigure[Comparison with TRW-S:
CoNLL]{\label{fig:trws1conll}\includegraphics[width=0.37\textwidth]{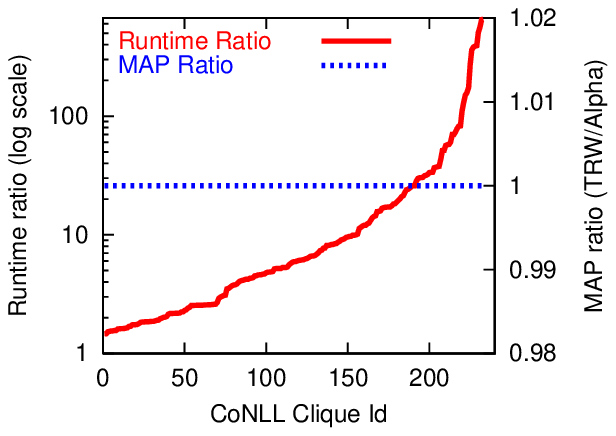}}
\subfigure[Comparison with GraphCut:
Synthetic]{\label{fig:mincut1synth}\includegraphics[width=0.37\textwidth]{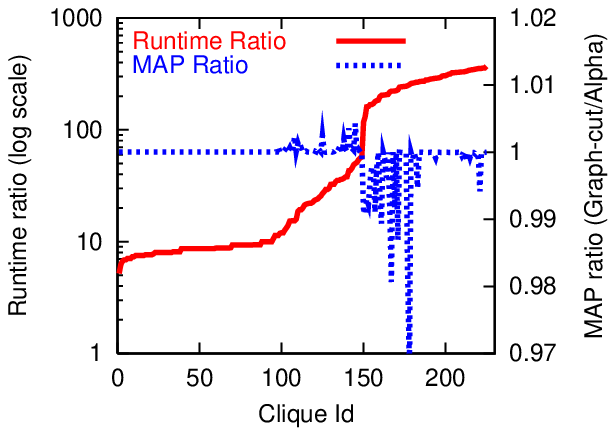}}
\subfigure[Comparison with ICM:
Synthetic]{\label{fig:icm}\includegraphics[width=0.37\textwidth]{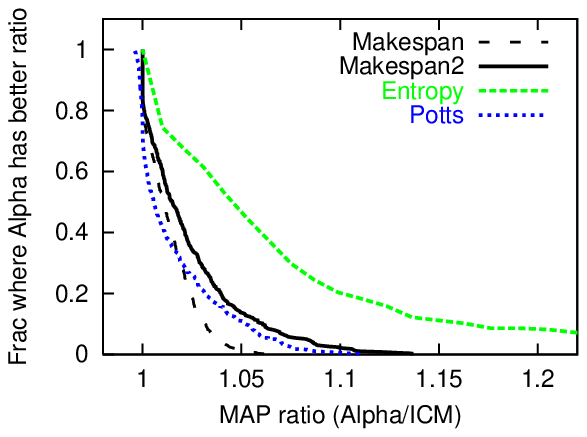}}
\caption{\label{fig:cmp}Comparison with TRW-S, Graph-cut and ICM}
\end{center}
\end{figure*}

\begin{figure*}
\begin{center}
\subfigure[\synmajd]{\label{fig:majdense}\includegraphics[width=0.37\textwidth]{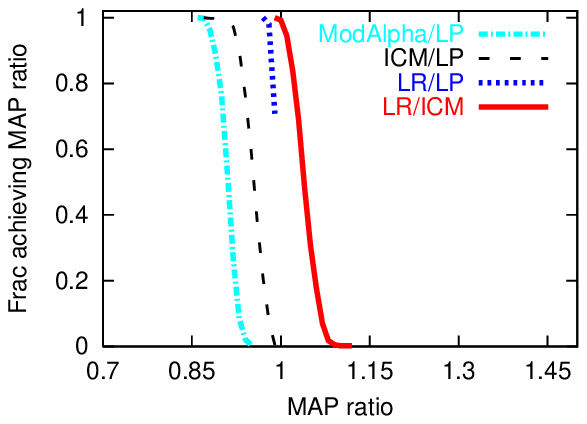}}
\subfigure[\synmajs]{\label{fig:majsparse}\includegraphics[width=0.37\textwidth]{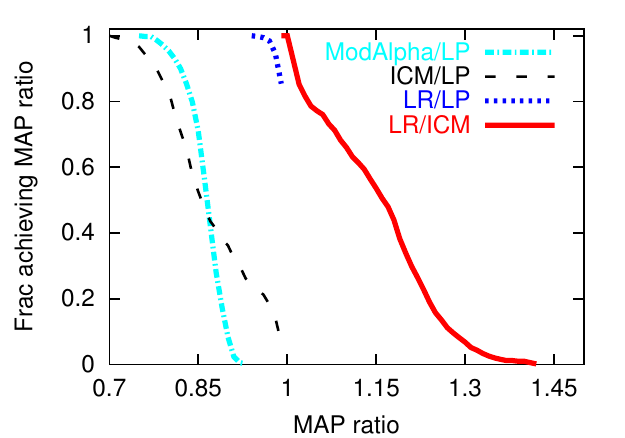}}
\subfigure[CoNLL]{\label{fig:majconll}\includegraphics[width=0.37\textwidth]{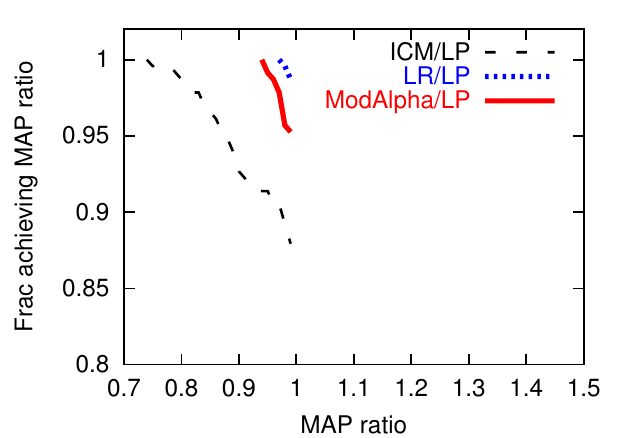}}
\subfigure[Time(CoNLL)]{\label{fig:majconllTime}\includegraphics[width=0.37\textwidth]{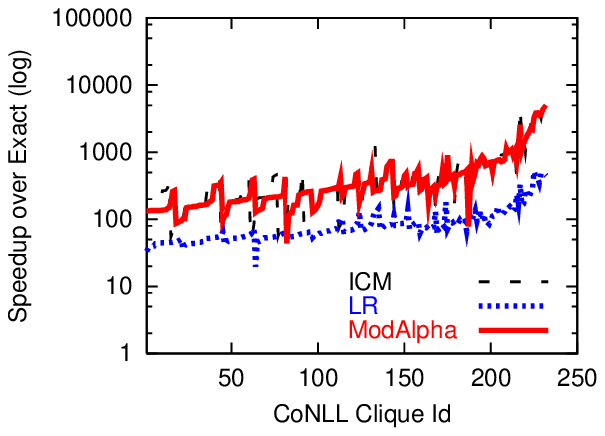}}
\caption{\label{fig:majcmp}Comparing AlphaPass, ICM, Lagrangian Relaxation and
Exact on \majority\ potentials}
\end{center}
\end{figure*}
%%ss always add a ``\" after latex newcommand vars to add a space.
\subsubsection{Edge decomposable potentials}
%%SS
%We compare \alphapass\ with TRW-S and graph-cut based expansion, for
% the edge decomposable \synpair\ potential.
%In our algorithm, we did not perform \alphaexpand\ after \alphapass, as we observed almost insignificant gains in practice.
%\alphaexpand\ after \alphapass, as we observed almost insignificant gains in practice.
\ignore{
TRW-S is the state-of-the-art algorithm for inference, and has been
proved to provide convergent upper bounds on the MAP.  Other inference
algorithms for associate pairwise potentials include Boykov et.~al.'s
algorithm \cite{boykov01Fast}. However, the algorithm does not fully
exploit the special nature of the Gini potentials, and its existing
version is cubic in the clique size.}
Figures \ref{fig:trws1synth} and \ref{fig:trws1conll} compare the 
performance of TRW-S vs \alphapass\ on the two datasets. In Figure 
\ref{fig:trws1synth}, we varied $\lambda$  uniformly  in $[0.8,1.2]$ with 
increments of $0.05$.
%%SS
%, resulting in $225$ cliques. 
This range of
$\lambda$ is of special interest, because it allows maximal contention
between the clique and vertex potentials. For $\lambda$ outside this
range, the MAP is almost always a trivial assignment, viz.~one which
individually assigns each vertex to its best value, or assigns all vertices
to a single value.

We compare two metrics --- (a) the quality of the MAP score, captured
by the ratio of the TRW-S MAP score with the \alphapass\ MAP score, and
(b) the runtime required to report that MAP, again as a ratio.  Figure
\ref{fig:trws1synth} shows that while both the approaches report
almost similar MAP scores, the TRW-S algorithm is more than $10$ times
slower in more than $80\%$ of the cases, and is never faster. This is
expected because each iteration of TRW-S costs $O(n^2)$, and multiple
iterations must be undertaken. In terms of absolute run times, a single
iteration of TRW-S took an average of $193$ms across all cliques in
\synpair, whereas our algorithm returned the MAP in $27.6\pm8.7$ms.
Similar behavior can be observed on CoNLL dataset in Figure
\ref{fig:trws1conll}.  Though the degradation is not as much as
before, mainly because of the smaller average clique size, 
%%SS --- put a brief reason -- smaller clique sizes?
TRW-S is more than $5$ times slower on more than half the
cliques.

Figure~\ref{fig:mincut1synth} shows the comparison with Graph-cut
based expansion.  The MAP ratio is even more in favor of
\alphapass, while the blowup in running time is of the same order 
of magnitude as TRW-S. This is surprising because based on the
experiments in~\cite{Szeliski06Comparative} we expected this method to
be faster.  One reason could be that their experiments were on grid
graphs whereas ours are on cliques.
%The average per-iteration running cost of graph-cut was around $220$ms.

%\subsection{Comparison with ICM}
\subsubsection{Non-decomposable potentials}
In this case, we cannot compare against the TRW-S or graph-cut based algorithms.
Hence we compare with the ICM algorithm that
has been popular in such scenarios~\cite{lu03Link}.  We varied
$\lambda$ with increments of $0.02$ in $[0.7,1.1)$ and generated $500$
cliques each from
\synpair, \synmajd, \synmajs, \synentropy, \synms\ and \synmsq. 
% The scores were compared via the ratio of the ICM MAP to the MAP reported by our 
% algorithm. % The iteration limit for ICM was set to $30$.
We measure the ratio of MAP score of \alphapass\ with ICM and for each ratio
$r$ we plot the fraction of cliques where \alphapass\ returns a MAP that
results in a ratio better than $r$.  Figure \ref{fig:icm} shows the
results on all the potentials except majority. The curves for linear and square
makespan lie totally to the right of $ratio=1$, which is expected
because \alphapass\ will always return the optimal answers for those
potentials. For Potts too, \alphapass\ is better than ICM for
almost all the cases. For entropy, \alphapass\ was found to be significantly better than 
ICM in all the cases.  The runtimes of ICM and \alphapass\ were similar.

\subsubsection*{Majority Potentials}

In Figures \ref{fig:majdense} and \ref{fig:majsparse}, we compare ICM, Lagrangian Relaxation
(LR) and modified \alphapass\ 
with the LP-based exact method on synthetic data. The dotted curves plot, 
for each MAP ratio $r$, the fraction of cliques on which ICM (or LR or modified \alphapass) returns a MAP score
better than $r$ times the true MAP. The solid curve plots the fraction of cliques where LR 
returns a MAP score better than $r$ times the ICM MAP. On \synmajd, both modified \alphapass\ and
ICM return a MAP score better than $0.85$ of the true MAP, with ICM being slightly better. However,
LR outperforms both of them, providing a MAP ratio always better than 0.97 and returning the true
MAP in more than 70\% of the cases.
In \synmajs\ too, \ignore{modified \alphapass\ and ICM dominate on roughly half the cliques each, but again,} LR
dominates the other two algorithms, returning the true MAP in more than 80\% of the cases, 
with a MAP ratio always better than $0.92$. The solid 
curve in Figure \ref{fig:majsparse} shows that on average, LR returns a MAP score $1.15$ times that
of ICM. Thus, LR performs much better than its competitors across dense as well as sparse majority potentials.

\ignore{
Figure \ref{fig:majconll} displays similar results on the CoNLL dataset, whose $W$ matrix is 85\% sparse. 
ICM performs poorly, returning the true MAP in only 10\% of the cases across all clique sizes, 
and achieving an average MAP ratio
of 0.68 against the exact method. On the other hand, \alphapass\ returns the true MAP in more than 
40\% of the cases, with an excellent average MAP ratio of 0.98, and almost always provides a
solution better than ICM.
}
The results on CoNLL dataset, whose $W$ matrix is 85\% sparse, are displayed in Figure \ref{fig:majconll}.
ICM, modified \alphapass\ and LR return the true MAP in 87\%, 95\% and 99\% of the cliques
respectively, with the worst case MAP ratio of LR being $0.97$ as opposed to $0.94$ and $0.74$ for
modified \alphapass\ and ICM respectively.

Figure \ref{fig:majconllTime} displays runtime ratios on all CoNLL cliques for all three inexact
algorithms. ICM and modified \alphapass\ are roughly $100$-$10000$ times faster than the 
exact algorithm, while LR is roughly twice as expensive as ICM and modified \alphapass.
\ignore{
Barring a few highly expensive outliers, and ignoring the dependence on $m$, it appears that the 
exact method is roughly $O(n)$ times slower than \alphapass\.} Thus, for practical majority 
potentials, LR and modified \alphapass\ seem to quickly provide highly accurate solutions.

From now on, while doing the top-level message passing on the cluster graph, we
shall use the
\alphapass\ and Lagrangian-relaxation based algorithms for computing messages
from a clique, in the presence of \Potts\ and \majority\ potentials
respectively.

%3JUL: this can be significantly reduced since we already have brought this up in intro and properties. Best is to reorganize to flow with what we have already said.  for example, citations in para 2 will go away.
\subsection{\appcaps}
\label{sec-domain}
We now move on the generalized collective inference framework, and show that a
good set of properties can help us in \app.
We focus on the bibliographic task, where the aim is to adapt a sequential model across widely varying publications pages of authors.
Our dataset consists of $433$ bibliographic entries from the web-pages
of $31$ authors, hand-labeled with $14$ labels such as Title, Author, Venue,
Location and Year.
Bibliographic entries across different authors differ in 
various aspects like label-ordering, missing labels, punctuation, HTML
formatting and bibliographic style.

A fraction of $31$ \domain s were used to train a baseline sequential model. 
The model was trained with the LARank algorithm of~\cite{bordes07}, using the BCE encoding 
for the labels. 
We used standard extraction features in a window around each token, along with label transition features~\cite{peng04}.

For our collective framework, we use the following decomposable properties: 
\begin{eqnarray*}
\pr_1(\vx,\vy) &=& \text{First non-Other label in }\vy \\
\pr_2(\vx,\vy) &=& \text{Token before the Title segment in }\vy \\
\pr_3(\vx,\vy) &=& \text{First non-Other label after Title in }\vy \\
\pr_4(\vx,\vy) &=& \text{First non-Other label after Venue in }\vy\\
\end{eqnarray*}
Inside a \domain, any
one of the above properties will predominantly favor one value, e.g.~$\pr_3$ 
might favor the value `Author' in one domain, and `Date' in another.
 Thus these properties encourage consistent labeling around the Title and Venue segments.
We use \Potts\ potential for each property, with $\lambda=1$.

Some of these properties, e.g.~$\pr_3$, operate on non-adjacent labels, and thus are not
Markovian. This can be easily rectified by making 'Other' an extension of its
predecessor label, e.g.~an 'Other' segment after 'Title'  can be
relabeled as 'After-Title'.

%3JUL: need details of baseline model: structlearner, C, BCE encoding, a line about feature set etc.

The performance results of the collective model with the above properties versus the
baseline model are presented in Table
\ref{tab:domain}. For the test \domain s, we report token-F1 of the important labels 
--- Title, Author and Venue. The accuracies are averaged
over five trials. The collective model leads to upto 25\% reduction in the test error for Venue
and Title, labels for which we had defined related properties. The gain is statistically significant (p $<$ 0.05).
 The improvement is more prominent when only a few \domain s are available for training. 
\begin{table}[h] 
\begin{center} 
\begin{tabular}{|c|cc|cc|cc|} \hline 
&\multicolumn{2}{|c|}{{\bf Title}}&\multicolumn{2}{|c|}{{\bf
Venue}}&\multicolumn{2}{|c|}{{\bf Author}}\\
Train \%&Base&CI&Base&CI&Base&CI\\
5&70.7&74.8&58.8&62.5&74.1&74.3\\
10&78.0&82.1&69.2&72.2&75.6&75.9\\
20&85.8&88.6&76.7&78.9&80.7&80.7\\
30&91.7&93.0&81.5&82.6&87.7&88.0\\
50&92.3&94.2&83.5&84.5&89.4&90.0\\\hline
\end{tabular} 
\end{center} 
\caption{ \label{tab:domain}
Token-F1 of the Collective and Base Models on the \appcaps\ Task}
\end{table} 
Figure~\ref{fig:perDomain} shows the error reduction on individual test \domain s  for one particular split
when five \domain s were used for training and $26$ for testing. The errors are computed from the
combined token F1 scores of Title, Venue and Author. For some \domain s \ignore{(like
5,17,24),} the errors are reduced by more than 50\%. Collective inference
increases errors in only two \domain s.

Finally, we mention that for this task, applying the classical collective inference setup with
cliques over repeated occurrences of words leads to very minor gains. In this
context, the generalized collective inference framework is indeed a 
much more accurate mechanism for joint labeling.

\begin{figure}
\begin{center}
\includegraphics[width=0.45\textwidth]{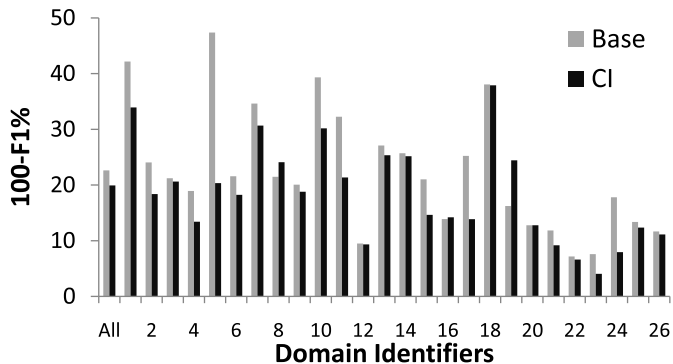}
\vspace*{0.2in}
\caption{\label{fig:perDomain}Per-\domain\ error for the base and collective inference (CI) model}
\end{center}
\end{figure}

\ignore{
\subsection{\label{sec-maj}Clique Inference for Majority Potentials}
We now study the Lagrangian Relaxation (LR) based algorithm 
for majority potentials in terms of approximation quality and runtime. We
compare it against the modified $\alpha$-pass (ModAlpha) and exact LP-based algorithms (LP)
presented in \cite{gupta07}, and the fast and generic Iterated
Conditional Modes (ICM) algorithm. We cannot compare LR with belief
propagation algorithms for pairwise models
because majority potentials cannot be decomposed as edge
potentials.

We obtained from the authors of~\cite{gupta07}
the set of $233$ cliques from the CoNLL'03 corpus\footnote{\url{http://cnts.uia.ac.be/conll2003/ner/}}
 used in their experiments. The cliques were defined 
by joining repeated occurrences of words. \ignore{The median and max clique sizes were $16$ and $259$.} \ignore{The majority potential parameters $W$
were learnt in the training phase. }
\begin{figure}
\begin{center}
\subfigure[MAP
Quality]{\label{fig:majconll}\includegraphics[width=0.32\textwidth]{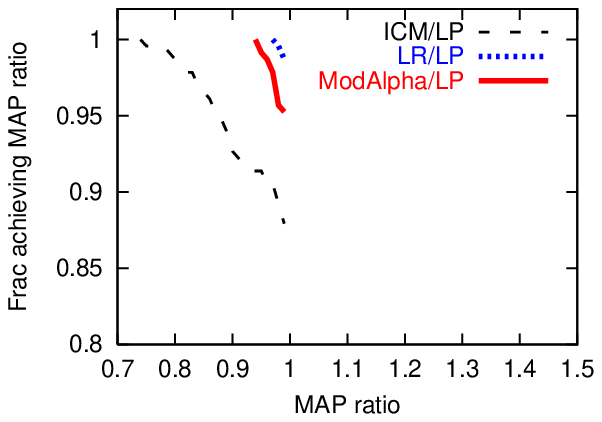}}
\subfigure[Time]{\label{fig:majconllTime}\includegraphics[width=0.32\textwidth]{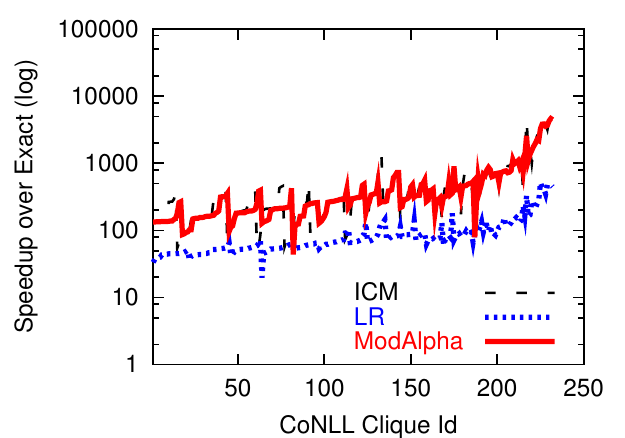}}
\caption{\label{fig:majcmp}Comparing ModAlpha, ICM, LR and Exact on
majority potentials for CoNLL.}
\end{center}
\end{figure}
Figure \ref{fig:majconll} compares the score of the MAP returned by ModAlpha
, LR and ICM with the exact MAP returned by LP. For an approximation ratio $r$, we plot the fraction of
cliques for which an algorithm achieves that ratio.
ICM, ModAlpha and LR return the true MAP for 87\%, 95\% and 99\% of the cliques
, with the worst case ratio of LR being 0.97 as opposed to 0.94 and 0.74 for
ModAlpha and ICM.

In Figure \ref{fig:majconllTime}, we plot the speedup of all the algorithms
over the exact algorithm for all the cliques. Though LR is 2-3 times
slower than ICM and ModAlpha, it is still two orders of magnitude faster
than the exact algorithm on average. 

Thus, the LR technique can efficiently compute high quality 
approximate clique MAPs in practice.
}

\subsection{\label{sec-word}Collective Labeling of Repeated Words}
%%todo -- format better, nested cols: Gini and Majority, two rows (token/span) so as to fit in a column.
We now establish that even for simple collective inference setups without any
multi-clique properties, message passing on the cluster graph (abbreviated as CI) is a better option.
We consider information extraction over text records, and define cliques over multiple occurrences of words.
\ignore{
In this experiment, we establish that \mpc\ is a more
viable approach than other alternatives. We perform collective inference with
cliques on repeated occurrences of words. } We create two versions of the
experiment --- with \Potts\ and \Majority\ potentials on the cliques respectively.

Since the \Potts\ potential is decomposable over the clique edges, we compare
\mpc\ 
\ignore{with ordinary belief propagation which is applicable for
pairwise models. We compare} against the TRW-S algorithm of 
\cite{kolmogorov04Convergent} which is the state of the art
algorithm for belief propagation. 
\ignore{\Majority\ potentials, on the other hand, cannot be decomposed over the edges,
and so cannot be treated using the TRW-S algorithm. So
instead} We compare the \Majority\ potential version against the stacking approach of
\cite{krishnan06:effective}.

We report results on three datasets --- the Address dataset consisting of
roughly 400 non-US postal addresses, the Cora dataset~\cite{cora} containing
500 bibliographic records, and the CoNLL'03 dataset. The training splits
were 30\%, 10\% and 100\% respectively for the three datasets,
and the parameter $\lambda$ for \Potts\ was set to 0.2,1 and 0.05.
The \Majority\ parameter $W$ was learnt generatively through 
label co-occurrence statistics in cliques seen in the training data.

Table~\ref{tab:word} reports the combined token-F1 over all labels except 'Other'.\ignore{ All the approaches post
only modest gains over the base model. However,} Unless specified, all the approaches post statistically significant gains over the base model.
For \Majority\ potentials, \mpc\ 
is superior to the stacking based approach. For the
\Potts\ version, there is no clear winner as TRW-S achieves F1 slightly
better or close to those for \mpc. But collective inference with \Majority\
potentials is more accurate than with \Potts.

Exploring \Potts\ potentials further, 
we present Figure~\ref{fig:iters}, where we plot
the accuracy of the two methods versus the
number of iterations. \mpc\ achieves its best accuracy after just
one round, whereas TRW-S takes around 20 iterations. In terms of clock time, an iteration of TRW-S cost $\sim$ 3.2s for CORA, and 
that of \mpc\ cost 3s, so \mpc\ is roughly an order of magnitude faster than TRW-S for the same accuracy levels. The comparison was 
similar for the Address dataset.

Hence the \mpc\ approach is applicable for all symmetric potentials, and
exploits their form to get higher accuracies faster than other
methods.
\begin{table}[h] 
\begin{center} 
\begin{tabular}{|l|l|c|c|c|}
\hline 
{\bf Potential} & {\bf Model} &	{\bf Address} & {\bf Cora} & {\bf CoNLL} \\
& Base & 81.5 & 88.9 & 87.0 \\ \hline
\Potts & CI & 81.9 & 89.7 & 88.8\\
 & TRW-S& 81.9 & 89.7 & - \\ \hline
\Majority & CI  & 82.2 & 89.6 & 88.8 \\
& Stacking & 81.7$^*$ & 87.5$\downarrow$ & 87.8 \\
\ignore{
& Base & 77.9 & 86.2 & 87.0 \\ \hline
\Potts & CI &78.5 & 87.3 & 88.8\\
 & TRW-S& 78.9 & 87.0 & TODO \\ \hline
\Majority & CI  & 78.6 & 87.3 & 88.8 \\
& Stacking & 78.2 & 85.9 & TODO \\}
\hline
\ignore{
Data &	StructSVM&	CI\_Potts&	CI\_Maj	&        CI\_Potts\_trws	&Stacked\_Maj \\
IITB	&77.9/69.0	&78.5/69.4	&78.6/69.7&	78.9/69.7	 &       78.1/69.5 \\
Cora&	86.2/71.4	&87.3/72.7	&87.3/73.2&	86.6/72.2&	85.9/71.3 \\
CONLL03&80.0/75.7&	80.0/75.7	&81.1/77.3	&--    &          80.2/76.2 \\
}
\end{tabular} 
\end{center} 
\caption{  
\label{tab:word}
 Token F1 scores of various approaches on collective labeling with repeated
words. Results averaged over five trials for Address and Cora. A '*' denotes statistically insignificant difference (p$>$0.05), $\downarrow$ means statistically significant loss.} 
\end{table} 
\begin{figure}
\begin{center}
\includegraphics[width=0.45\textwidth]{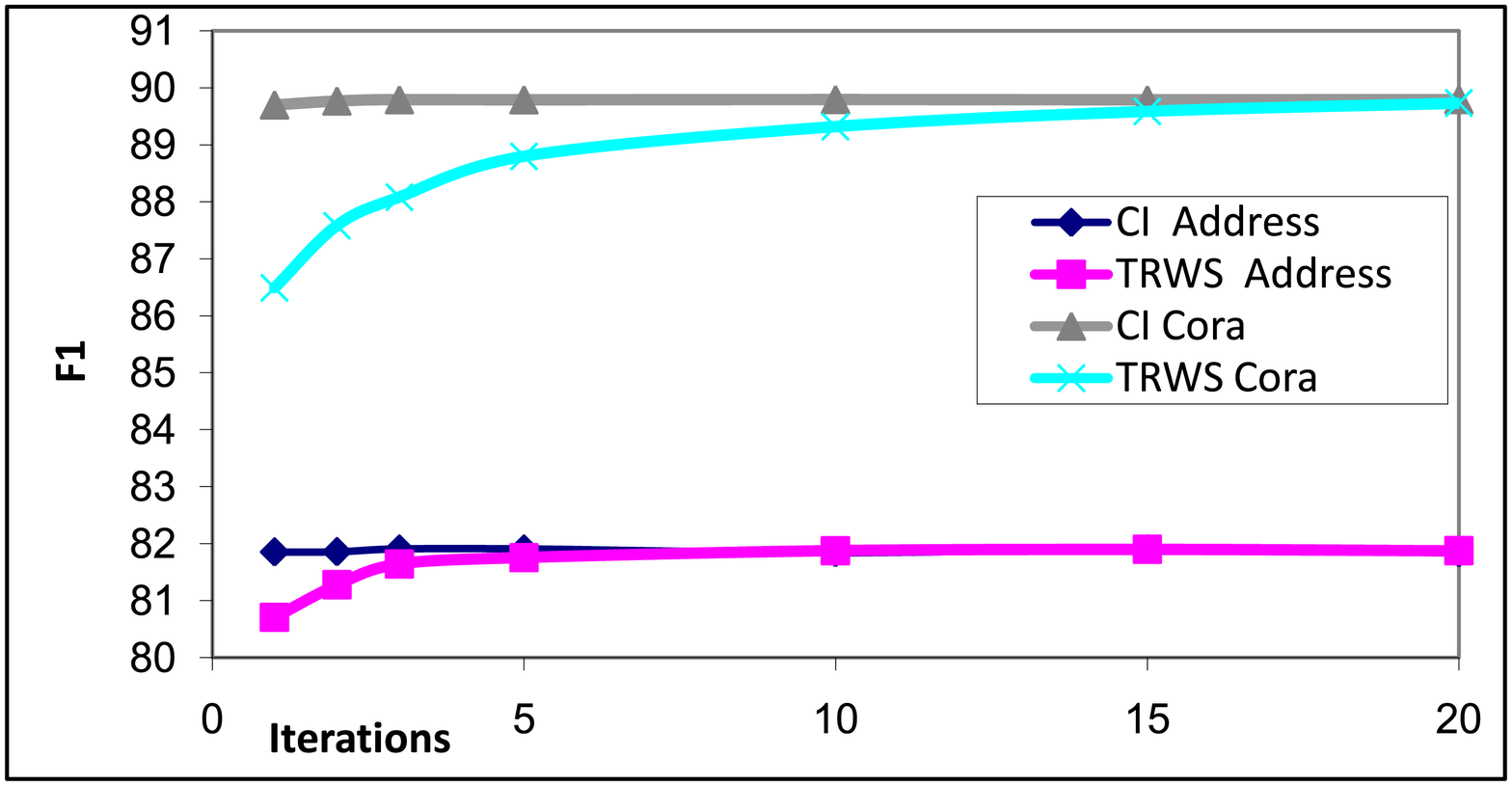}
\caption{\label{fig:iters}Accuracy with iterations for CI vs TRW-S on Cora and
Address.}
\end{center}
\end{figure}
\ignore{
Our experiments are in three parts.  

First in Section~\ref{sec-maj} we compare the accuracy and running
time of our algorithm for MAP labeling with majority potentials with
existing methods.  

Next, in Section~\ref{sec-word} we concentrate on the first
application of labeling repeated words within a document with the same
label.

Finally, in Section~\ref{sec-domain} we illustrate

\subsection{\label{sec-maj}Algorithm for Majority clique inference}

% compare with RL and modified alpha pass on synthetic data/ConLL cliques (Fig 2 from APS report).

\subsection{\label{sec-word}Collective labeling of repeated Words}

% compare with stacking and TRW-S

% data: conll, cora, iitb

% perturbed data?

% potentials: gini and majority

% output: token/span F1 aggregate and over repeated words..

\subsection{Domain adaptation}

% compare with scoped learning: todo?

% data: localGlobalDataset

% potential gini.

}

\section{Conclusions and Future Work}
\label{sec:future}

We proposed a generalized collective inference framework based on
decomposable properties and symmetric potential functions to maintain
conformity in the labeling of multiple MRFs.
%
%framework exploits property conformity across instances to assign
%labelings that are more accurate than the MAP labelings from a base
%model.  
%
We perform joint MAP inference using a cluster graph that
defines special separator variables based on property values.  
The messages inside MRF clusters were modified to make them property-aware.
Special combinatorial algorithms were used at the property cliques to compute 
outgoing messages.

We demonstrated the effectiveness of the framework by applying it on a
\app\ task with a rich set of properties.  We also established
that message passing on the cluster graph is an effective solution vis
a vis cluster-oblivious approaches based on ordinary belief
propagation.

Algorithmically, we presented potential-specific combinatorial algorithms 
for inference in a clique. We gave a Lagrangian relaxation method for generating 
messages from a clique with majority potential.  This algorithm is two
orders of magnitude faster than an exact algorithm and more accurate
than other approximate approaches. We also presented the \alphapass\ algorithm
for Potts potentials, which enjoys a tight approximation guarantee of
$\frac{13}{15}$. This algorithm is sub-quadratic in the clique size. We showed
that \alphapass\ is faster and more accurate than alternatives such as TRW-S and
graph-cuts.

\subsection*{Future directions}
\ignore{
We are interested in answering various questions about collective
inference. First, we wish to understand {\em when} should we apply collective
inference. For collective inference to be useful, we need a substantial amount
of associativity present in the data, along with a limited but not highly
inaccurate baseline sequential model. When associativity is absent or is of a
different nature (e.g.~dis-associativity), collective inference might actually
degrade the performance of the base model due to error propagation. Thus, we
would like to characterize unlabeled data as suited for collective inference or
not. In addition, we would ideally like to provide lower bounds on accuracy gains
with collective inference.
}
We wish to automate the selection of important decomposable associative properties
Another issue is the domain-adaptive training of the property parameters
(e.g.~$\lambda$ for Potts). Joint training
of these parameters with the baseline model would require expensive calls to the
collective inference algorithm at each step, so a cheaper alternative has to be investigated.

Next, our property clusters are presently defined as cliques with symmetric
potentials, which have limited expressive power
So we are interested in looking at dense
weighted subgraphs instead of cliques, thus modeling that not all 
vertex-pairs have equal associativity. 

Finally, we wish to seek more applications for collective inference, and
deploy collective inference on a large scale. Although our cluster message
passing based solution is distributed and inherently  parallelizable, the
clique participants might lie on different physical machines. This, and some
other interesting scaling issues will crop up as we try to run collective
inference on a web scale.
%% \ignored for APS. can reinstate it for the journal version
\ignore{
As future work, we wish to focus on more complex clique inference scenarios. In
particular we would like to consider dense subgraphs instead of cliques, and
partially homogeneous clique potentials instead of the completely symmetric
potentials discussed in this paper. 

From the framework's point of view, we also wish to target the question of "when
should we apply the machinery of Collective Inference?". Collective inference is
expensive, and it would be nice to have offline data-dependent bounds on the
gains (if any) if we apply collective inference. We are also interested in
automatically bootstrapping and selecting a relevant set of properties for an
unseen domain. Finally, we also plan to study the issues in large scale
deployment of collective inference.
}
%% We proposed message passing on the cluster graph as a solution to the
%% collective inference problem, where we presented property-aware
%% message update rules.  

\ignore{ 
Future work includes exploring alternative types of properties, and
deploying them on other \app\ and related tasks.}

%\bibliography{../pubs,../mining2,../mining,../more} 

\end{document}